\let\underbrace\LaTeXunderbrace
\theoremstyle{plain}
\newtheorem{theorem}{Theorem}[section]
\newtheorem{corollary}[theorem]{Corollary}
\theoremstyle{definition}
\newtheorem{definition}[theorem]{Definition}
\theoremstyle{remark}
\newtheorem{remark}[theorem]{Remark}
\newcommand{\norm}[3]{{||#1||}_{#2}^{#3}}
\renewcommand{\iota}{\textit{i}}
\renewcommand{\implies}{\; \Rightarrow \;}
\newcommand{\forAll}{{\;\; \forall \;}}
\newcommand{\UD}{\Ucal_{\Dcal_{0}}}
\DeclareMathOperator*{\argmin}{\arg\!\min}
\DeclareMathOperator*{\argmax}{\arg\!\max}
\newcommand{\given}{\, | \,}
\newcommand{\Given}{\; \big| \;}
\renewcommand{\bar}[1]{\overline{#1}} 
\renewcommand{\tilde}[1]{\widetilde{#1}} 
\renewcommand{\hat}[1]{\widehat{#1}} 
\newcommand{\E}[1]{\underset{\begin{subarray}{c} #1 \end{subarray}}{\Ebb}}
\newcommand{\Anzero}{\bar{A}_n^{(0)}}
\newcommand{\Anone}{\bar{A}_n^{(1)}}
\newcommand{\BR}{\mathscr{BR}}
\definecolor{Green}{rgb}{0.13, 0.65, 0.3}
\definecolor{Greenish}{RGB}{49, 158, 140}
\definecolor{TS}{RGB}{0, 125, 140}
\definecolor{LinTS}{RGB}{6, 75, 204}
\definecolor{naiveTS}{RGB}{201, 127, 0}
\definecolor{warmTS}{RGB}{128, 0, 32}
\definecolor{cellg}{RGB}{223, 247, 220}
\definecolor{Pink}{RGB}{212, 38, 235}
\definecolor{usc}{RGB}{153, 27, 30}
\definecolor{jpm}{RGB}{0, 53, 148}
\definecolor{eqcol}{RGB}{206, 240, 216}
\definecolor{backg_blue}{RGB}{225,236,244}
\definecolor{tagtxt_blue}{RGB}{88,115,159}
\definecolor{backg_red}{RGB}{250, 222, 222}
\definecolor{tagtxt_red}{RGB}{204, 71, 71}
\definecolor{darkgreen}{RGB}{21, 145, 1}
\definecolor{orange}{RGB}{230, 125, 14}
\newcommand{\Ebb}{{\mathbb E}}
\newcommand{\Ibb}{{\mathbb I}}
\newcommand{\Pbb}{{\mathbb P}}
\newcommand{\Rbb}{{\mathbb R}}
\newcommand{\Ibf}{{\mathbf{I}}}
\newcommand{\Acal}{{\mathcal{A}}}
\newcommand{\Ccal}{{\mathcal{C}}}
\newcommand{\Dcal}{{\mathcal{D}}}
\newcommand{\Ecal}{{\mathcal{E}}}
\newcommand{\Fcal}{{\mathcal{F}}}
\newcommand{\Hcal}{{\mathcal{H}}}
\newcommand{\Lcal}{{\mathcal{L}}}
\newcommand{\Ncal}{{\mathcal{N}}}
\newcommand{\Pcal}{{\mathcal{P}}}
\newcommand{\Ucal}{{\mathcal{U}}}
\newcommand{\Wcal}{{\mathcal{W}}}
\newcommand{\revisionhistory}[1]{%
\@ifundefined{showrevisionhistory}{\relax}{%
{#1}%
}%
}
\definecolor{Green}{rgb}{0.13, 0.65, 0.3}
\definecolor{Greenish}{RGB}{49, 158, 140}
\definecolor{TS}{RGB}{0, 125, 140}
\definecolor{LinTS}{RGB}{6, 75, 204}
\definecolor{naiveTS}{RGB}{201, 127, 0}
\definecolor{warmTS}{RGB}{128, 0, 32}
\definecolor{cellg}{RGB}{223, 247, 220}
\definecolor{Pink}{RGB}{212, 38, 235}
\title{Online Bandit Learning with Offline Preference Data for Improved RLHF}
\author{%
  Akhil Agnihotri \\
  University of Southern California \\
  \texttt{akhil.agnihotri@usc.edu} \\
  \And
  Rahul Jain \\
  Google DeepMind and USC\\
  \texttt{rahulajain@google.com} \\
  \And
  Deepak Ramachandran \\
   Google DeepMind \\
   \texttt{ramachandrand@google.com} \\
  \And
  Zheng Wen \\
  Google DeepMind \\
  \texttt{zhengwen@google.com} \\
}
\begin{document}

\maketitle

\begin{abstract}
Reinforcement Learning with Human Feedback (RLHF) is at the core of fine-tuning methods for generative AI models for language and images. Such feedback is often sought as rank or preference feedback from human raters, as opposed to eliciting scores since the latter tends to be noisy. On the other hand, RL theory and algorithms predominantly assume that a reward feedback is available. In particular, approaches for online learning that can be helpful in adaptive data collection via active learning cannot incorporate offline preference data. In this paper, we adopt a finite-armed linear bandit model as a prototypical model of online learning. We consider an offline preference dataset to be available generated by an expert of unknown `competence'. We propose $\mathsf{warmPref-PS}$, a posterior sampling algorithm for online learning that can be warm-started with an offline dataset with noisy preference feedback. We show that by modeling the `competence' of the expert that generated it, we are able to use such a dataset most effectively. We support our claims with novel theoretical analysis of its Bayesian regret, as well as, extensive empirical evaluation of an approximate loss function that optimizes for infinitely many arms, and performs substantially better than baselines.
\end{abstract}

\section{Introduction}
\label{sec:introduction}

\textcolor{black}{
In the development of generative AI models for language and image generation, it has proven quite effective to first `pretrain' with a very large \emph{offline} dataset followed by \emph{online} reinforcement learning (RL)-based `fine-tuning' with small amounts of high quality Human Feedback (HF) data to improve alignment with human preferences. Although preference based HF data are less noisy and easier to aggregate over multiple raters, absolute score based HF data are generally more informative than relative preferences, and designing mechanisms that find optimal tradeoffs between these different feedback modalities is critical to scaling RLHF. 
}

\textcolor{black}{
In practice today, there is already a lot of \emph{offline} preference data available to the models. These preference data are generated from batches sent to human annotators to provide preferences on. However, for task specific online finetuning, reward models (called `AutoRaters' \cite{anil2023palm}) are used for active learning. This is because it is expensive to do active learning with human raters in an online manner. The reward models are typically trained on these preference datasets in an offline manner, and are used to provide reward feedback in the online phase \cite{achiam2023gpt, anil2023palm}. 
The setting of offline preferences and online numerical rewards is also applicable to the case where a foundational model aligned to general human preferences from an initial \emph{offline} dataset needs to be rapidly personalized to the idiosyncratic \emph{online} preferences of a particular user. We approach the challenge of minimizing \emph{this} online learning. While it is trivial that collecting additional data with online finetuning will improve performance, \emph{how} to effectively combine preference and numerical reward learning is highly nontrivial. Hence, in this setting where the online ratings are absolute scores, we propose a simple Bayesian algorithm for online learning that incorporates learning from an offline preference dataset. We note here that our problem formulation below is motivated by the practical relevance discussed above, and, to the best of our knowledge, no other work that formalizes and analyses this setting exists.
}

\textcolor{black}{
To formalize the practical relevance, we adopt a finite-armed linear bandit model, with arms corresponding to different generated model outputs, with indicated rater preferences available offline before starting the online phase when absolute reward scores from a user become available. To efficiently learn the optimal arm selection strategy, we propose $\mathsf{warmPref-PS}$, a posterior sampling-based Bayesian algorithm that naturally incorporates offline preference data and online reward feedback, and minimizes Bayesian regret. 
}

\begin{wrapfigure}{r}{0.45\textwidth}
\vspace{-0.85cm}
\centering
\includegraphics[height=0.36\textwidth, width=0.44\textwidth]{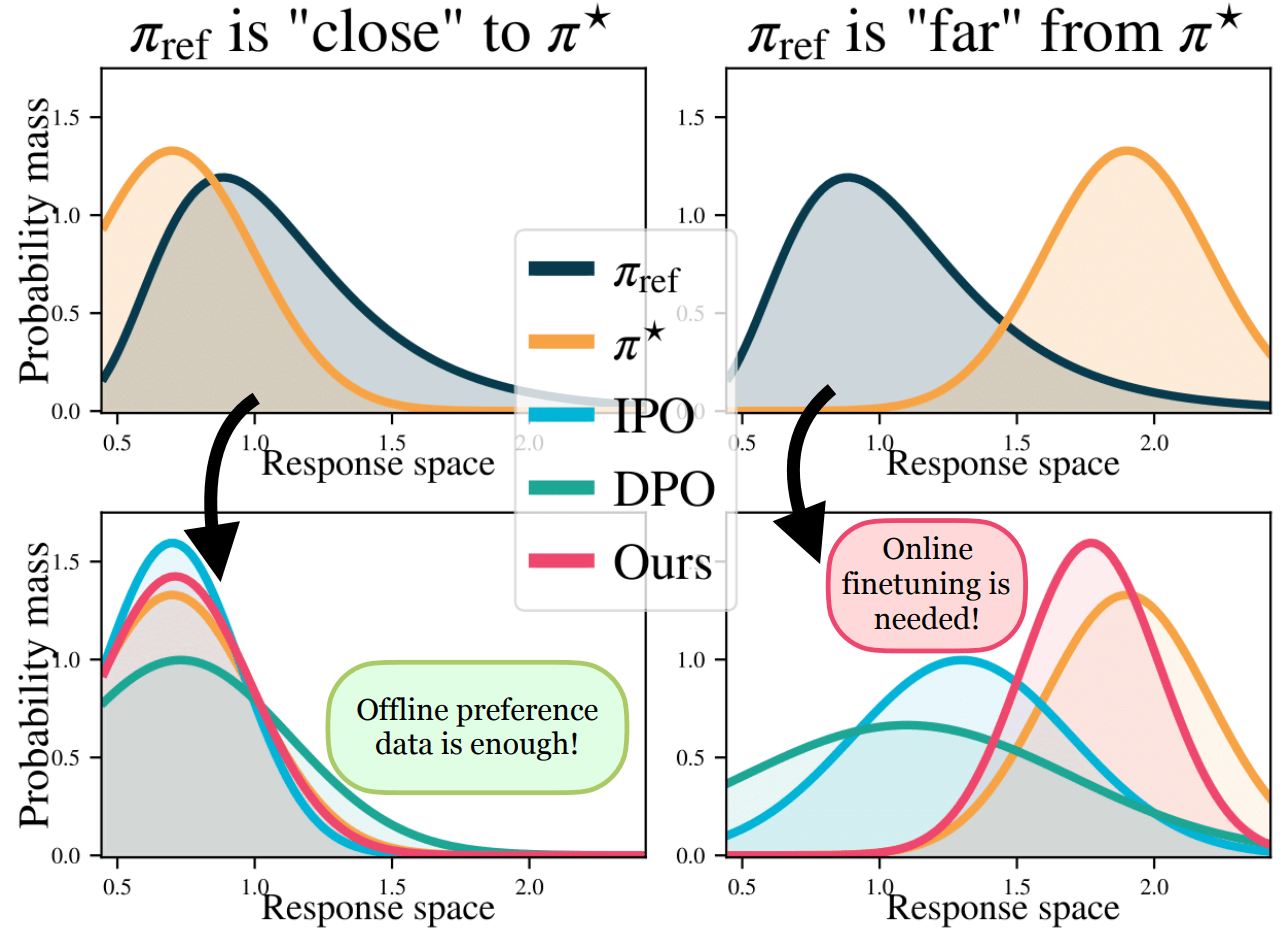}
\caption{$\mathsf{warmPref-PS}$ (ours) comparison with IPO \cite{ipo} and DPO \cite{rafailov2024direct}. When divergence between $\pi_{\mathsf{ref}}$ and $\pi^{\star}$ is large, on-policy online learning is needed for best results.}
\label{fig:rlhf_needed}
\vspace{-0.2cm}
\end{wrapfigure}

\textcolor{black}{
\noindent\textbf{Relevance to RLHF.} Since RLHF can be modeled as a bandit problem with context-output pairs \cite{ipo}, our problem setting sits at the intersection of offline preference learning and online reward-based fine-tuning in RLHF. \cite{tajwar2024preference} shows that when the reward optimum lies in low-probability regions of the reference policy $\pi_{\mathsf{ref}}$ (i.e. KL divergence between optimal policy $\pi^{\star}$ and $\pi_{\mathsf{ref}}$ is large) , on-policy sampling during online fine-tuning becomes crucial. This insight translates directly to our bandit setting, where maintaining a posterior over the reward function and updating via online sampling yields significantly improved performance compared to purely offline approaches, as seen in Figure \ref{fig:rlhf_needed}. Further, \cite{zhang2024reward} introduces a data relabeling scheme that augments offline binary preferences with explicit numerical reward values, avoiding “unlearning” of rejected yet high-quality outputs. This boosts generalization by better utilizing the full response space, a principle shown in our reward-based online phase, which outperforms standard DPO \cite{rafailov2024direct} on multiple benchmarks. \cite{tu2024online, bai2025online} address the scarcity of real-time human feedback in online phases by using LLMs to generate self-augmented preferences. While we take a different route (i.e. direct reward-based online updates), both approaches highlight the same need: augmenting offline preference datasets with online learning.
}

\noindent\textbf{Contributions of this paper.} \textcolor{black}{(i) We present the first online learning algorithm that incorporates \textit{offline preference data} into online learning, even when it comes from a subpar expert. The key is that the algorithm is able to model and learn how `competent' the expert is (with respect to the optimal policy). \textcolor{black}{The algorithm can be used for active learning for RLHF via use of AutotRaters during the online phase. }(ii) While our proposed algorithm is a natural extension of posterior sampling, it requires significantly different theoretical analysis due to the offline preference data (see Lemma \ref{lemma:priordependentinformationset} and Theorem \ref{th:warmtsregretbound}). We provide novel theoretical guarantees on the minimum size of the offline dataset needed for it to allow learning of the optimal action. We also provide an upper bound on the algorithm's Bayesian regret that reveals the dependence of the offline dataset size, and the expert's `competence'. (iii) We propose a practical version of our $\mathsf{warmPref-PS}$ algorithm , called {\fontfamily{ugm}\selectfont Bootstrapped} $\mathsf{warmPref-PS}$, that is computationally tractable for an infinitely armed bandit environment, and establish its superior empirical performance with regard to baselines.}

\noindent\textbf{Related work.} 
There is substantial literature on online learning for bandit models in various settings - finite-armed or linear, stochastic or adversarial, non-contextual or contextual models, etc. \citep{lattimore2020bandit}. There is recent literature on  utilizing offline data to improve learning regret during the online phase but these approaches either do not incorporate offline preference data \citep{shivaswamy12, bouneffouf2019optimal, zhang19b, banerjee2022artificial, agrawal2023optimal}, or solve the best arm identification problem which focuses on pure exploration \citep{agrawal2023optimal}. Furthermore, the quality of the offline data is not accounted for, which usually results in only a marginal regret reduction even while warm-starting with offline data. Ranking, comparison or preference feedback is considered in dueling bandit models \citep{dudik2015contextual,wu2016double,yan2022human,bengs2021preference,szorenyi2015online, agnihotri2023average, 9030307, Hu_2022_CVPR} but it is akin to active learning from preferences \citep{ailon2011active} without incorporating a \emph{given, fixed} offline preference dataset. Another set of works for the contextual bandit setting exist \cite{saha2021optimal, bengs2022stochastic, li2024feel}, however they cannot combine learning from rewards \emph{and} preferences. The importance of offline dataset quality in  imitation learning  was first investigated in \cite{beliaev22a}. Later, \cite{hao2023leveraging} introduced an algorithm which leveraged offline reward feedback to warm start the online phase. While the algorithm uses offline reward feedback data, it cannot incorporate preference feedback as we do in this paper. Since, incorporation of preference feedback is nontrivial, the regret analysis techniques (presented in Appendix \ref{sec:appendix}) are entirely different than for the case when reward feedback is available. \textcolor{black}{To our best knowledge, ours is the first online bandit learning algorithm that can incorporate offline preference data.}

\section{Preliminaries}
\label{sec:preliminaries}

We model unknown quantities as random variables defined on a common probability space $(\Omega, \Fcal, \Pbb)$. Now, consider a stochastic $K$-armed linear bandit problem with a set of actions, $\mathcal{A}=\{a_0, \dots, a_K \} \subseteq \Rbb^d$. The environment is characterized by a random vector $\theta \in \Rbb^d$, with a prior distribution $\nu_0$. At time step $t$, the agent chooses an action $A_t \in \mathcal{A}$ and receives a reward $R_t$: 
\vspace{-0.1cm}
$$
R_t=\left\langle A_t, \theta\right\rangle+\eta_t,
$$

where $\eta_t \sim \mathcal{N}\left(0, \sigma^2\right)$ are i.i.d. sampled at each time step. For RLHF applications, the rewards might correspond to absolute score feedback by an individual rater on outputs (i.e., actions) generated by a foundational model. The agent's objective is to maximize $\sum_{t=1}^{T} \Ebb[R_t]$, the expected total reward over horizon $T$, where the expectation is over the algorithm's decisions and the randomness in the environment. In addition, we also have an initial \emph{offline preference} dataset $\mathcal{D}_0$, which is generated by human raters \emph{with limited competence}. This offline dataset is a sequence of tuples of the form 
$
\mathcal{D}_0=\left((\bar{A}_n^{(0)}, \bar{A}_n^{(1)}, Y_n)\right)_{n \in [N]},
$ 
where $[N]:= [1, 2, \ldots, N]$, $\bar{A}_n^{(0)}, \bar{A}_n^{(1)} \in \mathcal{A}$ are two actions, and $Y_n \in\{0,1\}$ indicates the rater's preference. In particular, $Y_n=0$ if the rater prefers action $\bar{A}_n^{(0)}$ to $\bar{A}_n^{(1)}$, and $Y_n=1$ otherwise. In addition to the dataset size $N$, we characterize the offline dataset by: (i) an action sampling distribution $\mu$, where $\bar{A}_n^{(0)}$ and $\bar{A}_n^{(1)}$ are i.i.d. sampled from $\mu$; and (ii) we assume that given the two actions $\bar{A}^{(0)}$ and $\bar{A}^{(1)}$, the rater follows a \textit{noisy} Bradley-Terry model \citep{bradleyterry1952} and chooses $Y=0$ (i.e., ranks action $\bar{A}_n^{(0)}$ above $\bar{A}_n^{(1)}$)  with probability
\small
    \begin{equation}
       \label{eq:rater_preference_dist}
           { P\big(Y=0 \Given \bar{A}^{(0)}, \bar{A}^{(1)} \; ; \; \vartheta \big)} = \frac{\exp \left(\beta\left\langle\bar{A}^{(0)}, \vartheta\right\rangle\right)}{\exp \left(\beta\left\langle\bar{A}^{(0)}, \vartheta\right\rangle\right)+\exp \left(\beta\left\langle\bar{A}^{(1)}, \vartheta\right\rangle\right)}
    \end{equation}
\normalsize
where the parameter $\beta \geq 0$ is a measure of the \emph{deliberateness} of the rater's decision: $\beta=0$ means the rater's decisions are uniformly random, whereas as $\beta \to \infty$, its decisions pick the maximum of the reward under the two actions. The parameter $\vartheta \sim$ $N\left(\theta, \mathbf{I}_{d} / \lambda^2\right)$ ($\mathbf{I}_{d}$ is a $d \times d$ identity matrix) is the rater's estimate of the true reward model and the parameter $\lambda$ is a measure of its \emph{knowledgeability} of it, i.e., as $\lambda \to \infty$, $\vartheta \to \theta$. Alternatively, in the adaptation scenario where the online learning phase is used to align with the desires of a single user, the knowledgeability parameter can be seen as controlling the expected degree of alignment between the user and the general population from which preferences are aggregated.
Denoting the online  dataset collected by time $t$ as $\Hcal_t = \{ (A_t, R_t) \}_{s=1}^{t}$, we have $\Dcal_{t} = \Dcal_{0} \cup \Hcal_{t}$, the entire dataset available at time $t$.

\noindent\textbf{Notion of Regret.} 
Given an offline preference dataset $\Dcal_0$ and an arbitrary environment  $\theta$, the Bayesian Regret for $T$ rounds is given by:
\begin{equation}
    \BR_{T}(\pi) := \sum_{t=1}^{T} \Ebb_{\pi, \theta, \Dcal_{0}} \bigg[ \langle A^{\star}, \theta \rangle - R_{t} \bigg],
\label{eq:bayesianregret}
\end{equation}
where expectation is taken over $\left( \pi, \theta, \Dcal_0 \right)$, and $A^{\star}(\theta) = \argmax_{a \in \Acal} \langle a, \theta \rangle$ (the optimal action for environment $\theta$), and $\pi$ is a policy that maps past observations $\Dcal_t$ to a distribution over actions. Here, we assume that the prior distribution over the environment $\theta$ is a Gaussian distribution $\nu_0 = \mathcal{N}(\mu_0, \Sigma_0)$. To distinguish from the ``informed prior" learned from $\mathcal{D}_0$, we call $\nu_0$ as the uninformed prior.
%
%
The goal then is to learn a policy $\pi$ that minimizes the Bayesian regret in Equation \eqref{eq:bayesianregret}.



\section{Introducing the {Preference-Warmed Posterior Sampling} Algorithm}
\label{sec:introducealgo}

The online learning problem for Bayesian regret-minimization (in Equation \eqref{eq:bayesianregret}) that we have introduced in the previous section has two novel elements: an offline dataset to begin with, and such a dataset having only (noisy) preference feedback generated by a human rater with limited capacity, instead of reward feedback. 
We adopt the posterior sampling (PS) approach to designing online bandit learning algorithms since they have a natural structure, and also because they usually offer superior performance as compared to optimism-based algorithms \citep{russo2018tutorial}. Thus, we introduce $\mathsf{warmPref-PS}$ (as Algorithm \ref{alg:Prefwarm-PS}), a (Bayesian) posterior sampling algorithm warm-started with offline preference data. As we will see below, most of the steps are common with any meta-PS algorithm.


\begin{enumerate}[leftmargin=*]
    \item \textbf{Constructing an informed prior.} Using the offline dataset $\Dcal_0$, construct an informed prior $\nu_1$,
\begin{equation}
  \label{eq:informed_prior}
\resizebox{0.85\linewidth}{!}{$
  \begin{aligned}
      \nu_{1}(\theta) := P(\theta \given \Dcal_0) \; \propto \;  P(\Dcal_0 \, | \, \theta) \cdot \nu_0 (\theta) \; \propto \; \bigg[ \prod_{n=1}^{N} P( Y_n \given \Anzero, \Anone, \theta) \cdot P(\Anzero) \cdot P(\Anone) \bigg] \cdot \nu_0 (\theta)
  \end{aligned}
$}
\end{equation}

where $\nu_0$ is the uninformed prior and  the second step follows from Equation \eqref{eq:rater_preference_dist} and \textcolor{black}{the fact that in the context of RLHF, outputs (actions) are conditionally independent given the prompt}. It is worth emphasizing here that the actions in the offline dataset  carry information about the environment through the term $P(\bar{A}_n^{(\cdot)} \given \theta)$, which incorporates information about the expert's policy, and thus improves the informativeness of the prior distribution.   

\item \textbf{Online decision making.} At time  $t$, get sample $\hat{\theta}_{t} \sim \nu_{t}$, take action $A_t = \argmax_{a \in A} \langle a, \hat{\theta}_t \rangle$, observe reward $R_t$, and update the dataset as $\Dcal_{t} = \Dcal_{t-1} \cup \{(A_t, R_t)\}$.

\item \textbf{Updating knowledge of the environment.}  At time $t$, the environment parameter $\theta$ will have distribution $\nu_t (\theta)$, and we update our posterior as, 
\begin{equation}
  \label{eq:posterior_update_theta}
\resizebox{.92\linewidth}{!}{$
  \begin{aligned}
     \nu_{t+1}(\theta \given \Dcal_{t}) \; \propto \; P(\{(A_t, R_t)\} \given \Dcal_{t-1}, \theta) \cdot \nu_{t}(\theta\given \Dcal_{t-1}) = \; P(R_{t} \given A_{t}, \theta) \cdot P(A_{t} \given \Dcal_{t-1}) \cdot \nu_{t}(\theta\given \Dcal_{t-1}),
  \end{aligned}
$}
\end{equation}
where $P(R_{t} \given A_{t}, \Dcal_{t-1}, \theta) = P(R_{t} \given A_{t}, \theta)$ and $P(A_{t} \given \Dcal_{t-1}, \theta) = P(A_{t} \given \Dcal_{t-1})$. The posterior of $\vartheta$ also changes, and hence, $\vartheta_{t+1} \sim \Ncal (\theta_{t+1}, \Ibf/\lambda^{2})$ with $\theta_{t+1} \sim \nu_{t+1}(\theta).$ We regard $\beta$ to be a known parameter. We relax this in Section \ref{sec:empirical}.


      \begin{algorithm}[t]
        \caption{Preference-Warmed Posterior Sampling ($\mathsf{warmPref-PS}$)}
        \begin{algorithmic}[1]
          \STATE {\bfseries Input:} Action set $\Acal$, uninformed prior $\nu_0$ over environment $\theta$, offline preference dataset $\Dcal_{0}$.
          \STATE Construct informed prior ${\nu}_{1}$ from $\Dcal_{0}$ using Equation \eqref{eq:informed_prior}.
          \FOR{$t = 1, 2, \dots, T$}
          \STATE Sample $\hat{\theta}_{t} \sim {\nu}_t$ to take action $A_{t} = \argmax_{a \in \Acal} a^{T} \hat{\theta}_{t}$ and receive reward $R_{t}$.
          \STATE Update dataset $\Dcal_{t}$ and posterior $\nu_{t+1} \leftarrow \Pbb \left( \cdot \given \Dcal_{t} \right)$ using Equation \eqref{eq:posterior_update_theta}.
          \ENDFOR
        \end{algorithmic}
    \label{alg:Prefwarm-PS}
      \end{algorithm}

\end{enumerate}

\begin{remark}
In equation \eqref{eq:informed_prior}, we construct an informed prior using the offline preference dataset. This step can be intractable. Similarly, the posterior update of Equation \eqref{eq:posterior_update_theta} is also usually intractable, unless the distributions we are working with have a conjugacy property. In which case, we resort to various approximations. In Section \ref{sec:practicalwarmPref-PS}, we present a practical version of this algorithm by introducing a loss function that approximates Steps 2 and 5 of Algorithm \ref{alg:Prefwarm-PS}. This loss function is independent of the size of the action space, and hence, is extendable to infinitely-many armed bandit settings as well. 


\end{remark}


\section{Analysis of ~$\mathsf{warmPref-PS}$}
\label{sec:analysis}

We now present an analysis of the $\mathsf{warmPref-PS}$ algorithm in two steps. First, in Section~\ref{sec:samplecomplexity}, we present an ``informativeness" analysis of the offline preference data $\Dcal_0$, which establishes a sample complexity result for $\Dcal_0$ to be informative about the optimal action. Then, based on this result, we develop an upper bound on the Bayesian regret for $\mathsf{warmPref-PS}$ in Section~\ref{sec:regretbd}.


\subsection{Informativeness of Offline Preference Data}
\label{sec:samplecomplexity}

We first introduce the notion of \emph{informativeness} of the offline preference data, which characterizes how much information about the optimal action is provided by this offline preference dataset. 
Specifically, for purposes of analysis, we construct an `information' set $\UD \subseteq \Acal$ such that it contains the optimal action with high probability (see Appendix~\ref{appendix:twoactions}, \ref{appendix:multipleactionsinfinitebeta}, and \ref{appendix:multipleactionsfinitebeta} for details). This is useful for the analysis during the online phase; intuitively, during the online phase, $\mathsf{warmPref-PS}$ is expected to sample most actions from $\UD$.
%

\begin{definition}
Consider a random set $\UD \subseteq \Acal$ measurable with 
respect to $\Dcal_{0}$. For any $\epsilon \in [0,1]$, we say $\UD$ as $(1 - \epsilon)$-\textit{informative} if $P (A^{\star} \in \UD) \ge 1 - \epsilon$, i.e., it contains the optimal action with high probability.
\end{definition}

This information set $\UD$ has to be measurable with respect to $\Dcal_0$ (i.e.,  conditionally deterministic given $\Dcal_0$). Intuitively, the offline dataset $\Dcal_0$ is useful in determining the optimal action $A^{\star}$ if there exists a $\UD$ measurable to $\Dcal_0$ such that  (i) $\UD$ is $(1-\epsilon)$-informative, and (ii) $\mathbb{E}[\left| \UD \right|]$ is small. In other words, one can construct a $\UD$ based on $\Dcal_0$ such that $\UD$ has a small expected cardinality and contains $A^{\star}$ with high probability.
%
We first present a sample complexity result (i.e., how large the offline dataset size needs to be) on $\Dcal_0$ such that the set $\UD$ constructed in the appendix is $(1-\epsilon)$-informative. 
We start by studying the special case of the set $\UD$ being a singleton to elucidate its dependence on various parameters, but discussion in the next section will not require this assumption. The result below shows this dependence, i.e., how large does $\Dcal_{0}$ need to be such that $\mathsf{warmPref-PS}$ can infer $A^{\star}$ from it.


\begin{restatable}{theorem}{finalmultipleactionssamplecomplexity}
\label{theorem:finalmultipleactionssamplecomplexity} 
Let the action set $\Acal$ have size $K$ with a sampling distribution $\mu$ such that $0 < \mu_{\text{min}} \leq \mu_{k} \leq \mu_{\text{max}} < 1, \; \forall k \in [K]$. Given some $\epsilon \in (0, 1)$ and finite $\beta < \infty$, let $\lambda \to \infty$. Then, the singleton set $\UD = \{A^{\star}\}$ is $(1-\epsilon)$-informative if
\small
\begin{align}
& N > N_0: = \frac{\ln K + (k_{\text{max}}-1)\ln\ln K}{\mu_{\text{min}}^{2} \epsilon}, \quad  \text{where}\\
\quad k_{\text{max}} &= \max_{i,j \in [K]} \frac{\ln \bigg( \big(\frac{2 K^{2}}{\epsilon}-1 \big) \big(\frac{1}{\Phi(x_{i,j})}-1 \big) \bigg)}{\beta \langle a_{i} - a_{j}, \theta_{0} \rangle} \; , \text{and} \; x_{i,j} = \frac{(a_{i} - a_{j})^T \mu_{0}}{\sqrt{\big(a_{i} - a_{j}\big)^T \Sigma_{0} \big(a_{i} - a_{j}\big)}}, \nonumber
\label{eq:samplecomplexityfinal}
\end{align}
\normalsize
and, $N$ is the size of the preference dataset and $\Phi(\cdot)$ is the standard Normal CDF.
\end{restatable}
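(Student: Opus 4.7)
The plan is to exploit the assumption $\lambda \to \infty$, under which the rater's noisy parameter $\vartheta$ concentrates on the true $\theta$ almost surely, so the offline preferences in $\Dcal_0$ reduce to i.i.d.\ Bradley--Terry samples at $\theta$ with deliberateness $\beta$. Under the singleton hypothesis on $U_{\Dcal_0}$, the failure event $\{A^\star \notin U_{\Dcal_0}\}$ occurs exactly when some suboptimal action is chosen as the unique surviving candidate, so the goal reduces to showing $\Pbb(A^\star \notin U_{\Dcal_0}) \le \epsilon$ whenever $N > N_0$.

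Next I would unpack the construction of $U_{\Dcal_0}$ from Appendix~\ref{appendix:multipleactionsfinitebeta}: it retains $a_i$ only when the posterior probability that $a_i$ dominates every other $a_j$ exceeds a threshold calibrated so that each pairwise dominance event has posterior mass at least $1 - \epsilon/(2K^{2})$, which is precisely the origin of the $(2K^{2}/\epsilon - 1)$ factor inside $k_{\max}$. Since $\nu_{0} = \mathcal{N}(\mu_0, \Sigma_0)$, the prior probability of $\langle a_i - a_j, \theta \rangle > 0$ is exactly $\Phi(x_{i,j})$, accounting for the $(1/\Phi(x_{i,j}) - 1)$ factor; writing posterior odds as prior odds times Bradley--Terry likelihood ratio, the threshold condition translates into the requirement that the signed empirical log-likelihood margin between $a_i$ and $a_j$ exceed $k_{\max}$.

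From here I would run two concentration steps. First, because $\bar{A}_n^{(0)}, \bar{A}_n^{(1)} \sim \mu$ are i.i.d., the count $N_{ij}$ of comparisons of pair $(i,j)$ in $\Dcal_0$ is binomial with mean at least $2 N \mu_{\min}^{2}$, so a Chernoff bound gives $N_{ij} \gtrsim N \mu_{\min}^{2}$ with high probability. Second, conditional on $N_{ij}$, the number of times $a_i$ is preferred is a sum of independent Bernoullis whose mean lies strictly above $1/2$ whenever $a_i \succ a_j$, so a Hoeffding-type bound controls the probability that the required margin $k_{\max}$ is missed. A union bound over the $O(K)$ competitors of $A^\star$ then yields total failure probability at most $\epsilon$ precisely when $N \mu_{\min}^{2} \epsilon > \ln K + (k_{\max}-1)\ln\ln K$.

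The main obstacle I anticipate is matching the precise form of $N_0$: the $\epsilon$ appearing in the \emph{denominator} rather than as $\ln(1/\epsilon)$, and the \emph{sub-logarithmic} $(k_{\max}-1)\ln\ln K$ correction rather than a multiplicative $k_{\max}\ln K$ factor, strongly suggests that a naive Chernoff + union bound is too loose. Instead I expect the analysis to combine a Markov-type bound on the expected number of ``bad'' surviving suboptimal actions with a refined negative-binomial / coupon-collector-style tail, with the $\ln\ln K$ term absorbing the cost of demanding $k_{\max}$ wins on the rarest pair under the sampling distribution $\mu$. A secondary subtlety is the singleton hypothesis on $U_{\Dcal_0}$, which lets me equate ``$A^\star$ is retained'' with ``$A^\star$ is the unique survivor'' and thereby avoid a separate bound on $\Ebb[|U_{\Dcal_0}|]$; without it an additional pigeonhole / tie-breaking step across surviving actions would be required.
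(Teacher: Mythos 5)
Your decomposition of the target---a per-pair sample requirement $k_{\max}$ derived from posterior odds, combined with a guarantee that every pair is observed often enough---matches the paper's architecture, and your closing paragraph correctly anticipates that the linear $1/\epsilon$ and the $\ln\ln K$ correction must come from an expectation-plus-Markov argument rather than from Chernoff bounds. However, the concrete machinery you actually write down in the middle (Chernoff on the pair counts $N_{ij}$, Hoeffding on the empirical win counts, union bound over the $O(K)$ competitors of $A^{\star}$) is not the paper's argument and would not produce the stated $N_0$: it would yield a bound scaling as $\ln(K/\epsilon)$ rather than $(\ln K + (k_{\max}-1)\ln\ln K)/(\mu_{\min}^{2}\epsilon)$, as you yourself suspect. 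Two specific ingredients are missing. First, the per-pair count $k_i$ is not obtained by concentration on win fractions: in the $\lambda\to\infty$ limit the paper's two-action lemma computes the posterior odds of $a_i$ versus $a_j$ being optimal in closed form as $\exp\left(k_i\,\beta\langle a_i-a_j,\theta_0\rangle\right)\cdot\Phi(x_{i,j})/(1-\Phi(x_{i,j}))$, and setting this above $2\binom{K}{2}/\epsilon - 1$ directly yields the expression for $k_{\max}$; no Hoeffding step appears anywhere. Second, the outer bound is the Newman--Shepp generalized coupon-collector (``double dixie cup'') result: with $n=\binom{K}{2}$ items (action pairs) and $m=k_{\max}$ required copies of each, $\Ebb[T_m]= n\ln n + (m-1)\,n\ln\ln n$; rescaling by $\mu_{\min}^{-2}$ for an arbitrary pair-sampling distribution and applying Markov's inequality, $\Pbb(T_{k_{\max}}>N)\le \Ebb[T_{k_{\max}}]/N\le \epsilon/2$, gives exactly $N \gtrsim (\ln K + (k_{\max}-1)\ln\ln K)/(\mu_{\min}^{2}\epsilon)$. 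This single Markov step is where both the $1/\epsilon$ dependence and the sub-logarithmic term originate. A further mismatch: the paper requires resolving \emph{all} $\binom{K}{2}$ pairs (a sufficient condition for a complete ordering of the actions), not merely the $K-1$ pairs involving $A^{\star}$, which is why $K^{2}$ rather than $K$ appears inside the logarithm defining $k_{\max}$; your union bound over only the competitors of $A^{\star}$ would not interface correctly with the coupon-collector step.
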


The proof can be found in Appendix \ref{appendix:multipleactionsfinitebeta}. The above theorem provides a bound on the size of the offline preference dataset $\Dcal_{0}$ needed to find a singleton information set $\UD$ containing the optimal action. To understand the above result, note that in the special case of $K=2$, we have the following.
\begin{corollary}
For an action set $\Acal = \{a_0, a_1 \}$,
any $\beta \in (0,\infty)$ and $\epsilon \in (0, 1)$, with $\lambda \to \infty$, if 
\begin{equation}
N > N_0:=\frac{\ln \bigg( \big(\frac{1}{\epsilon}-1 \big) \big(\frac{1}{\Phi(x)}-1 \big) \bigg)}{\beta \langle a_{0} - a_{1}, \theta_{0} \rangle}, \label{eq:twoactionssamplecomplexity}
\end{equation}
\text{where}~~$x := \frac{(a_{0} - a_{1})^T \mu_{0}}{\sqrt{(a_{0} - a_{1})^T \Sigma_{0} (a_{0} - a_{1})}}$,
then there exists a singleton $\UD$ that is $(1-\epsilon)$-informative.
\end{corollary}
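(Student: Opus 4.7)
The plan is to reduce the informativeness question to a one-dimensional Bayesian binary hypothesis test, exploiting the fact that with $K=2$ the event $\{A^{\star} = a_0\}$ collapses to the sign of the scalar $\Delta := \langle a_0 - a_1, \theta\rangle$. Under the Gaussian prior $\theta \sim \Ncal(\mu_0, \Sigma_0)$ the scalar $\Delta$ is Gaussian with mean $\langle a_0 - a_1, \mu_0\rangle$ and variance $(a_0 - a_1)^\top \Sigma_0 (a_0 - a_1)$, so the prior probability $\Pbb(A^\star = a_0)$ equals exactly $\Phi(x)$ for the $x$ defined in the statement. I would then take the singleton $\UD = \{\hat{A}(\Dcal_0)\}$ defined by the MAP rule: $\hat{A} = a_0$ when $\Pbb(\Delta > 0 \mid \Dcal_0) \geq 1/2$, and $\hat{A} = a_1$ otherwise.

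Next I would turn the requirement $\Pbb(A^\star \in \UD) \geq 1 - \epsilon$ into a condition on the posterior odds. Decomposing the log posterior odds for $\{\Delta > 0\}$ into the log prior odds $\log(\Phi(x)/(1-\Phi(x)))$ plus the log marginal likelihood ratio of $\Dcal_0$, MAP is guaranteed to agree with $A^\star$ on a set of probability at least $1-\epsilon$ as soon as the data log-likelihood ratio beats a threshold of magnitude $\log\bigl((1/\epsilon - 1)(1/\Phi(x) - 1)\bigr)$, which is precisely the numerator of $N_0$. Since $\lambda \to \infty$ forces $\vartheta = \theta$, each informative pair in $\Dcal_0$ is an independent Bradley--Terry Bernoulli vote with bias governed by $\beta\Delta$, and a Chernoff (equivalently KL-divergence) bound on these votes shows that the empirical log-LR concentrates at a per-sample rate of order $\beta\langle a_0 - a_1, \theta_0\rangle$, which is the denominator of $N_0$. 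Dividing the threshold by the rate then recovers the stated bound on $N$.

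Putting the two pieces together, once $N > N_0$ the data log-LR dominates the prior-adjusted threshold with the required probability, so the MAP rule returns $A^\star$ with probability at least $1-\epsilon$ and $\UD$ is $(1-\epsilon)$-informative. The main technical obstacle I expect is cleanly handling the Bayesian--frequentist mixing: the per-sample Chernoff rate $\beta\Delta$ depends on the unknown $\Delta$, so the argument must either split on whether $|\Delta|$ lies above a calibrated threshold, with the small-$|\Delta|$ region absorbing into a boundary term governed by $\Phi(x)$ (which produces the $1/\Phi(x) - 1$ factor), or integrate the conditional Chernoff bound against the prior and identify a worst-case anchor $\theta_0$. A secondary nuisance is translating the total offline size $N$ into the number of informative pairs with $\bar{A}^{(0)} \neq \bar{A}^{(1)}$, which for $K=2$ is deterministic up to sub-Gaussian fluctuations and can be absorbed into constants.
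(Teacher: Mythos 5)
Your proposal is correct and is essentially the paper's own argument: the paper forms the posterior odds that $a_0$ is optimal as (data likelihood ratio) $\times$ (prior odds), identifies the prior odds as $\Phi(x)/(1-\Phi(x))$ exactly as you do, sends $\lambda\to\infty$ so that $\vartheta\to\theta_0$, and solves $\exp\left(N\beta\langle a_0-a_1,\theta_0\rangle\right)\cdot\Phi(x)/(1-\Phi(x)) > (1-\epsilon)/\epsilon$ for $N$. The only divergence is your closing Chernoff step for the random Bernoulli votes: the paper never performs it, instead evaluating the likelihood ratio deterministically at a per-comparison rate of $\beta\langle a_0-a_1,\theta_0\rangle$ (effectively the all-wins realization), so the ``Bayesian--frequentist mixing'' obstacle you flag never arises in their looser derivation --- and note that carrying your concentration argument out rigorously would give the expected log-odds increment $(2q-1)\beta\Delta$ per vote (with $q$ the Bradley--Terry win probability), not $\beta\Delta$, so you would land on a slightly different constant than the stated $N_0$.
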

See Appendix \ref{appendix:twoactions} and \ref{proof:twoactionssamplecomplexity} for proof. 
The above corollary reveals how the offline dataset size $N$ needed to infer the optimal action depends on the deliberateness parameter in the presence of noisy comparisons. We see that as $\beta \to \infty$, we have $N_0 \to 0$, i.e., we only need a single comparison. 

\subsection{Regret Bound}
\label{sec:regretbd}

We now first introduce an information theoretic result from the literature for Bayesian regret for a posterior sampling algorithm. The discussion below is for the general case where the information set $\UD$ is not necessarily a singleton.

\begin{restatable}{theorem}{botaoregretbound}
\label{th:generalizedregretbound} 
(\cite{hao2023leveraging}) For any $(1-\epsilon)$-informative set $\UD \subseteq \Acal$, the Bayesian Regret of any \texttt{PS} algorithm
can be upper bounded as:
$$
    \BR_T(\texttt{PS}) \leq \sqrt{T \Ebb[|\UD|] \ln(\Ebb[|\UD|]) + \epsilon \ln(K / \epsilon)} + C_{1}T\epsilon, 
$$
where $C_{1}$ is the bound on the expected reward range, i.e., $\Ebb[\max a^{T}\theta] - \Ebb[\min a^{T}\theta] \leq C_{1}$.
\end{restatable}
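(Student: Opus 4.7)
The plan is to adapt the information-theoretic regret analysis of posterior sampling due to Russo and Van Roy to this warm-started setting, treating the informative set $\UD$ as a random, $\Dcal_{0}$-measurable subset on which the per-round analysis is sharper than on the full action set $\Acal$.

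First, I would partition the Bayesian regret according to the events $E := \{A^{\star} \in \UD\}$ and $E^{c}$. By $(1-\epsilon)$-informativeness, $\Pbb(E^{c}) \leq \epsilon$, and on $E^{c}$ bounding the per-round instantaneous regret by the reward range $C_{1}$ yields the additive contribution $C_{1} T \epsilon$. The main work lies on $E$. Here I would invoke the posterior-sampling identity $\Pbb(A_{t} = a \mid \Hcal_{t}) = \Pbb(A^{\star} = a \mid \Hcal_{t})$ together with the standard Russo--Van Roy decomposition, which writes each per-round expected regret as the square root of the product of an information ratio $\Gamma_{t}$ and the one-step information gain about $A^{\star}$. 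Conditional on $\UD$ (which is $\Dcal_{0}$-measurable and hence known) and on $E$, the posterior of $A^{\star}$ is effectively supported on $\UD$, so the classical information-ratio bound for $K$-armed bandits yields $\Gamma_{t} \leq |\UD|/2$. Summing over $t$ and applying Cauchy--Schwarz then gives a conditional regret bound of the form $\sqrt{(T|\UD|/2)\, I(A^{\star}; \Hcal_{T} \mid \UD, E)}$, and I would bound the conditional mutual information by the conditional entropy $H(A^{\star} \mid \UD, E) \leq \ln |\UD|$.

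Finally, I would take expectation over $\UD$. Cauchy--Schwarz combined with Jensen's inequality for the concave $\ln$ gives $\Ebb[\sqrt{|\UD|\ln|\UD|}] \leq \sqrt{\Ebb[|\UD|]\ln\Ebb[|\UD|]}$, producing the main $\sqrt{T\,\Ebb[|\UD|]\ln\Ebb[|\UD|]}$ term. The residual $\epsilon \ln(K/\epsilon)$ inside the square root comes from a more careful entropy decomposition $H(A^{\star}) \leq H(A^{\star} \mid \mathbf{1}_{E}) + H(\mathbf{1}_{E})$: the term $H(A^{\star} \mid \mathbf{1}_{E} = 0) \leq \ln K$ contributes $\epsilon \ln K$ (weighted by $\Pbb(E^{c}) \leq \epsilon$), and the binary entropy $H(\mathbf{1}_{E})$ is bounded by roughly $\epsilon \ln (1/\epsilon)$.

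The main obstacle I anticipate is rigorously justifying the information-ratio bound $|\UD|/2$ when $\UD$ is random and $A^{\star}$ is only supported on $\UD$ with probability $1-\epsilon$ rather than almost surely. A clean route is to couple PS with an oracle sampler that would know the event $E$ and is supported on $\UD$, and then argue via total-variation distance that the discrepancy contributes only $O(T\epsilon)$, which is absorbed into the $C_{1} T \epsilon$ term. Some care is also needed in interchanging the expectation over $\UD$ with Jensen's inequality, so as not to double-count the $\epsilon$-failure probability in the two places it naturally enters.
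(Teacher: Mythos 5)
The paper does not actually prove this statement: it is imported verbatim from \cite{hao2023leveraging} (note the citation in the theorem header), and none of the appendix sections contains an argument for it --- the paper only proves the ingredients (Lemma \ref{lemma:priordependentinformationset}) that are later plugged into this bound. So there is no in-paper proof to compare against; what can be said is that your outline reconstructs the standard Russo--Van Roy information-ratio argument that underlies the cited result: split on $E=\{A^{\star}\in\UD\}$ to harvest $C_1 T\epsilon$, bound the conditional information ratio by $|\UD|/2$, control the cumulative information gain by $H(A^{\star}\mid \Dcal_0)$, and use the chain-rule decomposition through $\mathbf{1}_E$ to produce the $\epsilon\ln(K/\epsilon)$ entropy correction. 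Your Cauchy--Schwarz-plus-Jensen step $\Ebb[\sqrt{|\UD|\ln|\UD|}]\le\sqrt{\Ebb[|\UD|]\ln\Ebb[|\UD|]}$ is valid (split as $\Ebb[\sqrt{|\UD|}\sqrt{\ln|\UD|}]$, then Cauchy--Schwarz and concavity of $\ln$), and you correctly flag the one genuinely delicate point: posterior sampling draws $A_t$ from the \emph{unconditional} posterior of $A^{\star}$, which is not supported on $\UD$, so the naive conditional information-ratio bound $\Gamma_t\le|\UD|/2$ does not apply directly; this is precisely the step the cited work handles with a more careful accounting of the mass outside $\UD$, and your proposed coupling/total-variation patch is a reasonable (if unexecuted) way to absorb the discrepancy into the $O(T\epsilon)$ term.
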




To apply the above theorem, we can construct the set $\UD$ in the following way: it contains all actions that  have been preferred to another action \emph{at least} once in the offline dataset $\Dcal_{0}$ and also includes any actions that do not appear in the dataset $\Dcal_{0}$ \footnote{Note that this construction is an algorithmic choice that integrates well with the posterior sampling style of algorithms. There can be other construction criteria of $\UD$ as well.}. Thus, $\UD$ can contain upto $K$ actions. First, let $\Delta := \ln(T\beta)/\beta \, , \, \alpha_{1}^{\Delta} := K \min(1,\Delta)$, and $\alpha_{2} := \lambda^{-1} \sqrt{2\ln(2d^{1/2}T)}$. Then, denote 

\begin{equation}
\resizebox{.7\linewidth}{!}{$
\begin{aligned}
\tilde{f}_1 &:=  \left(1 - \frac{1}{1 + \exp \left( \beta \big( \min(1, \Delta) + \alpha_{2} - \alpha_{1}^{\Delta} \big) \right)} \right)^{N} + (1 - \mu_{\text{min}})^{2N} \quad , \;  f_1 = \tilde{f}_1 + \frac{1}{T} \;\, , \\
f_{2} &:= \min  \left( \left(\alpha_{1}^{\Delta} \right)^2  + \frac{NK}{T \beta} \left( 1 + \exp \left( -\beta \alpha_{2} + \alpha_{1}^{\Delta} \right) \right)^{-N} + \frac{2}{T}  \; , \, K \right) \; .
\end{aligned}
$}
\end{equation}

The constants $(f_1,f_2)$ characterize the offline preference dataset in terms of its size $N$, and expert's competence parameters $\lambda$ and $\beta$. Then, we have the following guarantee on the informativeness and size of the set ~$\UD$.

\begin{restatable}{lemma}{priordependentinformationset}
\label{lemma:priordependentinformationset} 
If $\mu_{\text{min}} > 0$, then set $\UD$ constructed above is $(1-f_1)$-informative, and $
\mathbb E[|\UD|]\leq f_2\,.
$
\end{restatable}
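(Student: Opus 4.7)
The set $\UD$ as constructed --- all actions preferred at least once in $\Dcal_0$, together with all actions not appearing in $\Dcal_0$ --- is manifestly measurable with respect to $\Dcal_0$, so the definition of informativeness applies. I would prove the two claims separately.

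For the informativeness bound $P(A^\star \in \UD) \geq 1 - f_1$: observe that $A^\star \notin \UD$ iff $A^\star$ appears in $\Dcal_0$ but loses every comparison it participates in, so it suffices to bound $P(A^\star \text{ never wins in } \Dcal_0)$. I would first control the rater's latent parameter: because $\vartheta - \theta \sim \mathcal{N}(0, \mathbf{I}_d/\lambda^2)$, a coordinate-wise Gaussian tail inequality combined with a union bound over the $d$ coordinates (calibrated at $\alpha_2$) yields $\|\vartheta - \theta\|_\infty \leq \alpha_2$ with probability at least $1 - 1/T$, accounting for the $1/T$ term in $f_1 = \tilde f_1 + 1/T$. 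On this good event, I would lower bound the per-comparison win probability $\sigma(\beta\langle A^\star - a, \vartheta\rangle)$ via a case split on the reward gap $\Delta^\star(a) := \langle A^\star - a, \theta\rangle$ against the threshold $\Delta = \ln(T\beta)/\beta$: when $\Delta^\star(a) \geq \Delta$ the sigmoid already exceeds $1 - 1/(T\beta)$, and for near-optimal $a$ the envelope $\alpha_1^\Delta = K\min(1,\Delta)$ bounds the worst-case adjustment, yielding the uniform lower bound $p \geq \frac{1}{1+\exp(\beta(\min(1,\Delta) + \alpha_2 - \alpha_1^\Delta))}$. Independence of the $N$ comparisons then gives $(1-p)^N$, the first summand of $\tilde f_1$, while the disjoint event that $A^\star$ is not drawn in any of the $2N$ i.i.d.\ samples contributes at most $(1-\mu(A^\star))^{2N} \leq (1-\mu_{\min})^{2N}$.

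For the cardinality bound $\mathbb{E}[|\UD|] \leq f_2$: the trivial $|\UD| \leq K$ explains the outer $\min$. For the main expression, I would decompose $|\UD|$ into three disjoint contributions: near-optimal actions with $\Delta^\star(a) < \Delta$; far-from-optimal actions ($\Delta^\star(a) \geq \Delta$) that still win at least once; and actions that never appear in $\Dcal_0$. I expect the near-optimal contribution to be controlled by $(\alpha_1^\Delta)^2$ in expectation via a counting argument that converts the reward-window width $\min(1,\Delta)$ into an action-count bound using the prior $\nu_0 = \mathcal{N}(\mu_0, \Sigma_0)$. For far actions, the single-comparison win probability is controlled by a sigmoid evaluated near $-\beta\Delta + \beta\alpha_2$; using $P(\text{wins at least once}) \leq \mathbb{E}[\text{number of wins}]$ and summing over the $N$ comparisons and $K$ candidate actions produces the $\frac{NK}{T\beta}(1+\exp(-\beta\alpha_2 + \alpha_1^\Delta))^{-1}$ term. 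A $2/T$ slack absorbs the two concentration-failure contributions and the non-appearing-action residual.

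The main technical obstacle will be the $(\alpha_1^\Delta)^2$ bound on the near-optimal action count, since this requires carefully tracking how the reward-gap window translates into an action-set cardinality bound under the Gaussian prior $\nu_0$ and the geometry of $\Acal$. By contrast, the sigmoid algebra for the far-action winners and the Gaussian tail inequality for $\vartheta$ are essentially standard once the right case split on $\Delta^\star(a)$ is in place.
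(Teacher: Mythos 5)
Your proposal follows essentially the same route as the paper's proof (split across Lemmas \ref{lemma:f1informative} and \ref{lemma:cardinalitysmall}): the same decomposition of $\{A^\star\notin\UD\}$ into ``lost every comparison'' plus ``never sampled,'' the same Gaussian tail bound on $\|\vartheta-\theta\|_\infty$ at level $\alpha_2$ with failure probability $1/T$, the same gap threshold $\Delta=\ln(T\beta)/\beta$ with events $\{\langle A^\star-a,\theta\rangle\le\Delta\}$, and for the cardinality the same near/far split with $P(\text{wins at least once})\le\Ebb[\text{number of wins}]$ and the outer $\min(\cdot,K)$. The two spots you flag as delicate --- how the factor $K$ inside $\alpha_1^\Delta$ enters the exponent, and the $(\alpha_1^\Delta)^2$ count of near-optimal actions --- are precisely where the paper itself relies on the $g(\theta_{\ring a})\le\min(1,\Delta)$ envelope and a Poisson-type approximation, so your plan is aligned with the paper's argument at the same level of detail.
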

\begin{proof}[Proof sketch] We upper bound the probability of the optimal action $A^{\star}$ not being in $\UD$ (the `error probability'), and by defining an event $\Ecal_{(n)}:=\{ \langle A^{\star}-a_{n}, \theta \rangle \leq \Delta \}$, where $\left(A^{\star}, a_{n} \right)$ is the action tuple in $\UD$ for some $n \in [N]$ and $\Delta \in \Rbb$. Then, we decompose this error probability based on the event $\Ecal_{(n)}$ and the sub-optimality gap of actions ($\Delta$). To bound the expected cardinality of $\UD$, we again decompose based on $\Ecal_{(n)}$ and use a Poisson approximation to bound the probability. See Appendix \ref{proof:priordependentinformationset} for the complete proof. 
\end{proof}

Here, $\alpha_{1}^{\Delta}$ and $\alpha_{2}$ are representative of the nature of the problem, and take into the consideration the information loss due to finite deliberateness and knowledgeability of the rater respectively. Both $\alpha_{1}^{\Delta}, \alpha_{2}\to 0$, as $\beta$ and $\lambda$ get large. The parameter $f_{1}$ captures the error probability of the optimal action not being in $\UD$, and decays exponentially as the size of the dataset increases. Note that we need $\mu_{\text{min}}$ and $\mu_{\text{max}} \in (0,1)$ to obtain a nonzero sampling probability over all the actions. 

Now using Lemma \ref{lemma:priordependentinformationset} in conjunction with the information theoretic upper bound in Theorem \ref{th:generalizedregretbound}, we obtain the following main result about the Bayesian regret of the $\mathsf{warmPref-PS}$ algorithm:

\begin{restatable}{theorem}{warmtsregretbound}
\label{th:warmtsregretbound} 
The Bayesian regret of the $\mathsf{warmPref-PS}$ algorithm can be bounded as
\begin{equation}
\resizebox{0.75\linewidth}{!}{$
\begin{aligned}
\BR_{T}(\pi_{\mathsf{warmPref-PS}}) \leq \underbrace{\sqrt{Tf_2\left( \ln(f_2) +  f_1\ln\left(K/f_1\right)\right)}}_{\text{main term}} \nonumber  + 2\sqrt{2\ln(K)}T \left(\tilde{f}_1 + \frac{1}{T} \right)
\end{aligned}
$}
\end{equation}
\end{restatable}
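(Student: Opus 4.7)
The plan is to combine Lemma \ref{lemma:priordependentinformationset} with the generic posterior-sampling bound of Theorem \ref{th:generalizedregretbound}. I would invoke Lemma \ref{lemma:priordependentinformationset} to instantiate the abstract information set in Theorem \ref{th:generalizedregretbound} by the specific $\UD$ built from $\Dcal_{0}$ (actions that appear as the preferred option at least once, together with all actions that never appear in $\Dcal_{0}$). The lemma supplies exactly the two quantities Theorem \ref{th:generalizedregretbound} needs: an informativeness parameter $\epsilon = f_{1}$ and an expected-cardinality bound $\Ebb[|\UD|]\leq f_{2}$.

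Substituting these into Theorem \ref{th:generalizedregretbound} and using that $x\mapsto x\ln x$ is nondecreasing on $[1,\infty)$ (together with $|\UD|\geq 1$ a.s.\ by construction, since $\UD$ always contains at least the actions that do not appear in $\Dcal_{0}$), the main term becomes $\sqrt{T f_{2}\ln(f_{2}) + f_{1}\ln(K/f_{1})}$. To match the exact form stated in the theorem, I would then loosen this using $f_{2}\geq 1$ to write $f_{1}\ln(K/f_{1})\leq f_{2}\, f_{1}\ln(K/f_{1})$, so that
$$T f_{2}\ln(f_{2}) + f_{1}\ln(K/f_{1}) \;\leq\; T f_{2}\bigl(\ln(f_{2}) + f_{1}\ln(K/f_{1})\bigr),$$
producing the first (``main'') summand of the target bound after taking the square root.

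For the additive term, I would bound the reward-range constant $C_{1}$ appearing in Theorem \ref{th:generalizedregretbound} under the Gaussian prior $\nu_{0}=\Ncal(\mu_{0},\Sigma_{0})$. Since $\{a^{\top}\theta\}_{a\in\Acal}$ is a finite family of jointly Gaussian random variables, a standard sub-Gaussian maximal inequality gives $\Ebb[\max_{a} a^{\top}\theta] - \Ebb[\min_{a} a^{\top}\theta]\leq 2\sqrt{2\ln K}$ under the normalization inherited from $\nu_{0}$ and the bounded action set. Plugging $\epsilon=f_{1}=\tilde{f}_{1}+1/T$ into $C_{1}T\epsilon$ then yields $2\sqrt{2\ln K}\,T(\tilde{f}_{1}+1/T)$, which is the second summand in the claim.

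The conceptual heavy lifting has already been done inside Lemma \ref{lemma:priordependentinformationset}; the present step is essentially algebraic substitution plus one monotonicity argument. The only delicate point I anticipate is the bookkeeping of the constant $C_{1}$: depending on how the action norms and $\Sigma_{0}$ are normalized, the sub-Gaussian maximal inequality might produce a slightly different constant, and one may need to absorb the scaling into $C_{1}$ before substituting. A secondary concern is confirming $f_{2}\geq 1$ holds in every regime of interest; this is true for the explicit $f_{2}$ displayed above since the construction of $\UD$ forces $|\UD|\geq 1$ whenever $\mu_{\max}<1$, which is assumed throughout.
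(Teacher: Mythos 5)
Your proposal is correct and follows essentially the same route as the paper: the theorem is obtained by instantiating the generic bound of Theorem~\ref{th:generalizedregretbound} with the set $\UD$ of Lemma~\ref{lemma:priordependentinformationset}, taking $\epsilon = f_1$ and $\Ebb[|\UD|]\le f_2$. In fact you are somewhat more careful than the paper, which does not spell out the $f_2\ge 1$ monotonicity step needed to pull $f_1\ln(K/f_1)$ inside the $Tf_2(\cdot)$ factor, nor the Gaussian maximal-inequality bound $C_1\le 2\sqrt{2\ln K}$; both of your arguments for these are sound under the standing normalization of the action set and prior.
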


The proof can be found in Appendix \ref{appendix:regretanalysis}. Although the bound in Theorem \ref{th:warmtsregretbound} appears to be linear in $T$, in fact $\tilde{f}_1 < 1$, and as the dataset size $N \to \infty$, $\tilde{f}_1 \to 0$, implying that the second term behaves like a constant. Second, for the main term, as the deliberateness $\beta$ and dataset size $N$ increase, the information ratio ($f_{2}$) decreases exponentially and then the entropy part ($\ln(f_2) + f_1 \ln (K/f_1)$) decreases further until $f_2 = 1$.  Finally, note that as the preference dataset parameters $\lambda$ and $\beta$ get large, the main term in the regret bound above converges to 0. Thus, the algorithm has constant regret in the case where the offline dataset is very large and is from a near-optimal expert.

\textcolor{black}{
\begin{remark}
 Note  that $f_2 \leq K$, so $\UD$ cannot grow arbitrarily large. Second, the informativeness of the offline dataset depends on both the dataset size $N$ and its quality (measured by $\beta$). When both $N$ and $\beta$ go to infinity, our regret bound in Theorem \ref{th:warmtsregretbound} reduces to $\mathcal{O}(\sqrt{\ln(K) + \ln(T)})$, which is sublinear. Thus, our result shows that when the offline dataset has both high quality and large size, the per unit regret of the proposed algorithm is negligibly small. Finally, note that in the asymptotic sense this upper bound matches the lower bound in the classical linear bandit setting with no offline data and bandit feedback.
\end{remark}
\begin{remark}
    The current analysis has been done in the finite armed bandit setting for tractability reasons \textcolor{black}{and practical relevance to RLHF with finite vocabulary and context-output pair sizes}, and we leave the analysis of infinite-many armed bandit setting for future work. Nevertheless, in the next section we present an approximate loss function for the $\mathsf{warmPref-PS}$ algorithm that works for the infinite armed setting as well and performs substantially better than available baselines. 
\end{remark}
}

\section{A Practical Approximation of the $\mathsf{warmPref-PS}$ Algorithm}
\label{sec:practicalwarmPref-PS}

As mentioned before, the posterior update in Equation \eqref{eq:informed_prior} and Equation \eqref{eq:posterior_update_theta} lacks the conjugacy property due to the $P(A_{t} \given \Dcal_{t-1})$ term, and is hence intractable. However, it inspires us to design a practical algorithm in the manner of  well established Bayesian bootstrapping ideas \citep{osband2019deep} where a surrogate loss function is constructed with added noise, and is then optimized to obtain the Maximum A Posteriori (MAP) estimate. This provides a point estimate of the unknown parameters $(\theta,\vartheta)$, but due to the added noise can be viewed as a sample from an approximation to the posterior distribution.

\noindent \textbf{A surrogate loss function.}
We start with the MAP estimate problem for $(\theta, \vartheta)$ given the offline and online dataset $\Dcal_{t-1}$ at  time $t-1$. We show that this is equivalent to minimizing a particular surrogate loss function as described in the lemma below:


\begin{restatable}{lemma}{mapestimatelemma}
\label{th:mapestimatelemma} 
At time $t$, the MAP estimate of $(\theta, \vartheta)$ can be constructed by solving the following equivalent optimization problem: 
\small
\begin{equation}
\begin{aligned}
(\theta_{opt}, \vartheta_{opt})  &= \underset{\theta, \vartheta}{\argmax} \; P(\theta, \vartheta \, | \, \Dcal_{t-1})  \equiv \underset{\theta, \vartheta}{\argmin} \; \Lcal_{1}(\theta, \vartheta) +  \Lcal_{2}(\theta, \vartheta) +  \Lcal_{3}(\theta, \vartheta) \; , \\
\text{where}, & \qquad \Lcal_{1}(\theta, \vartheta)  := \frac{1}{2} \sum_{s=1}^{t-1} \big(R_s - \langle A_s , \theta \rangle \big)^{2}, \\
& \Lcal_{2}(\theta, \vartheta) := - \sum_{n=1}^{N} \beta \langle \bar{A}_n^{(Y_{n})} , \vartheta \rangle + \ln \bigg(e^{ \beta \langle \bar{A}_n^{(0)}, \vartheta \rangle} + e^{\beta \langle \bar{A}_n^{(1)}, \vartheta \rangle} \bigg),  \\
& \Lcal_{3}(\theta, \vartheta) := \frac{\lambda^2}{2} \norm{\theta - \vartheta}{2}{2} + \frac{1}{2} (\theta - \mu_{0})^{T} \Sigma_{0}^{-1} (\theta - \mu_{0}).
\end{aligned}
\label{eq:mapestimateproblem}
\end{equation}
\normalsize
\end{restatable}

See Appendix \ref{proof:mapestimatelemma} for proof. A close look at Equation \eqref{eq:mapestimateproblem} shows that $\Lcal_{1}$ captures the likelihood of the online rewards, $\Lcal_{2}$ captures the likelihood of preferences from the offline preference dataset $\Dcal_{0}$, and $\Lcal_{3}$ handles the prior distribution of $\theta$ and $\vartheta$. We could also regard $\beta$ to be unknown but that leads to a non-convex loss function. So, we estimate that separately in Section \ref{sec:empirical} and then plug it in Equation \eqref{eq:mapestimateproblem}. Minimizing the above loss function however, only yields a point estimate of $(\theta,\vartheta)$ that is deterministic given the dataset $\Dcal_{t-1}$. 

\begin{algorithm}[!t]
   \caption{{\fontfamily{ugm}\selectfont Bootstrapped} $\mathsf{warmPref-PS}$}
   \label{alg:practical_Prefwarm-PS}
\begin{algorithmic}[1]
   \STATE {\bfseries Input:} {\small Horizon $T$, offline preference dataset $\Dcal_0$, action set $\Acal$, knowledgeability $\lambda$, deliberateness $\beta$.}
	 \FOR{$t = 1,2,\dots,T$} 
    	 \STATE Sample a set of perturbations $\Pcal_{t} = \{(\zeta_{s}, \omega_{n}, \theta', \vartheta')\}$.
    	 \STATE Solve Equation \eqref{eq:final_surrogate_perturbed_loss} using this set $\Pcal_{t}$ to find $(\hat{\theta}_{t}, \hat{\vartheta}_{t})$.
         \STATE Take action $A_t = \argmax_{a \in \Acal} \, \langle a, \hat{\theta}_{t} \rangle$, receive reward $R_t$, and update $\Dcal_{t} \leftarrow \Dcal_{t-1} \cup \{(A_{t}, R_{t})\}$.
    \ENDFOR
\end{algorithmic}
\end{algorithm}

\noindent\textbf{Perturbing the loss function.}
As mentioned above, the idea now is to \emph{perturb} the loss function in Equation \eqref{eq:mapestimateproblem} with some noise, so that the MAP point estimates we get from this perturbed surrogate loss function serve as \emph{samples} from a distribution that approximates the true posterior \citep{osband2019deep, NIPS2017_49ad23d1, qin2022analysis, dwaracherla2022ensembles}. To that end, we propose a perturbation of the `online' loss function $\Lcal_1(\cdot)$ by additive Gaussian noise, of the `offline' loss function $\Lcal_2(\cdot)$ by multiplicative random weights, and of the `prior' loss function $\Lcal_3(\cdot)$ by random samples from the prior distribution as follows:
(i) \textit{Online perturbation.} Let $\zeta_{s} \sim \Ncal(0,1)$, all i.i.d. Then, the perturbed $\Lcal_{1}(\cdot)$ becomes $ \Lcal_{1}'(\theta, \vartheta) =  \frac{1}{2} \sum_{s=1}^{t-1} \big(R_s + \zeta_{s} - \langle A_s , \theta \rangle \big)^{2}$, (ii) \textit{Offline perturbation.} Let $\omega_{n} \sim \mathrm{Bern}(0.5)$, all i.i.d. Then, the perturbed $\Lcal_{2}(\cdot)$ becomes $ \Lcal_{2}'(\theta, \vartheta) =  - \sum_{n=1}^{N} \omega_{n} \bigg[ \beta \langle \bar{A}_n^{(Y_{n})} , \vartheta \rangle + \ln \bigg(e^{ \beta \langle \bar{A}_n^{(0)}, \vartheta \rangle} + e^{\beta \langle \bar{A}_n^{(1)}, \vartheta \rangle} \bigg) \bigg]$, and (iii) \textit{Prior perturbation.} Let $\theta' \sim \Ncal(\mu_{0}, \Sigma_{0})$, and $\vartheta' \sim \Ncal (\mu_{0}, \Ibf_{d} /\lambda^{2})$, all i.i.d. Then, the perturbed $\Lcal_{3}(\cdot)$ becomes $ \Lcal_{3}'(\theta, \vartheta)  = \frac{\lambda^2}{2} \norm{\theta - \vartheta + \vartheta'}{2}{2} + \frac{1}{2} (\theta - \mu_{0} - \theta')^{T} \Sigma_{0}^{-1} (\theta - \mu_{0}- \theta')$. Then, at  time $t$, we get the following  MAP point estimate from the perturbed surrogate loss function, 

\vspace{-0.2cm}
\begin{equation}
\label{eq:final_surrogate_perturbed_loss}
\begin{aligned}
    (\hat{\theta}_{t}, \hat{\vartheta}_{t}) =  \underset{\theta, \vartheta}{\argmin} \; \Lcal'(\theta, \vartheta) = \underset{\theta, \vartheta}{\argmin} \; \Lcal_{1}'(\theta, \vartheta) + \Lcal_{2}'(\theta, \vartheta) + \Lcal_{3}'(\theta, \vartheta), 
\end{aligned}
\end{equation}
which are well understood to have a distribution that approximates the actual posterior distribution. \textcolor{black}{Note that the perturbed surrogate loss function is convex, can be optimized easily and is independent of the number of arms, and hence is scalable to infinitely-many armed bandit setting as well. In addition, it can be extended easily to the setting where the offline dataset comes from \emph{multiple} experts with different $(\lambda_{i}, \beta_{i})$ competence tuples. Specifically, for $M$ experts, there will be $M$ similar terms for $\Lcal_{2}'(\cdot)$ and $\Lcal_{3}'(\cdot)$ respectively, while $\Lcal_{1}'(\cdot)$ will remain unchanged. This yields the {\fontfamily{ugm}\selectfont Bootstrapped} $\mathsf{warmPref-PS}$ as Algorithm \ref{alg:practical_Prefwarm-PS}. 
}

\begin{remark}
\textcolor{black}{
In the next section, we will show that while a theoretical analysis of the practical approximation proposed above is challenging, it does have excellent empirical performance and can be scaled up to large problems as well.
}
\end{remark}

\begin{figure*}[!t]
    \centering
        {
        \includegraphics[height=0.027\textwidth, width=0.5\textwidth]{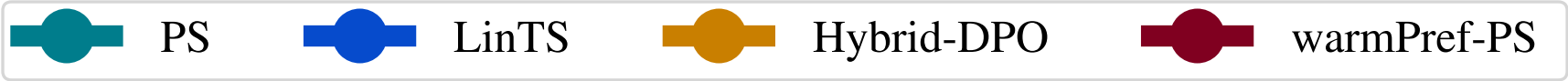}

    }
    {
        \includegraphics[height=0.18\textwidth, width=0.20\textwidth]{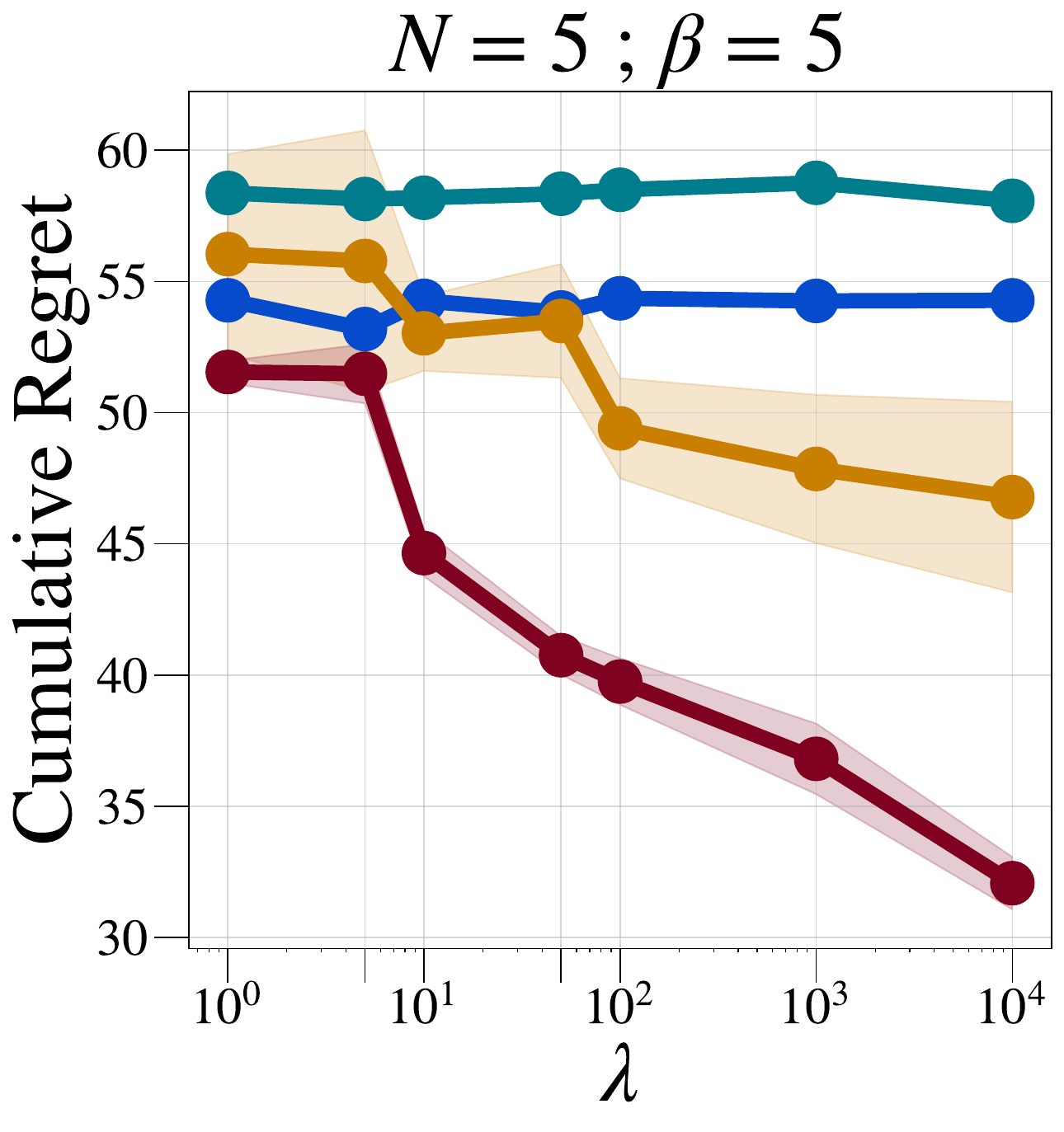}
        \label{fig:lamb-n5-beta5}
    } 
    {
        \includegraphics[height=0.18\textwidth, width=0.20\textwidth]{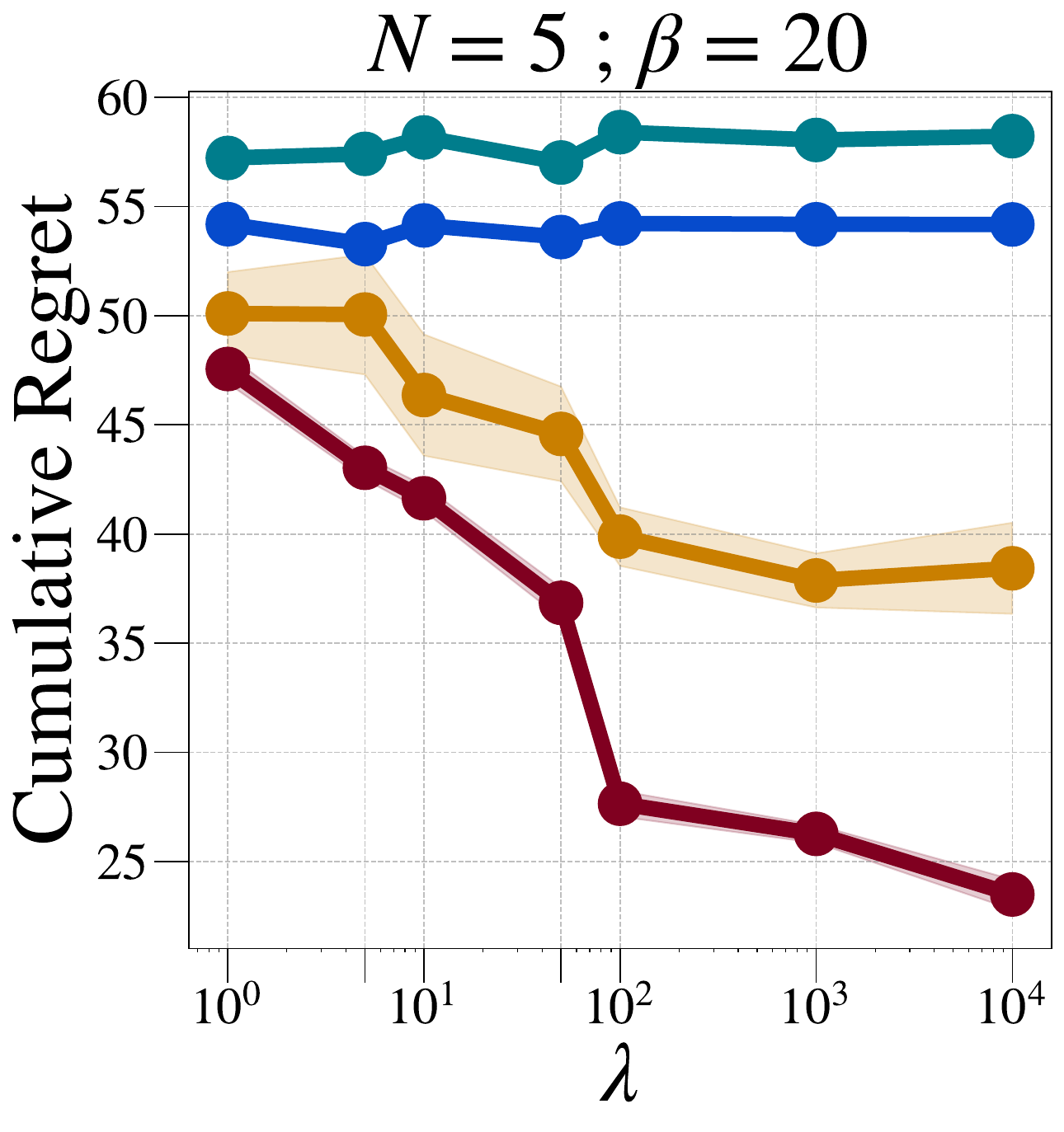}
        \label{fig:lamb-n5-beta20}
    } 
    {
        \includegraphics[height=0.18\textwidth, width=0.20\textwidth]{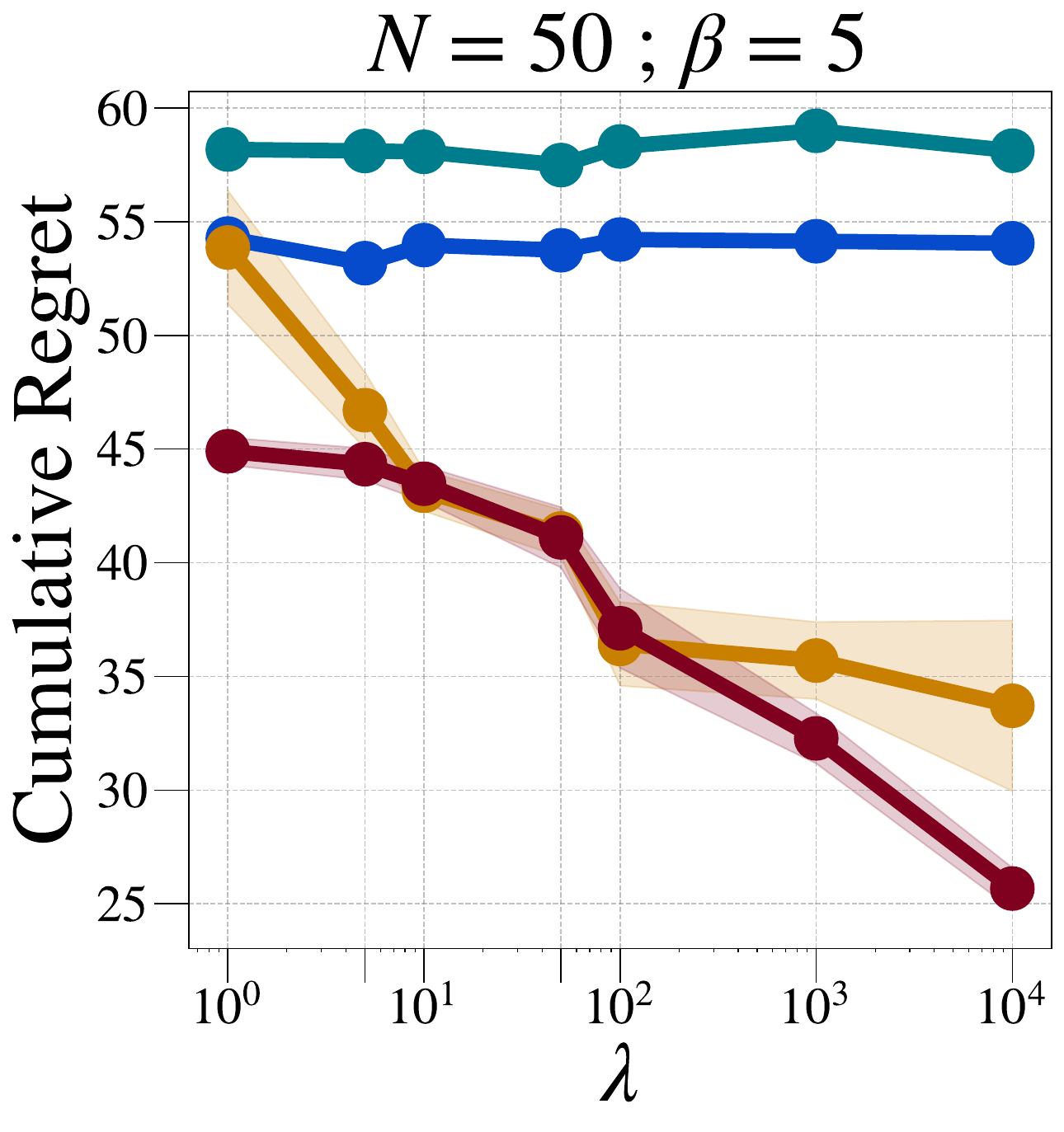}
        \label{fig:lamb-n50-beta5}
    } 
    {
        \includegraphics[height=0.18\textwidth, width=0.20\textwidth]{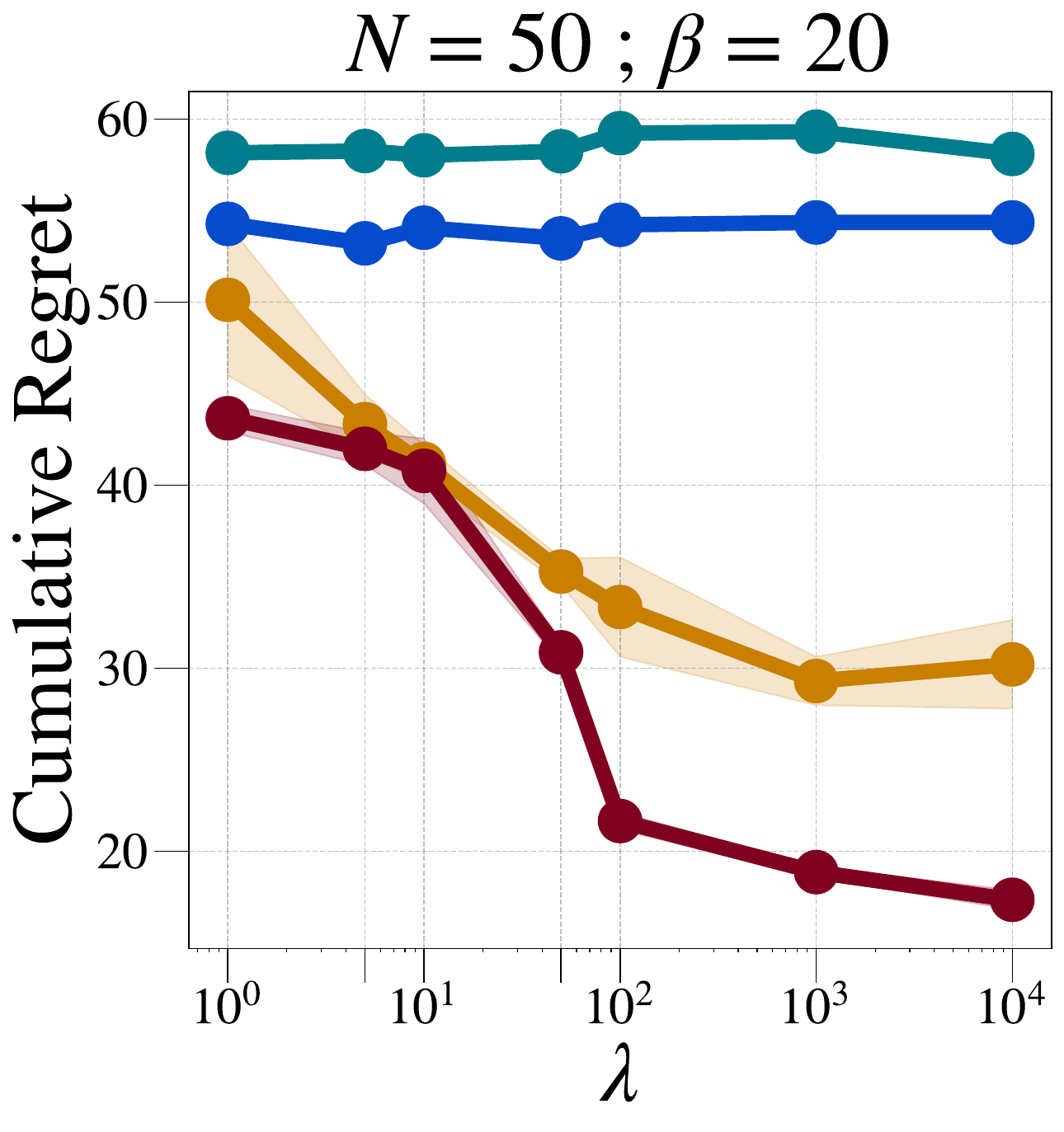}
        \label{fig:lamb-n50-beta20}
    } 
    \newline
    \begin{center}  
    \label{fig:a} \vspace{-0.51cm}
        (a) Varying $\lambda$ for fixed $\beta$ and $N$. 
    \end{center}
    
    {
        \includegraphics[height=0.18\textwidth, width=0.20\textwidth]{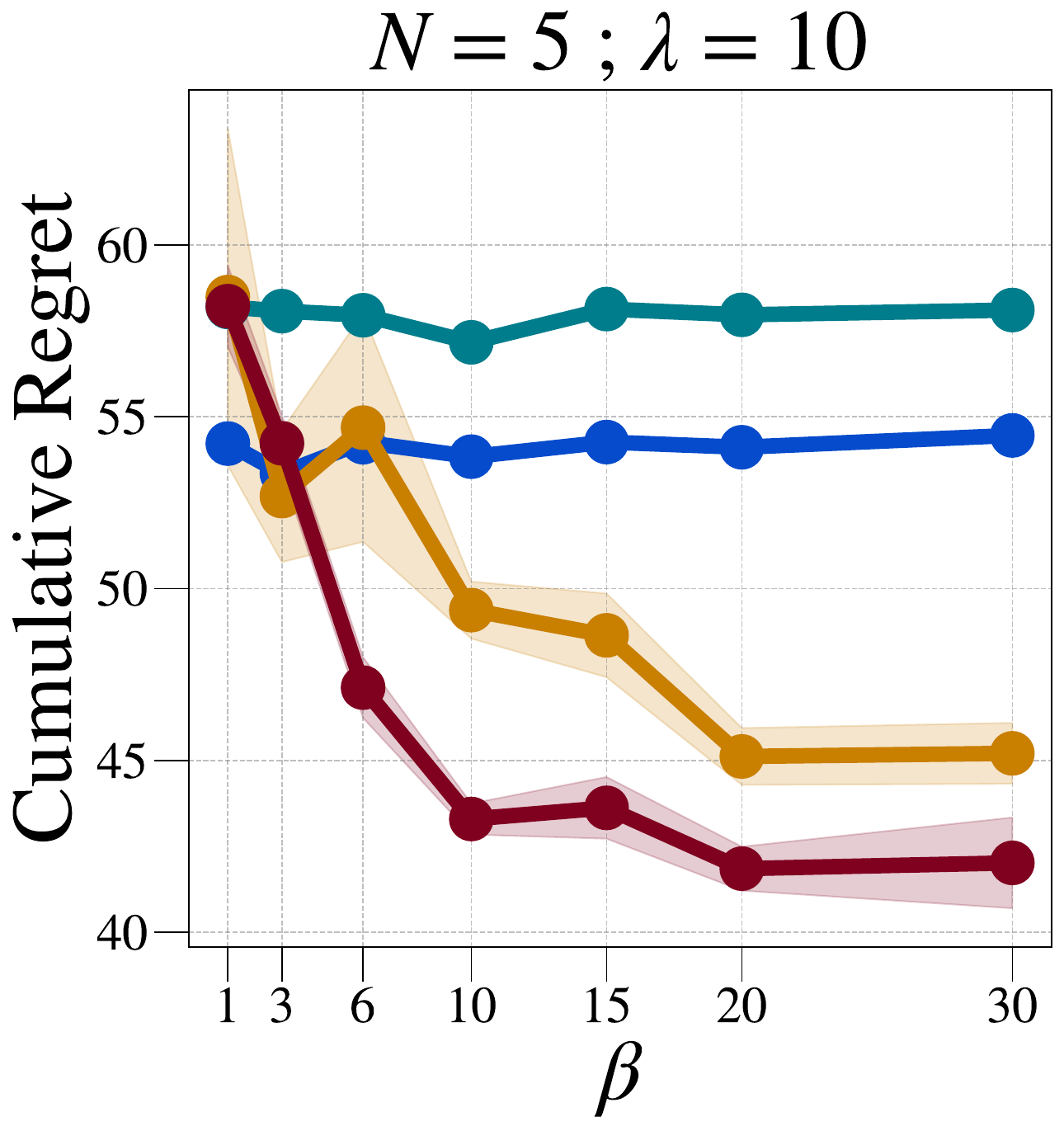}
        \label{fig:beta-n5-lamb10}
    } 
    {
        \includegraphics[height=0.18\textwidth, width=0.20\textwidth]{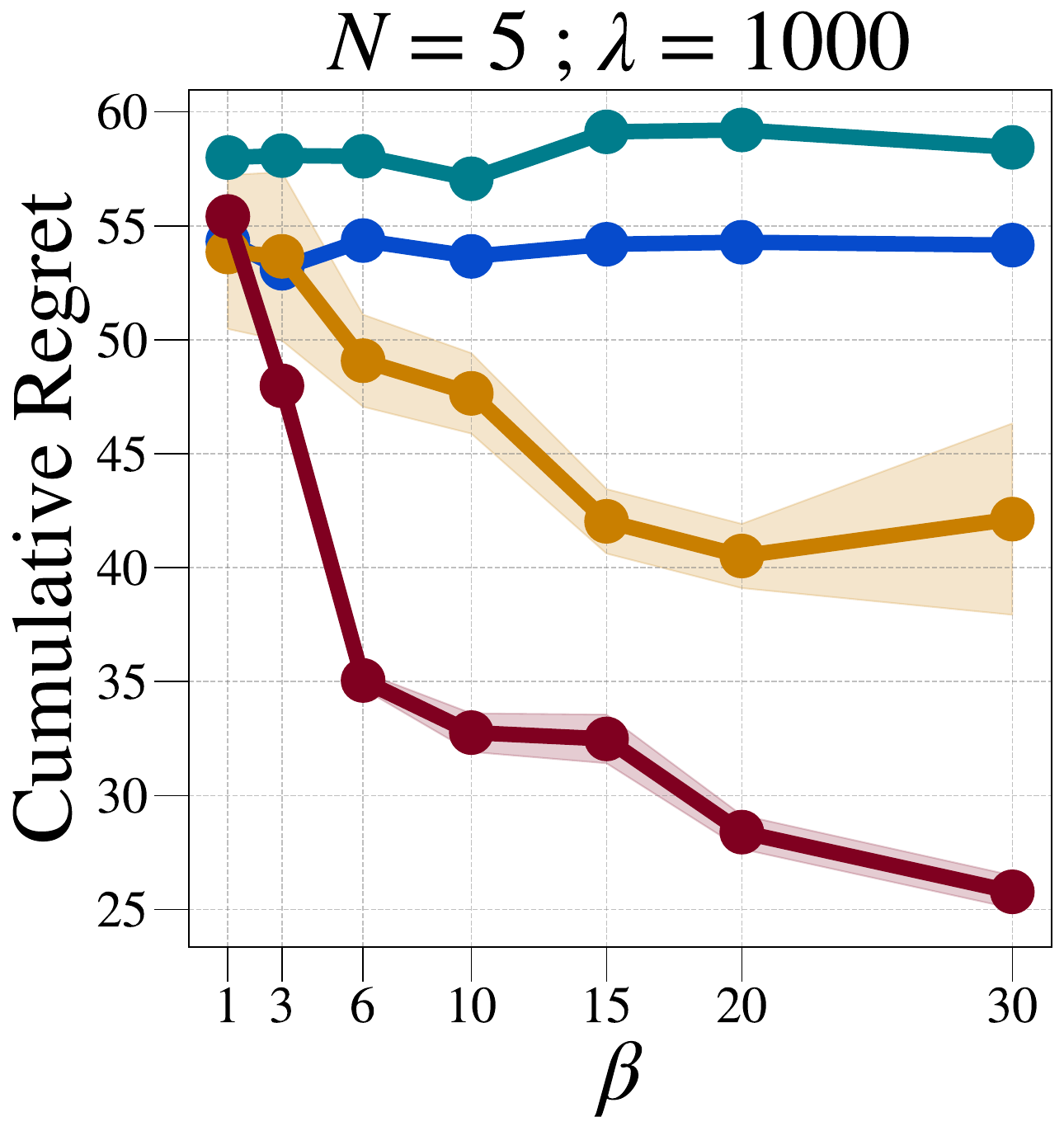}
        \label{fig:beta-n5-lamb1000}
    } 
    {
        \includegraphics[height=0.18\textwidth, width=0.20\textwidth]{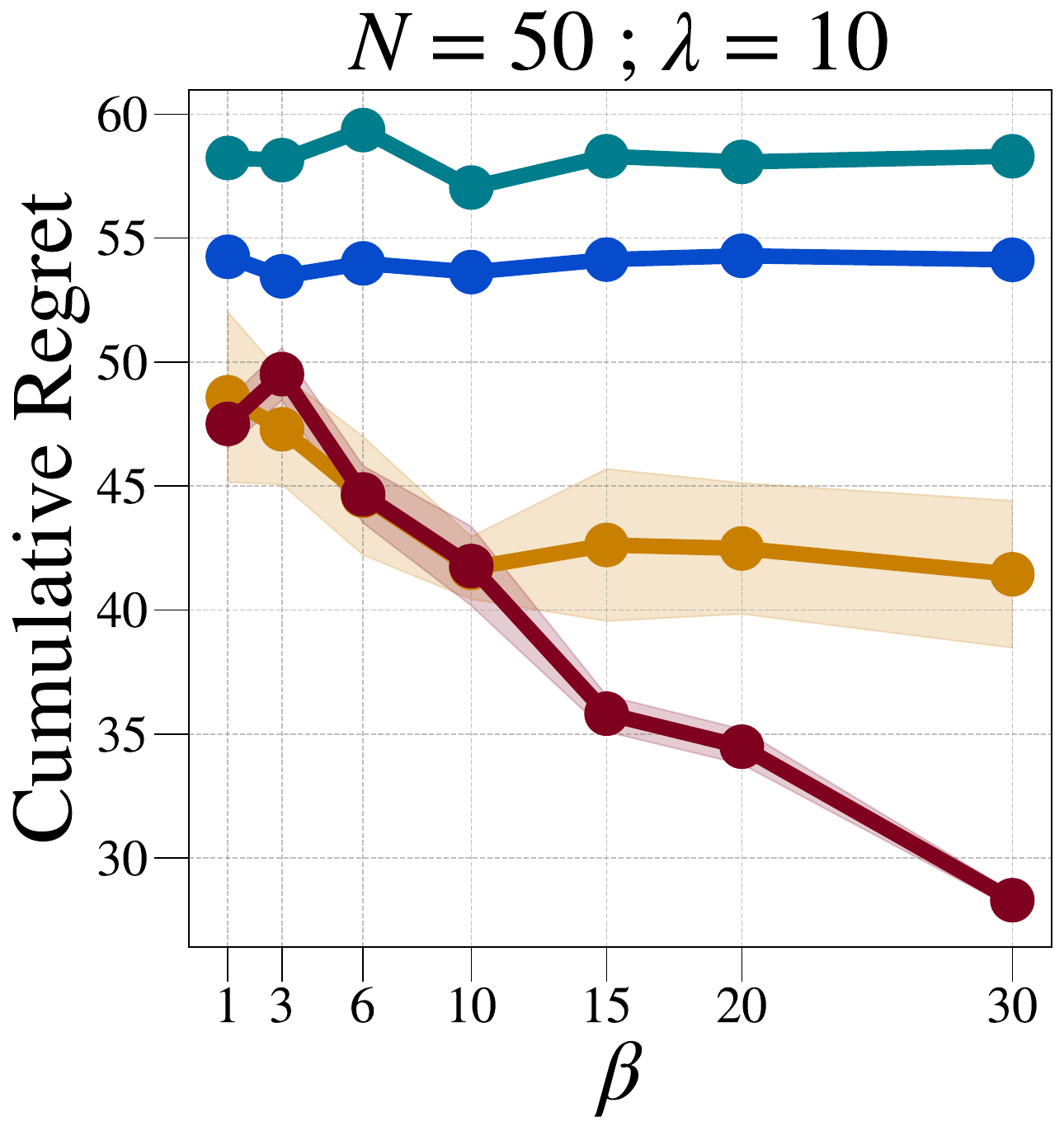}
        \label{fig:beta-n50-lamb10}
    } 
    {
        \includegraphics[height=0.18\textwidth, width=0.20\textwidth]{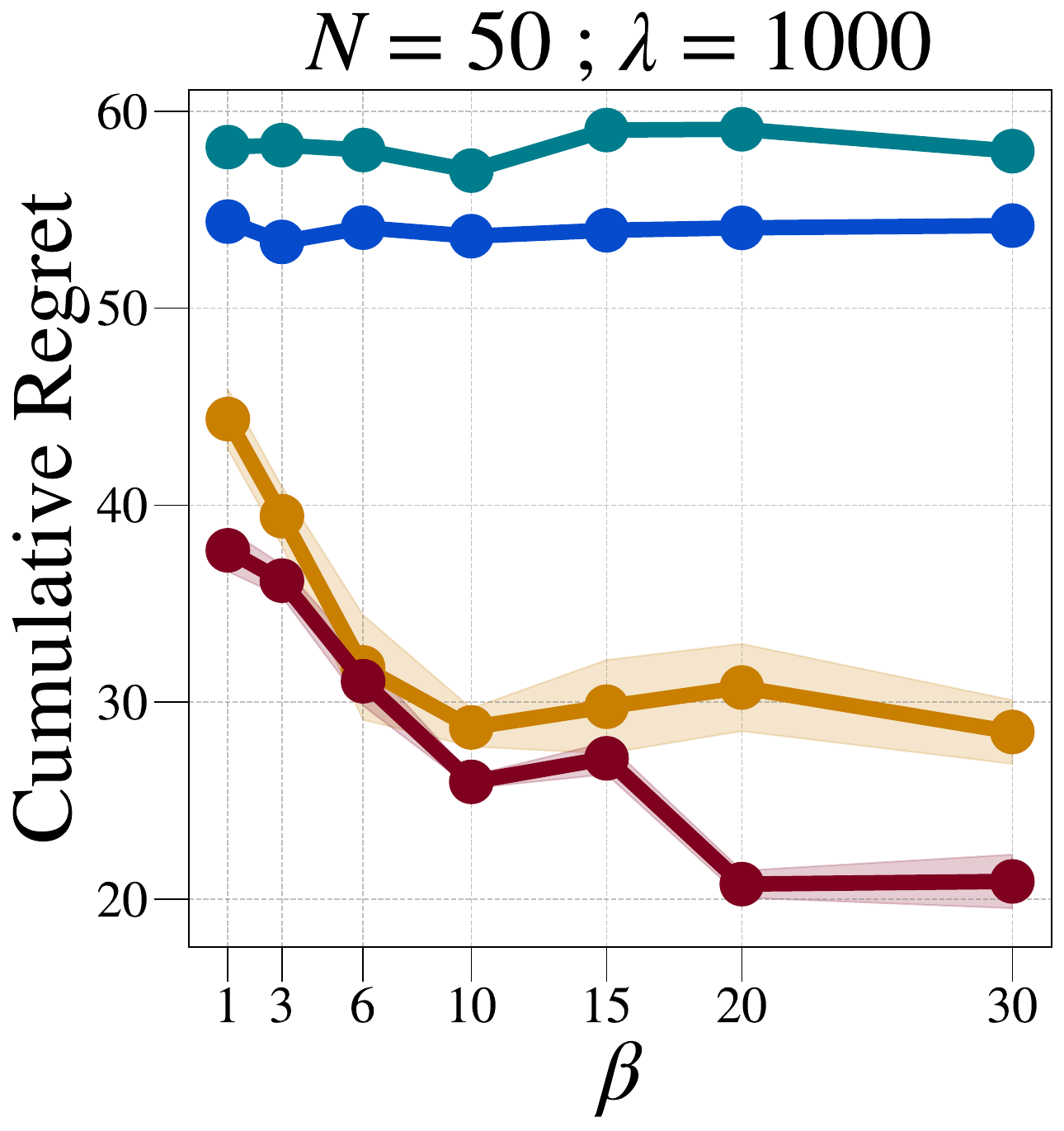}
        \label{fig:beta-n50-lamb1000}
    } 
    \newline
    \begin{center} \vspace{-0.51cm}
        (b) Varying $\beta$ for fixed $\lambda$ and $N$. 
    \end{center}
    {
        \includegraphics[height=0.18\textwidth, width=0.20\textwidth]{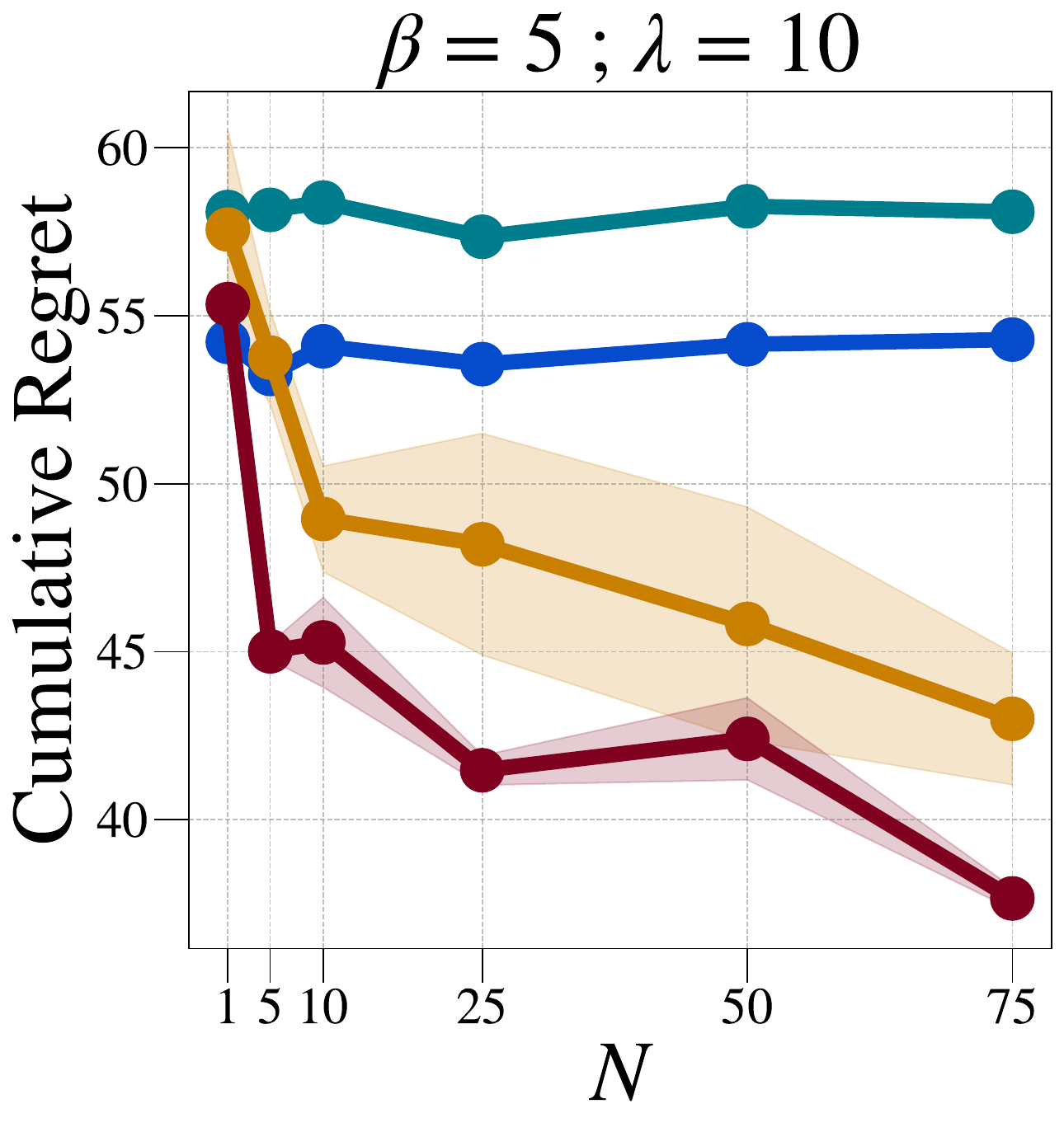}
        \label{fig:n-beta5-lamb10}
    } 
    {
        \includegraphics[height=0.18\textwidth, width=0.20\textwidth]{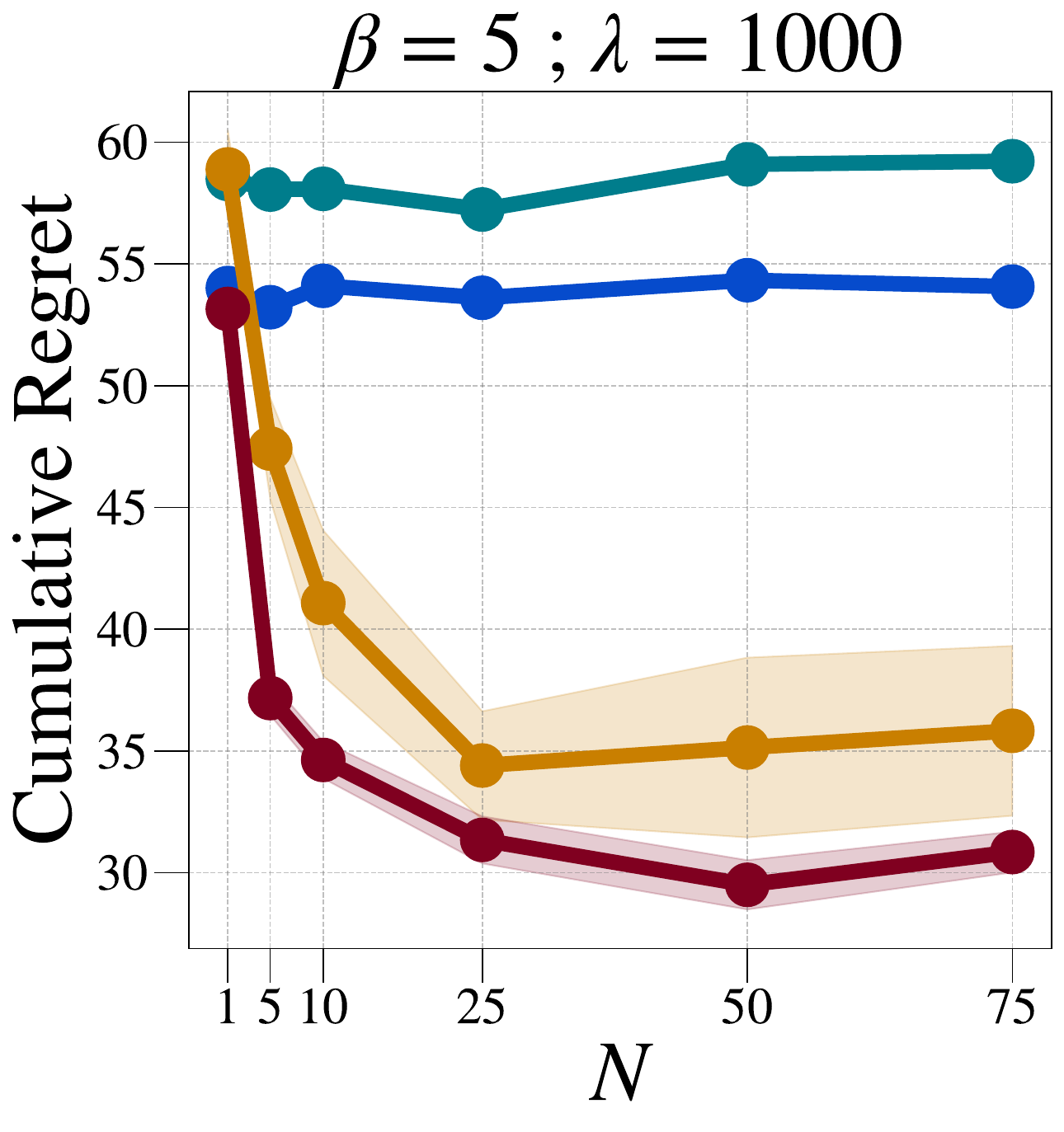}
        \label{fig:n-beta5-lamb1000}
    } 
    {
        \includegraphics[height=0.18\textwidth, width=0.20\textwidth]{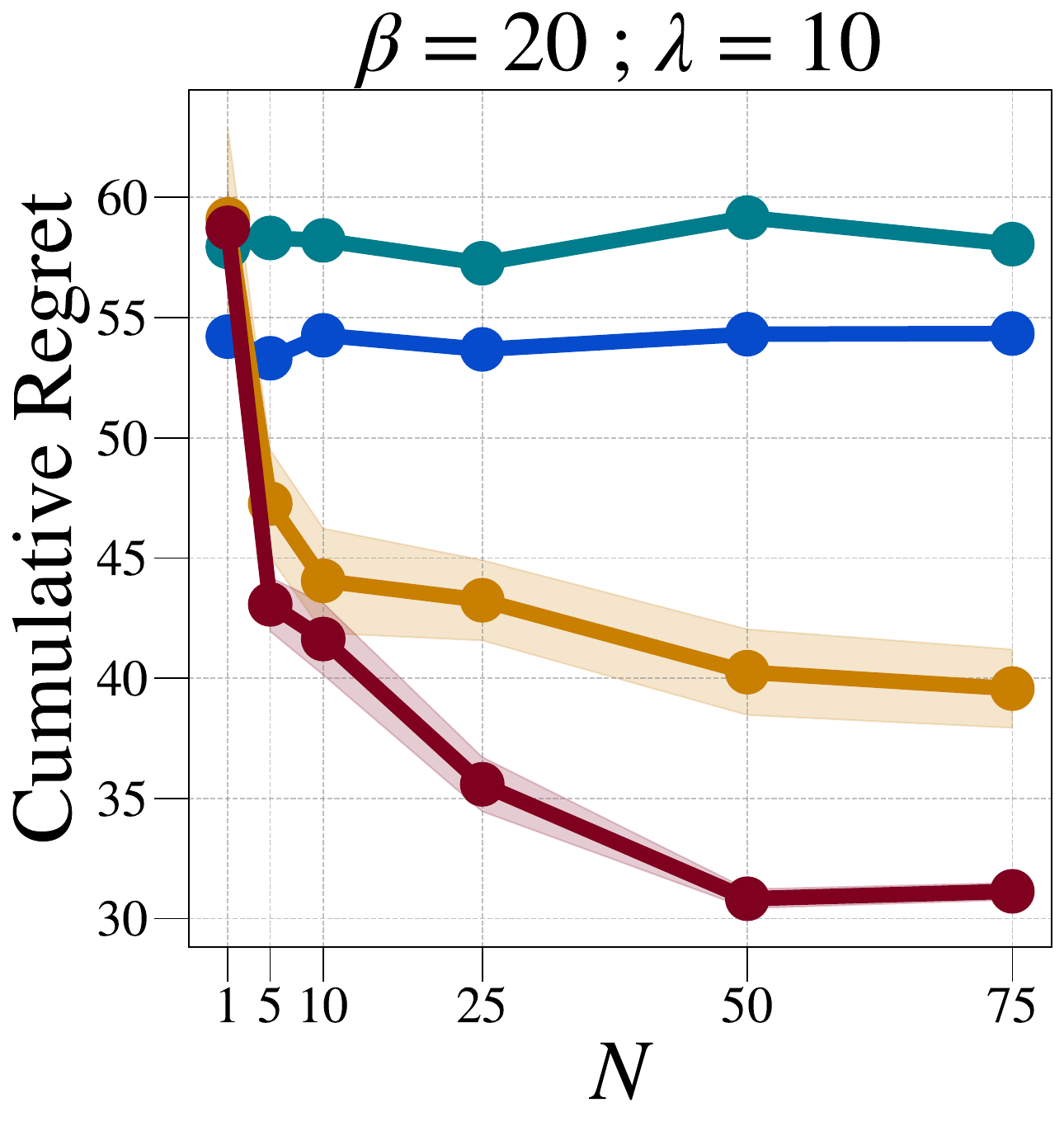}
        \label{fig:n-beta20-lamb10}
    } 
    {
        \includegraphics[height=0.18\textwidth, width=0.20\textwidth]{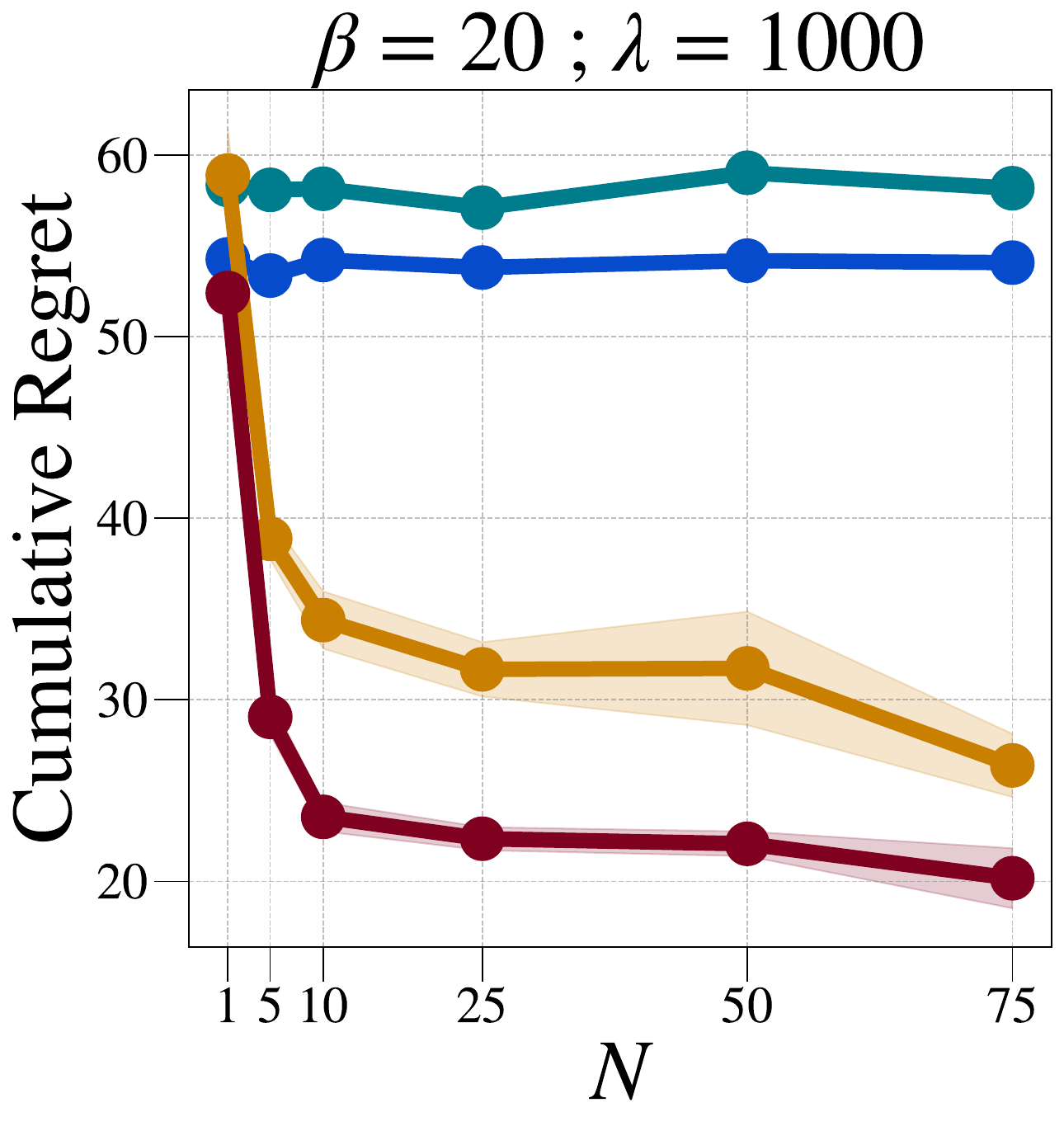}
        \label{fig:n-beta20-lamb1000}
    } 
    \newline
    \begin{center} \vspace{-0.51cm}
        (c) Varying $N$ for fixed $\beta$ and $\lambda$. 
    \end{center}
    \vspace{-0.3cm}
    \caption{Cumulative Regret comparison with varying $N$, $\beta$, and $\lambda$. Shaded region around the mean line represents 1 standard deviation over 5 independent runs.}
    \label{fig:warmTS-ablation}
\end{figure*}

\begin{figure*}[!ht]
\vspace{-0.5cm}
    \centering
    \subfloat[Flawed expert, $\lambda$=10]{\includegraphics[height=0.18\textwidth, width=0.20\textwidth]{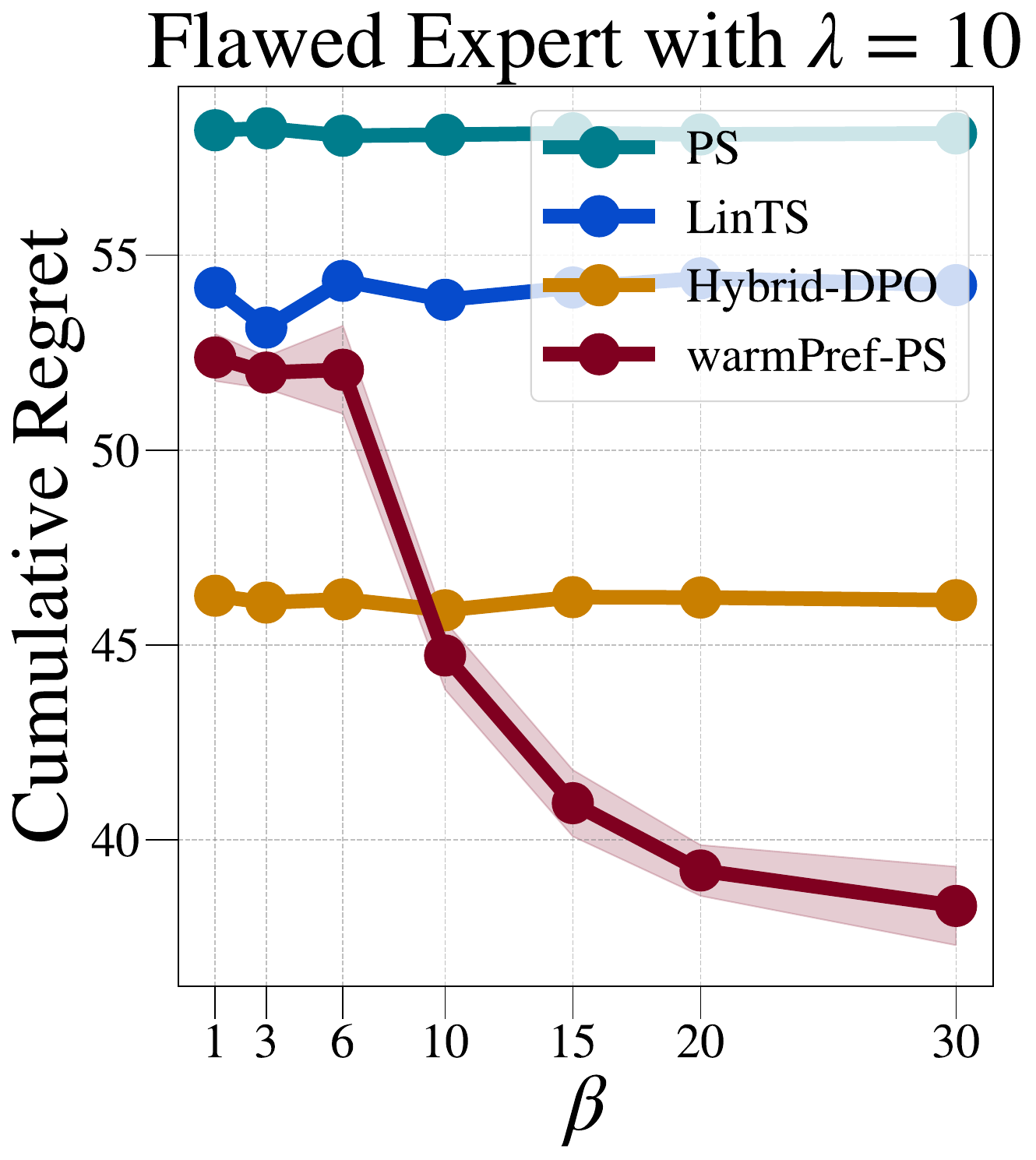}
        \label{fig:policy-lamb-n5-lamb10}
    } \hspace{-0.15cm}
    \subfloat[Misspecified $\lambda$]{\includegraphics[height=0.18\textwidth, width=0.20\textwidth]{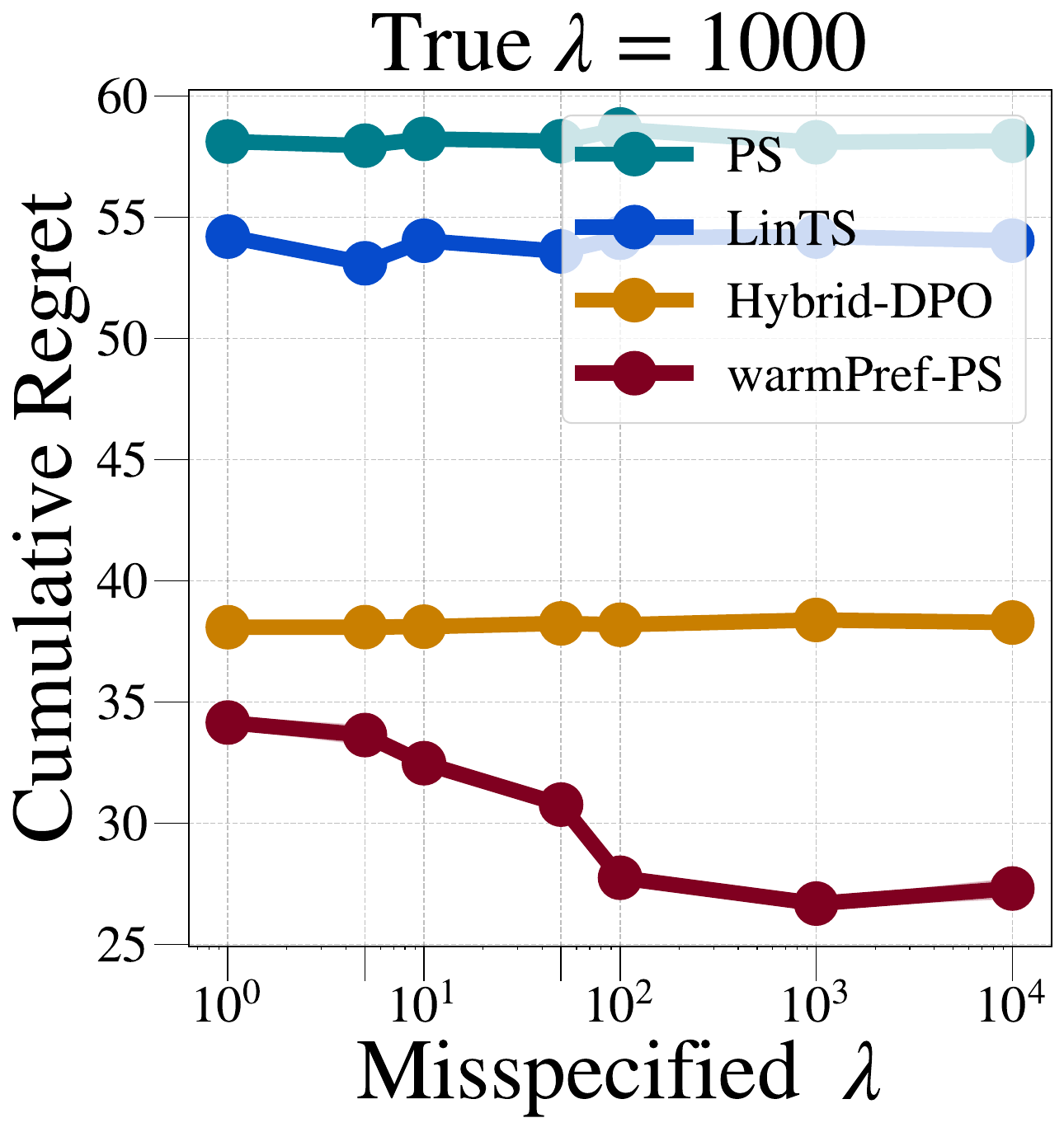}
        \label{fig:competence-lamb}
    } \hspace{-0.15cm}
    \subfloat[Misspecified $\beta$]
    {\includegraphics[height=0.18\textwidth, width=0.20\textwidth]{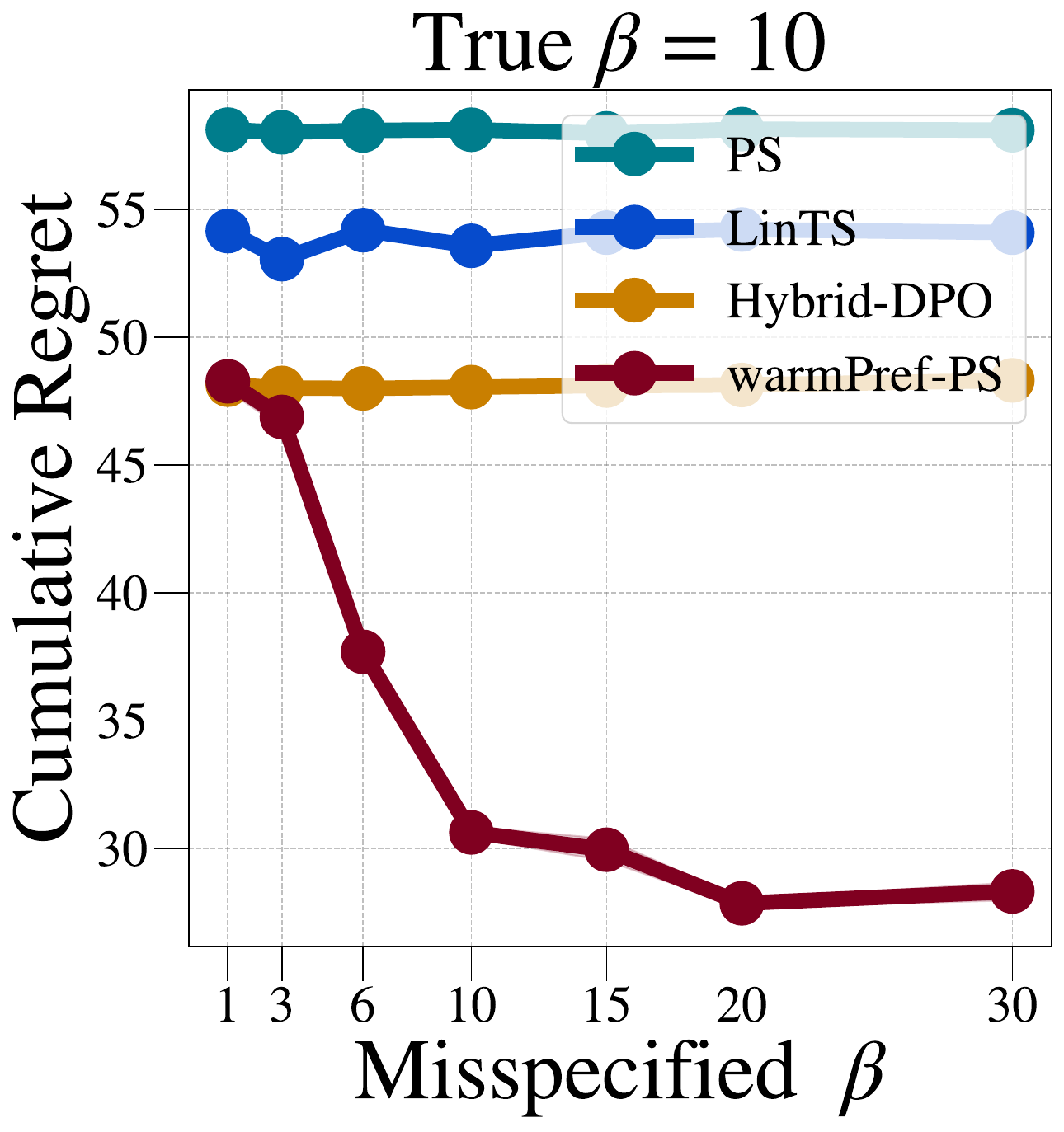}
        \label{fig:competence-beta}
    } \hspace{-0.15cm}
    \subfloat[Unknown $\beta$]
    {\includegraphics[height=0.18\textwidth, width=0.20\textwidth]{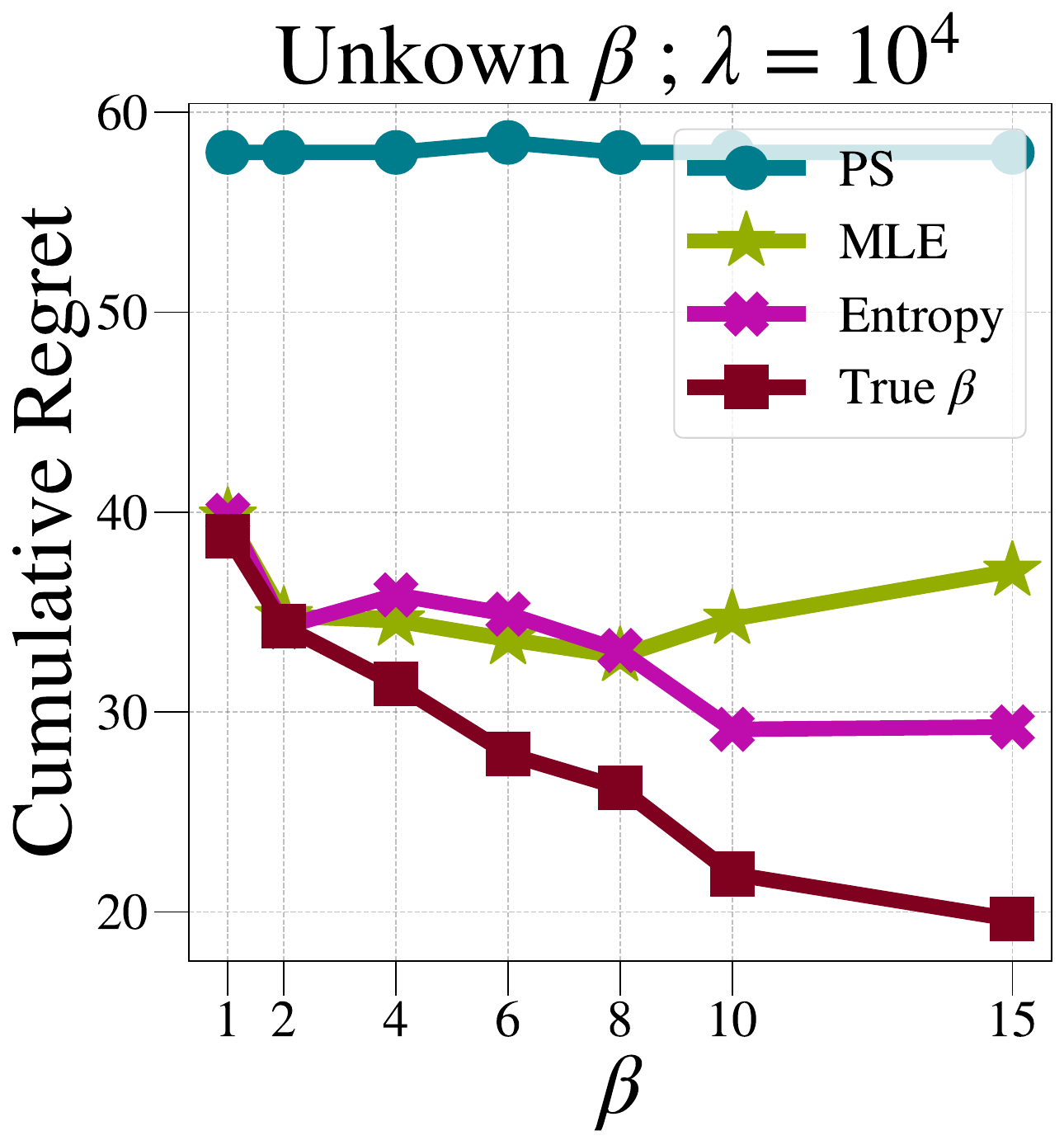}
        \label{fig:unknown-beta}
    }
    \caption{Sensitivity analysis with flawed expert policy, misspecified and unknown competence.}
    \label{fig:misspecification-unknowns-ablation}
\end{figure*}

\section{Empirical Results}
\label{sec:empirical}

We now present results on the empirical performance of the {\fontfamily{ugm}\selectfont Bootstrapped} $\mathsf{warmPref-PS}$ algorithm introduced in the previous section. We are particularly interested in the following questions: 
(i) How much is the reduction in cumulative Bayesian regret due to warm start with an offline preference dataset? (ii) How much does the competence (in terms of $\lambda$ and $\beta$) of the expert (rater) who generated the offline preference affect regret? (iii) Is $\mathsf{warmPref-PS}$ robust to mis-specification of $\lambda$ and $\beta$?

\noindent\textbf{Baselines.} \textcolor{black}{To evaluate the {\fontfamily{ugm}\selectfont Bootstrapped} $\mathsf{warmPref-PS}$ algorithm, we  consider the following baselines:
(i) \texttt{(vanilla) PS}, a PS algorithm that does not use the offline dataset, (ii) \texttt{LinTS} from \cite{li2010contextual} and \cite{pmlr-v28-agrawal13}, and (iii) Direct Preference Optimization (\texttt{DPO}) \cite{rafailov2024direct}. All plots show empirical regret. Another possible baseline can be based on optimism methods to directly learn a $\hat{\theta}$ from $\Dcal_{0}$, and use that to warm-start the online learning. However, such optimism-based algorithms are computationally intractable as they need to construct confidence sets, and then optimize over them. }

\textcolor{black}{
\begin{remark}
\texttt{DPO} cannot be trivially extended to our problem setting, i.e., fixed offline preference dataset with  online numerical reward learning. Comparing \texttt{DPO} trained only on $\Dcal_{0}$ is not fair, and hence, we consider an offline-online variant of \texttt{DPO}, called \texttt{Hybrid-DPO}, with $\epsilon$-greedy online exploration. Please see Appendix \ref{appendix:dpoipo} for more details.
\end{remark}
}

\textcolor{black}{
\begin{remark}
To the best of our knowledge, no other works that formalize learning from offline preference data and online numerical rewards exist, hence, there are no other baselines available in the literature.
`Hybrid bandit' studies cited in Related Work assume numeric rewards in both phases, making direct and fair comparison with $\mathsf{warmPref-PS}$ not possible. A conceivable proxy is to fit a reward model on the offline preference data, convert preferences into pseudo-rewards, and benchmark algorithms on this reward-based offline dataset. However, converting feedback modalities compromises the validity of any fair comparison.
\end{remark}
}

\noindent\textbf{Evaluation protocol.} Unless specified otherwise, for all experiments, we have $K=10$ arms, dimension $d=4$, $\lambda = 100$, $\beta = 10$, dataset size $N=20$, and horizon $T=100$. We averaged over 5 runs (with random seeds). For easy interpretation, we let $\mu \sim \mathrm{Unif}(\cdot)$. \textcolor{black}{Since, to the best of our knowledge, there are no open-sourced datasets available, we work with synthetic datasets.}



\textbf{Value of Offline Preferences.}
We first aim to understand the impact of the offline preference dataset $\Dcal_{0}$ on the performance of $\mathsf{warmPref-PS}$ as three parameters, $\beta$, $\lambda$ and $N$ vary. Figures \ref{fig:warmTS-ablation}(a) shows that as $\lambda$ increases (the expert has a better estimate of the reward model $\theta$), the regret reduces and this reduction is substantial for the $\mathsf{warmPref-PS}$ algorithm than for the \texttt{naive-PS} algorithm (vanilla-PS and LinTS are unaffected by $\Dcal_{0}$ as they do not use the offline dataset). Figure \ref{fig:warmTS-ablation}(b) shows that (for fixed $\lambda$ and $N$) as $\beta$ increases, the regret reduces substantially. Figure \ref{fig:warmTS-ablation}(c) now fixes $\beta$ and $\lambda$, and shows that as dataset size $N$ increases, even with a `mediocre' expert ($\beta = 5$) the regret reduces substantially, and in fact by 25 to 50\% even with a very small ($N=5$) dataset size. The conclusion from these results is that even by using a small amount of offline data from a mediocre expert, the {\fontfamily{ugm}\selectfont Bootstrapped} $\mathsf{warmPref-PS}$ algorithm achieves significant reduction in regret  over the baselines.

\textbf{Sensitivity to parameter specification errors.}
The  ({\fontfamily{ugm}\selectfont Bootstrapped}) $\mathsf{warmPref-PS}$ algorithm in Section \ref{sec:practicalwarmPref-PS} requires a knowledge of expert's parameters $\beta$ and $\lambda$. In Figure \ref{fig:misspecification-unknowns-ablation}, we study the sensitivity of the algorithm's performance to errors in specification of these parameters (as well as of assuming a Bradley-Terry model for the rater). Due to space constraints, further ablation studies are provided in Appendix \ref{appendix:ablations}.
\\
(i) \textbf{Different Preference Generation Expert Policy.} Though the learning agent assumes Equation \eqref{eq:rater_preference_dist} as the expert's generative model, we consider it to actually use a deterministic greedy policy. Actions $\bar{A}_{n}^{(0)}$ and $\bar{A}_{n}^{(1)}$ are sampled, and then choose $Y_{n} = \argmax_{i \in \{0,1\}} \beta \langle \bar{A}_{n}^{(i)} , \vartheta \rangle$, where $\vartheta\sim\Ncal(\theta, \Ibf_d / \lambda^2)$. In Figure \ref{fig:misspecification-unknowns-ablation}(a), we see that even when the learning agent's assumption of the expert policy is \textit{flawed}, $\mathsf{warmPref-PS}$ outperforms the baselines.
\\
(ii) \textbf{Misspecified Competence parameters.} First, we generate the offline data with the true $\lambda=10^{3}$ but the algorithm uses a misspecified $\lambda$. Second, we generate the offline data with the true $\beta=10$ but the algorithm uses a misspecified $\beta$. Figure \ref{fig:misspecification-unknowns-ablation}(b) and \ref{fig:misspecification-unknowns-ablation}(c) show that although the performance of $\mathsf{warmPref-PS}$ decreases as the degree of flawness increases, our algorithm still outperforms the baselines.   
\\
(iii) \textbf{Unknown Competence.} As seen in Section \ref{sec:practicalwarmPref-PS}, Bayesian bootstrapping requires an input for the competence level. In practice, this is not available but, can be estimated from the offline dataset. There are many ways of estimating $\beta$ \citep{beliaev2022imitation, hao2023leveraging}, but the most common methods are: (i) Maximum Likelihood Estimation (MLE) : optimize $\beta$ over the negative log-likelihood of the $\Dcal_{0}$ and, (ii) Entropy : calculate entropy of the empirical distribution of the actions occuring in $\Dcal_{0}$, call it $\Hcal_{\Dcal_{0}}$ and use $\hat{\beta} = c / \Hcal_{\Dcal_{0}}$, where $c>0$ is a hyperparameter. We compare entropy-based method and MLE based method for $\mathsf{warmPref-PS}$ with baselines: (1) use true $\beta$ with $\mathsf{warmPref-PS}$ and, (2) vanilla PS. To isolate the effect of $\beta$, we let $\lambda = 10^{4}$. As shown in Figure \ref{fig:misspecification-unknowns-ablation}(d), although performance degrades due to estimation, $\mathsf{warmPref-PS}$ still beats baselines. 


\section{Conclusion}
\label{sec:conclusion}

In this paper, we proposed $\mathsf{warmPref-PS}$, a Bayesian posterior sampling-based algorithm that efficiently incorporates offline preference data to warm-start the online learning phase. We provide theoretical foundations for bridging the gap between fixed, offline preferences and online learning. 
We further introduced Bootstrapped $\mathsf{warmPref-PS}$, a computationally tractable extension designed to handle large-scale environments. Our theoretical results and empirical evaluations demonstrate the robustness and superior performance of $\mathsf{warmPref-PS}$. While additional work is needed to refine our approach for RLHF, we provide a promising foundation for further development in this space.



\clearpage
\bibliography{references}
\bibliographystyle{plainnat}

\newpage
\clearpage
\appendix

\section{Appendix}
\label{sec:appendix}

This appendix is structured as follows.

\vspace{3cm}
{
\renewcommand{\arraystretch}{1.15}
\begin{table*}[ht]
\centering
\fontsize{10}{11}\selectfont
\begin{tabular}{c|p{11.5cm}}
\hline
Section \ref{appendix:twoactions} & First building block of main result Theorem \ref{theorem:finalmultipleactionssamplecomplexity}. Deals with two actions and understanding comparison noises. Contains Lemma \ref{lemma:twoactionssamplecomplexity} and proofs. \\
Section \ref{appendix:multipleactionsinfinitebeta} & Second building block. Deals with multiple actions but no comparison noises i.e. $\beta \to \infty$. Contains Lemma \ref{lemma:multipleactionssamplecomplexity} and proofs. \\
Section \ref{appendix:multipleactionsfinitebeta} & Final building block. Combines the results from Lemmas \ref{lemma:twoactionssamplecomplexity} and \ref{lemma:multipleactionssamplecomplexity}, and gives proof of Theorem \ref{theorem:finalmultipleactionssamplecomplexity}. \\
Section \ref{appendix:regretanalysis} & Concerns regret analysis, and contains proofs of Lemma \ref{lemma:priordependentinformationset} given in Lemmas \ref{lemma:f1informative} and \ref{lemma:cardinalitysmall}. \\
Section \ref{appendix:surrogatelossfunction} & Contains details on Bayesian bootstrapping of $\mathsf{warmPref-PS}$ and proof of Lemma \ref{th:mapestimatelemma}. \\
Section \ref{appendix:warmTSOF} & Gives proof of concept of \texttt{warmTSOF} algorithm and experimental results. \\
Section \ref{appendix:dpoipo} & Gives training details of DPO and IPO. \\
Section \ref{appendix:ablations} & More ablation studies continued from Section \ref{sec:empirical}. \\
\hline
\end{tabular}
\label{table:appendixcontents}
\end{table*}
}

\newpage

\subsection{Understanding Two Actions and Finite Deliberateness}
\label{appendix:twoactions}

In the building block towards Theorem \ref{theorem:finalmultipleactionssamplecomplexity}, we consider the case with $K=2$ and hence $\Acal = \{a_0, a_1 \}$, which means $\bar{A}_i^{(0)}=a_0$ and $\bar{A}_i^{(1)}=a_1$ for all $i \in [N]$. Here, we  focus on understanding how the comparison noises, due to finite deliberateness, affect the results. In other words, we see how the effect of deliberateness results in more than one sample being required to determine the optimal action with high probability. 

Essentially, given an offline dataset $\Dcal_0$, we construct a \emph{warm}-posterior of the likelihood that an action is optimal. Based on this posterior over the actions, we can construct an action subset $\UD \subset \Acal$ with $|\UD|=1$.

\subsubsection{Constructing the Algorithm}

In this part, we deal with the question that given an offline dataset $\Dcal_0$, how to develop an algorithm to constructing an action subset $\Ucal \subset \Acal$ with $|\Ucal|=1$.

For this, we need to calculate the posterior distribution of an action being optimal given the offline dataset $\Dcal_{0}$. If $P(a_{0} = A^{\star} \given \Dcal_{0}) > P(a_{1} = A^{\star} \given \Dcal_{0})$, then $\Ucal = \{a_{0}\}$, else $\Ucal = \{ a_{1} \}$. Let $p_{0} := \exp\left(\beta\left\langle a_{0}, \vartheta\right\rangle\right)$ and $p_{1} := \exp\left(\beta\left\langle a_{1}, \vartheta\right\rangle\right)$. So, 

\begin{equation}
\begin{aligned}
    P(a_{0} = A^{\star} \given \Dcal_{0}) &= \frac{P(\Dcal_{0} \given a_{0} = A^{\star}) \cdot P(a_{0} = A^{\star})}{P(\Dcal_{0})} &\\
    &= \frac{\displaystyle\int P(\Dcal_{0} \given a_{0} = A^{\star} \,;\, \vartheta) \, d\vartheta \cdot P(a_{0} = A^{\star})}{\displaystyle\int \displaystyle\int P(\Dcal_{0} \given \beta, \lambda) \, d\lambda d\beta} &\\
    &= \frac{\displaystyle\int \bigg( \frac{p_{0}}{p_{0}+p_{1}} \bigg)^{N} \, d\vartheta \cdot P(a_{0} = A^{\star} \, ; \, \theta_{0})}{\displaystyle\int\displaystyle\int P(\Dcal_{0} \given \beta, \lambda) \, d\lambda d\beta} & \big(\theta_{0} \sim \Ncal(\mu_{0}, \Sigma_{0})\big) \\
    &= \frac{\displaystyle\int \bigg( \frac{p_{0}}{p_{0}+p_{1}} \bigg)^{N} \, d\vartheta \cdot P \big(\langle a_{0}, \theta_{0} \rangle \geq \langle a_{1}, \theta_{0} \rangle \big)}{\displaystyle\int\displaystyle\int P(\Dcal_{0} \given \beta, \lambda) \, d\lambda d\beta}  &\\
    &= \frac{\displaystyle\int \bigg( \frac{p_{0}}{p_{0}+p_{1}} \bigg)^{N} \, d\vartheta}{\displaystyle\int\displaystyle\int P(\Dcal_{0} \given \beta, \lambda) \, d\lambda d\beta} \cdot P \big(\langle a_{0} - a_{1}, \theta_{0} \rangle \geq 0 \big)  &\\
    &= \frac{\displaystyle\int \bigg( \frac{p_{0}}{p_{0}+p_{1}} \bigg)^{N} \, d\vartheta}{\displaystyle\int\displaystyle\int P(\Dcal_{0} \given \beta, \lambda) \, d\lambda d\beta} \cdot \bigg( 1 - \Phi \bigg(-\frac{(a_{0} - a_{1})^T \mu_{0}}{\sqrt{(a_{0} - a_{1})^T \Sigma_{0} (a_{0} - a_{1})}}\bigg) \bigg) &\\
\end{aligned}
\label{eq:parti}
\end{equation}

, where $\Phi$ is the CDF of the standard normal distribution. Similar expression follows for $P(a_{1} = A^{\star} \given \Dcal_{0}) = 1 - P(a_{0} = A^{\star} \given \Dcal_{0})$.

\subsubsection{Limiting Behaviour of the Algorithm for Optimal Expert}

Here, we see that under our specified algorithm, for any offline data size $N \geq 1$, as $\beta, \lambda \rightarrow \infty$, $\Ucal \rightarrow \{ A^*\}$ almost surely. It is easy to see this. As $\lambda \to \infty$, we have $\vartheta \to \theta_{0}$. Then, 
$$
\lim_{\beta \to \infty, \vartheta \to \theta_{0}} \bigg( \frac{p_{0}}{p_{0}+p_{1}} \bigg)^{N} = \lim_{\beta \to \infty, \vartheta \to \theta_{0}} \bigg( \frac{1}{1 + e^{- \beta \langle a_{0} - a_{1}, \vartheta \rangle}} \bigg)^{N}.
$$

Now observe that if $\beta, \lambda \to \infty$, if $a_{1} = A^{\star}$, then $\lim_{\lambda \to \infty} \langle a_{0} - a_{1}, \vartheta \rangle \leq 0 \implies \lim_{\beta, \lambda \to \infty} \bigg( \frac{p_{0}}{p_{0}+p_{1}} \bigg)^{N} \to 0 \implies P(a_{0} = A^{\star} \given \Dcal_{0}) \to 0$. Same holds for when $\beta, \lambda \to \infty$ and if $a_{0}$ is optimal. Which means the specified decision rule above converges with $\Ucal \to \{A^{\star}\}$ almost surely.

\subsubsection{Limiting Behaviour of the Algorithm for Large Datasets}

Here, we show that under our specified algorithm, for any finite $\beta > 0$, as $\lambda, N \rightarrow \infty$, $\Ucal \rightarrow \{ A^*\}$ almost surely.

For this, we just calculate the ratio $\lim_{\lambda,N \to \infty} \frac{P(a_{0} = A^{\star} \given \Dcal_{0})}{P(a_{1} = A^{\star} \given \Dcal_{0})}$ and check whether it tends to zero or infinity. Let $x := \frac{(a_{0} - a_{1})^T \mu_{0}}{\sqrt{(a_{0} - a_{1})^T \Sigma_{0} (a_{0} - a_{1})}}$. So then,

\begin{align*}
    \lim_{\lambda,N \to \infty} \frac{P(a_{0} = A^{\star} \given \Dcal_{0})}{P(a_{1} = A^{\star} \given \Dcal_{0})} &= \lim_{\vartheta \to \theta_{0}, N \to \infty} \frac{\big(\frac{p_{0}}{p_{0}+p_{1}} \big)^{N} \cdot (1-\Phi(-x))}{\big(\frac{p_{1}}{p_{0}+p_{1}} \big)^{N} \cdot (1-\Phi(x))} \\
    &= \lim_{\vartheta \to \theta_{0}, N \to \infty} \bigg(\frac{p_{0}}{p_{1}} \bigg)^{N} \cdot \frac{\Phi(x)}{1-\Phi(x)} \\
    &= \lim_{N \to \infty} [ \exp(\beta \langle a_{0} - a_{1}, \theta_{0} \rangle )]^{N} \cdot \frac{\Phi(x)}{1-\Phi(x)}
\end{align*}

Now, we can apply the same argument of, if $a_{0} = A^{\star}$, then $\langle a_{0} - a_{1}, \theta_{0} \rangle \geq 0$ to see that the above expression tends to positive infinity for any finite $\beta > 0$. Hence, we can construct $\Ucal = \{ A^{\star} \}$ almost surely.

\subsubsection{Sample Complexity for Finite Deliberateness}

In this part, we consider for any finite $\beta > 0$, as $\lambda \to \infty$, and any given $\epsilon \in (0, 1)$, under our specified algorithm, how large does $N$ need to be to ensure $P(\Ucal = \{ A^*\}) \geq 1 - \epsilon$?

\begin{restatable}{lemma}{twoactionssamplecomplexity}
\label{lemma:twoactionssamplecomplexity} 
For an action set $\Acal = \{a_0, a_1 \}$ and any finite $\beta \in (0,\infty)$, with $\lambda \to \infty$ and for some $\epsilon \in (0, 1)$, the size of the offline dataset to ensure $\UD = \{A^{\star}\}$ and hence $(1-\epsilon)$-informative is:
\begin{equation}
N \geq \frac{\ln \bigg( \big(\frac{1}{\epsilon}-1 \big) \big(\frac{1}{\Phi(x)}-1 \big) \bigg)}{\beta \langle a_{0} - a_{1}, \theta_{0} \rangle} \, ,
\label{eq:appendixtwoactionssamplecomplexity}
\end{equation}
where $x := \frac{(a_{0} - a_{1})^T \mu_{0}}{\sqrt{(a_{0} - a_{1})^T \Sigma_{0} (a_{0} - a_{1})}}$, and $\Phi(\cdot)$ is the standard Normal CDF.
\end{restatable}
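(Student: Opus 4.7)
The plan is to build directly on the closed-form posterior calculation already carried out in Section~\ref{appendix:twoactions}. Equation~\eqref{eq:parti} expresses $P(a_0 = A^\star \mid \Dcal_0)$ as a product of (i) an integral over the rater's parameter $\vartheta$ of the Bradley--Terry likelihood $(p_0/(p_0+p_1))^N$, and (ii) the prior ``gap'' factor $1-\Phi(-x) = \Phi(x)$, where $x = (a_0-a_1)^T\mu_0 / \sqrt{(a_0-a_1)^T\Sigma_0(a_0-a_1)}$. The symmetric expression holds for $a_1$. Taking $\lambda \to \infty$ concentrates the Gaussian prior over $\vartheta$ at $\theta_0$, so by dominated convergence (the integrand is bounded by $1$) the $\vartheta$-integral collapses to $\bigl(1/(1+e^{-\beta\langle a_0-a_1,\theta_0\rangle})\bigr)^N$ for the numerator and the analogous expression for the denominator. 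The normalizing constant is shared and cancels in the ratio.

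Next, I would use the decision rule from Section~\ref{appendix:twoactions}, which defines $\UD$ as the singleton containing whichever action has the larger posterior probability of being optimal. Without loss of generality assume $a_0 = A^\star$, so $\langle a_0-a_1,\theta_0\rangle \ge 0$ (the opposite labeling is handled by symmetry, since the computation is invariant under swapping $a_0 \leftrightarrow a_1$). Then $\{\UD = \{A^\star\}\}$ coincides with the event $\{P(a_0=A^\star\mid\Dcal_0) \ge P(a_1=A^\star\mid\Dcal_0)\}$, and to make this posterior probability exceed $1-\epsilon$ it suffices that the posterior odds ratio satisfy
\begin{equation*}
\frac{P(a_0 = A^\star \mid \Dcal_0)}{P(a_1 = A^\star \mid \Dcal_0)} \;\geq\; \frac{1-\epsilon}{\epsilon}.
\end{equation*}
Combining the limits from the previous step, the posterior odds ratio equals $e^{N\beta\langle a_0-a_1,\theta_0\rangle}\cdot \Phi(x)/(1-\Phi(x))$, a deterministic quantity in the $\lambda\to\infty$ limit.

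Finally, I would take logarithms of the odds inequality to obtain
\begin{equation*}
N\,\beta\,\langle a_0-a_1,\theta_0\rangle \;\geq\; \ln\!\left(\frac{1-\epsilon}{\epsilon}\cdot\frac{1-\Phi(x)}{\Phi(x)}\right) \;=\; \ln\!\left(\Big(\tfrac{1}{\epsilon}-1\Big)\Big(\tfrac{1}{\Phi(x)}-1\Big)\right),
\end{equation*}
and solve for $N$ to recover exactly the bound in~\eqref{eq:appendixtwoactionssamplecomplexity}. The calculation is a direct bookkeeping exercise once the posterior ratio has been put in closed form, so there is no single ``hard'' step. The main care points are (a) justifying the interchange of the $\lambda\to\infty$ limit with the $\vartheta$-integral, for which the $[0,1]$-boundedness of the Bradley--Terry factor makes dominated convergence immediate, and (b) verifying that the WLOG relabeling $a_0=A^\star$ is legitimate, which follows from the manifest $a_0 \leftrightarrow a_1$ symmetry of the construction in Section~\ref{appendix:twoactions}: the sign of $\langle a_0-a_1,\theta_0\rangle$ always matches the side on which the posterior concentrates, so the derived bound applies with the appropriate labeling in either case.
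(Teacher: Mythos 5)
Your proposal is correct and follows essentially the same route as the paper's own proof: both start from the closed-form posterior ratio in equation~\eqref{eq:parti}, send $\lambda \to \infty$ so that $\vartheta \to \theta_0$ and the ratio becomes $e^{N\beta\langle a_0-a_1,\theta_0\rangle}\,\Phi(x)/(1-\Phi(x))$, impose the odds threshold $(1-\epsilon)/\epsilon$, and solve for $N$. The only difference is that you make explicit the dominated-convergence justification for passing the limit through the $\vartheta$-integral and the WLOG symmetry argument, both of which the paper leaves implicit.
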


\begin{proof}
\label{proof:twoactionssamplecomplexity}

Assume $A^{\star} = a_{0}$. Then, we want $P(a_{0} = A^{\star} \given \Dcal_{0}) > 1-\epsilon$ and $P(a_{1} = A^{\star} \given \Dcal_{0}) < \epsilon$. Letting Let $x := \frac{(a_{0} - a_{1})^T \mu_{0}}{\sqrt{(a_{0} - a_{1})^T \Sigma_{0} (a_{0} - a_{1})}}$ same as before and taking the ratio of these as $\lambda \to \infty$ for a finite $\beta, N > 0$, we have

\begin{equation}
\begin{aligned}
\lim_{\lambda \to \infty} \frac{P(a_{0} = A^{\star} \given \Dcal_{0})}{P(a_{1} = A^{\star} \given \Dcal_{0})} &= \lim_{\vartheta \to \theta_{0}} \frac{\big(\frac{p_{0}}{p_{0}+p_{1}} \big)^{N} \cdot (1-\Phi(-x))}{\big(\frac{p_{1}}{p_{0}+p_{1}} \big)^{N} \cdot (1-\Phi(x))} &> \frac{1-\epsilon}{\epsilon} \\
&= [\exp(\beta \langle a_{0} - a_{1}, \theta_{0} \rangle)]^{N} \cdot \frac{\Phi(x)}{1-\Phi(x)} &> \frac{1}{\epsilon} - 1 \\
& \implies N > \frac{\ln \bigg( \big(\frac{1}{\epsilon}-1 \big) \big(\frac{1}{\Phi(x)}-1 \big) \bigg)}{\beta \langle a_{0} - a_{1}, \theta_{0} \rangle}. & 
\end{aligned}
\end{equation}

Without loss of generality, similar argument holds for if $A^{\star} = a_{1}$.
\end{proof}

\subsection{Understanding Multiple Actions and Infinite Deliberateness}
\label{appendix:multipleactionsinfinitebeta}

In this building block towards Theorem \ref{theorem:finalmultipleactionssamplecomplexity}, we focus on the case with $\lambda = \beta = \infty$. In other words, there are no comparison noises. Moreover, as before, the two actions $\bar{A}_n^{(0)}$ and $\bar{A}_n^{(1)}$ are i.i.d. sampled from a distribution $\mu$ over $\Acal$. With this, we understand how this sampling distribution $\mu$ affects the results.

For a finite dataset $\Dcal_{0}$ of size $N$, let $\UD \subset \Acal$ be the set consisting of all unique actions occurring in $\Dcal_{0}$. Then, the informative set $\UD$ can be constructed with two types of actions : (i) actions not appearing in $\UD$ (ii) actions occurring in $\UD$ that have not `lost' the comparison with any another action. We begin by constructing an algorithm for this analysis.

\subsubsection{Developing the Algorithm}
\label{sec:developingalgo}

For a finite dataset $\Dcal_{0}$ of size $N$, let $\Ucal_{N} \subset \Acal$ be the set consisting of all unique actions occurring in $\Dcal_{0}$. Then, $\Ucal$ can be constructed with two types of actions : (i) actions not appearing in $\Ucal_{N}$ (ii) actions occurring in $\Ucal_{N}$ that have not `lost' the comparison with any another action. For this, let $\Ccal_{i}$ be the set of comparisons from $\Dcal_{0}$ involving action $a_{i}$ i.e. $\Ccal_{i} = \big\{ \big(\bar{A}_n^{(0)}, \bar{A}_n^{(1)}, Y_n \big) \; ; \; \bar{A}_n^{(0)} = a_{i} \; \text{or}  \; \bar{A}_n^{(1)} = a_{i} \; , \; n \in [N] \big\}$. Hence, construct $\Ucal = (\Acal \setminus \Ucal_{N}) \cup \Wcal_{\Ucal_{N}}$, where $\Wcal_{\Ucal_{N}} := \{a_{i} \in \Ucal_{N} \; ; \; Y_{j} = a_{i} \forAll \big(\bar{A}_j^{(0)}, \bar{A}_j^{(1)}, Y_j \big) \in \Ccal_{i} \}$.

Note here that the conditions mentioned above are tight conditions, which can be analyzed in the case of uniform action sampling distribution ($\mu =$ \texttt{Uniform}$(\cdot)$). However, in the case of an arbitrary distribution such analysis is intractable. In this case, we then only consider the sufficient condition to obtain complete ordering of actions. The sufficient condition to determine the optimal action with high probability is to sample each pair of actions at least once i.e. sample each of $\binom{K}{2}$ pairs once.

\subsubsection{Finding the Optimal Action Given a Large Dataset}

Here, we show that the construction procedure as described above in Part \ref{sec:developingalgo} yields in finding the optimal action given a large dataset. More formally, we show that if the action sampling distribution $\mu$ is not degenerate i.e. $\lim_{N \to \infty} P(a \in \Ucal_{N}) > 0 \forAll a \in \Acal$, then as $N \rightarrow \infty$, $\Ucal \rightarrow \{ A^*\}$ almost surely.

To see this, if $\mu$ is not degenerate, then $\lim_{N \to \infty} P(a \in \Ucal_{N}) > 0 \forAll a \in \Acal$. Then, $\lim_{N \to \infty} (\Acal \; \setminus \; \Ucal_{N}) = \emptyset$. In addition, as $N \to \infty$, for all possible pairs of actions $(a_{i}, a_{j})$ with $a_{i} \neq a_{j} \in \Acal$, we will have $\bar{A}_n^{(0)} = a_{i}$ and $\bar{A}_n^{(1)} = a_{j}$ for some $\big(\bar{A}_n^{(0)}, \bar{A}_n^{(1)}, Y_n \big) \in \Dcal_{0}$. Due to the construction of $\Wcal_{\Ucal_{N}}$, we will also have $\lim_{N \to \infty} \Wcal_{\Ucal_{N}} = \{ A^{\star} \}$ almost surely. This implies $\lim_{N \to \infty} \Ucal = \{ A^{\star} \}$.

Now that we know that the construction procedure of $\Ucal_{N}$ is principled, we wish to generalize the result for finite size of the offline dataset $\Dcal_{0}$.

\subsubsection{General Sample Complexity Analysis}
\label{appendix:generalsamplecomplexity}

In this section, we present results for the following question : In general, how large $N$ need be to ensure that $P(|\Ucal| = \mathcal{o}(K))$, or even $P(|\Ucal|= 1)$ with high probability? We aim to derive this result for an arbitrary action sampling distribution $\mu$, however we begin by analyzing the case of uniform distribution i.e. $\mu \sim$ \texttt{Uniform$(\cdot)$}.

\paragraph{General analysis of probability of picking all $n$ items in $N$ trials.} 
\label{proof:generalprobcoupon}

\mbox{}\\
\indent Here, we describe the general theoretical framework to bound the probabilities of picking \emph{all} of $n$ given items in $N$ independent trials. We first begin with a uniform distribution over each of these $n$ items and later generalize to an arbitrary distribution $\mu$. Note that we derive a general result for $n$ items, which in our case corresponds to actions ($n=K$) or action pairs ($n = \binom{K}{2}$).

\begin{itemize}
    \item \textbf{Uniform distribution.} We have $n$ items which are equally likely to be selected, so we can invoke the Stirling numbers of the second kind (or Stirling partition number) to get a bound on this probability. Stirling numbers of the second kind give the number of ways to partition a set of $u$ objects into $v$ non-empty subsets and is denoted by $S(u,v)$. For notation, we have $n$ items to be selected, $N$ as the number of trials. 

Now, let $S_i$ be all the outcomes in which an item $i$ is \emph{not} selected. For each $i$, $|S_i|=(n-1)^N$ and there are $\binom{n}{1}$ choices for $i$. For each $ j \neq i$, $|S_j \cap S_i|=(n-2)^N$ and there are $\binom{n}{2}$ choices for $(i,j)$. Continuing in this manner to count the number of outcomes missing at least $1$ number, we get

\begin{align*}
\left| \bigcup_{i=1}^{n} S_i \right|
&=\sum_{i=1}^{n} |S_i| - \sum_{j < i}| S_j \cap S_i| + \sum_{k<j<i} |S_k \cap S_j \cap S_i| - \; \dots \\
&= \binom{n}{1} (n-1)^N - \binom{n}{2}(n-2)^N + \binom{n}{3}(n-3)^N - \; \dots
\end{align*}

Since there are a total of $n^N$ total outcomes, we get the number of desired outcomes in which \emph{all} possible numbers are rolled, denoted by $\#_{\text{desired}}$ as

$$
\#_{\text{desired}} = n^N - \binom{n}{1} (n-1)^N + \binom{n}{2} (n-2)^N -\binom{n}{3} (n-3)^N + \; \dots .
$$

Thus, the probability $p_{n, N}$ of picking all $n$ items in $N$ trials is $\frac{\#_{\text{desired}}}{n^{N}}$. Hence,

\begin{equation}
\begin{aligned}
p_{n,N} &= 1-\binom{n}{1}\left(1-\frac{1}{n} \right)^N+\binom{n}{2}\left(1-\frac{2}{n} \right)^N-\binom{n}{3}\left(1-\frac{3}{n} \right)^N + \dots \\
\implies & p_{n,N} = \sum_{i=0}^{n} (-1)^i\binom{n}{i}\left(1-\frac{i}{n} \right)^N
\end{aligned}
\label{eq:uniformprobpickingall}
\end{equation}

\item \textbf{Arbitrary Distribution.}  Assume now that the actions are sampled from an action sampling distribution $\mu$. Since we are forming action pairs for comparison, denote with $\ring \mu_{k}$, the probability of sampling action pair $k := (i,j) \in [\binom{K}{2}]$ and with $\sum_{k} \ring \mu_{k}=1$, with $(i,j)$ representing the action pair $(a_i, a_j)$. Furthermore, this means that assume that $\mu_{\text{min}}^{2} \leq \ring \mu_{k} \leq \mu_{\text{max}}^{2} \forAll k$ for some arbitrary $ 0 < \mu_{\text{min}} \leq \mu_{\text{max}} < 1$.

For this problem, let $T_{i}$ denote the random number of trials needed to sample item $i$ for the first time. The total number of trials needed can be then denoted by the random variable $T = \max(T_{1}, \dots, T_{n})$.  Note that
$T_{i}$ is a geometric random variable with parameter $\ring \mu_{i}$ because each new item obtained is of type $i$ with probability $\ring \mu_{i}$, but now these variables are no more independent. Since the minimum of $T_{i}$ and $T_{j}$ is the number of trials needed to obtain either item $i$ or item $j$, it follows that for $j \neq i$, $\min(N_{i}, N_{j})$ is a geometric random variable with parameter $\ring \mu_{i} + \ring \mu_{j}$ and the same holds true for the minimum of any finite number of these random variables. Hence, we can write,

\resizebox{0.9\linewidth}{!}{$
\begin{aligned}
    \Ebb[T] &= \Ebb[ \max_{i} T_{i}] \\
    &= \sum_{i} \Ebb[T_{i}] - \sum_{i<j} \Ebb[\min(T_{i}, T_{j})] + \sum_{i<j<k} \Ebb[\min(T_{i}, T_{j}, T_{k})] - \; \dots \; + (-1)^{n+1} \Ebb[\min(T_{1}, \dots, T_{n})] \\
    &= \sum_{i} \frac{1}{\ring \mu_{i}} - \sum_{i<j} \frac{1}{\ring \mu_{i} + \ring \mu_{j}} + \sum_{i<j<k} \frac{1}{\ring \mu_{i} + \ring \mu_{j} + \ring \mu_{k}} - \; \dots \; + (-1)^{n+1} \frac{1}{\ring \mu_{1} + \dots + \ring \mu_{n}}
\end{aligned}
$}

Recall that $ \int_{0}^{\infty} e^{-tx} dx = \frac{1}{t} $. We also know the identity

\begin{align}
1 - \prod_{i=1}^{n}(1 - e^{-t_{i}x}) = \sum_{i} e^{-t_{i}x} - \sum_{i<j} e^{-(t_{i}+t_{j})x} + \; \dots \; + (-1)^{n+1}e^{-(t_{1} + \dots + t_{n})x}
\label{eq:expidentity}
\end{align}

Using the above identity, and integrating it, we get

\begin{equation}
\begin{aligned}
    \Ebb[T] = \int_{0}^{\infty} \bigg( 1 - \prod_{i=1}^{n} \big( 1 - e^{-\ring \mu_{i}x} \big) \bigg) dx
\end{aligned}
\label{eq:arbitraryksample}
\end{equation}

\end{itemize}

\begin{restatable}{lemma}{multipleactionssamplecomplexity}
\label{lemma:multipleactionssamplecomplexity} 
Let the action set be $\Acal = \{a_{0}, \, \dots \, , a_{K} \}$, with a sampling distribution $\mu$ such that $0 < \mu_{\text{min}} \leq \mu_{k} \leq \mu_{\text{max}} < 1 \forAll k \in [K]$. For the case of $\beta, \lambda \to \infty$, with some given $\epsilon \in (0, 1)$, the minimum size of the offline dataset to ensure $\UD = \{A^{\star}\}$ and hence is $(1-\epsilon)$-informative is given by
\begin{equation}
\text{Uniform} \;\, \mu \; : \; N \geq \frac{K^{2} \ln K}{\epsilon} \qquad ; \qquad \text{Arbitrary} \;\, \mu \; : \; N \gtrsim \frac{\ln K}{\mu_{\text{min}}^{2} \epsilon} 
\label{eq:multipleactionssamplecomplexity}
\end{equation}
\end{restatable}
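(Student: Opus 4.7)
The plan is to cast the problem as a (generalized) Coupon Collector on \emph{action pairs} and then to use Markov's inequality on the collection time to deliver the $1/\epsilon$ dependence stated in the lemma.

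\emph{Step 1 --- Reduction to coupons.} When $\beta,\lambda \to \infty$, each comparison in $\Dcal_0$ is an exact oracle, so following the construction in Section~\ref{sec:developingalgo}, a sufficient condition for $\UD = \{A^{\star}\}$ is that every one of the $n := \binom{K}{2}$ unordered pairs $\{a_i, a_j\}$ with $i\neq j$ appears at least once in $\Dcal_0$: this simultaneously forces $\Acal \setminus \Ucal_N = \emptyset$ and ensures every suboptimal action has lost to some comparator, so $\Wcal_{\Ucal_N} = \{A^{\star}\}$. Under i.i.d.\ sampling from $\mu$, pair $\{i,j\}$ is drawn with probability $\ring{\mu}_{\{i,j\}} = 2\mu_i\mu_j \in [\mu_{\min}^2, \mu_{\max}^2]$ (up to absolute constants); diagonal draws $\{a_i,a_i\}$ are wasted but benign and are absorbed into constants.

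\emph{Step 2 --- Uniform case.} With $\mu_i = 1/K$, the pair distribution is uniform on the $n$ off-diagonal coupons, giving the classical expectation $\Ebb[T] = n H_n \sim \tfrac{K^2}{2}\cdot 2\ln K = K^2\ln K$. Markov's inequality $P(T > N) \leq \Ebb[T]/N$ gives $P(T>N) \leq \epsilon$ whenever $N \geq K^2\ln K/\epsilon$, as desired.

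\emph{Step 3 --- Arbitrary case.} Starting from the integral identity~\eqref{eq:arbitraryksample} and using the coupon-wise monotonicity $1 - e^{-\ring{\mu}_i x} \geq 1 - e^{-\mu_{\min}^2 x}$ inside the product, the substitution $u = \mu_{\min}^2 x$ yields
\begin{equation*}
\Ebb[T] \;\leq\; \frac{1}{\mu_{\min}^2}\int_0^\infty\!\bigl(1-(1-e^{-u})^n\bigr)\,du \;=\; \frac{H_n}{\mu_{\min}^2} \;\lesssim\; \frac{\ln K}{\mu_{\min}^2},
\end{equation*}
and Markov once more produces $N \gtrsim \ln K/(\mu_{\min}^2\epsilon)$.

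\emph{Main obstacle.} The coupon arithmetic itself is textbook; the delicate steps are (i) the Step~1 reduction, which requires the deterministic-winner consequence of $\beta,\lambda\to\infty$ together with the observation that ``no action is ever beaten'' uniquely characterises $A^{\star}$ once every pair has been compared, and (ii) the Step~3 transfer of a per-coupon \emph{lower} bound $\mu_{\min}^2$ into an \emph{upper} bound on $\Ebb[T]$---this would be illegal as a statement about marginal probabilities (which must sum to one) but is valid inside the product $\prod_i(1-e^{-\ring{\mu}_i x})$, where the monotonicity propagates coupon-wise. A Chernoff-type concentration argument would sharpen $1/\epsilon$ to $\log(1/\epsilon)$, but Markov already suffices for the claim as stated.
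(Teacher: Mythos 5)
Your proposal is correct and shares the paper's overall architecture (reduce to a coupon-collector problem on the $\binom{K}{2}$ action pairs, then apply Markov's inequality to the collection time), but you diverge in two places worth noting. For the uniform case, the paper splits into two sub-events ($A^{\star}$ absent from $\Dcal_0$ versus all pairs collected), writes the exact inclusion--exclusion probability of collecting all coupons in $N$ draws, and then invokes Stirling-number and factorial approximations to extract $N \geq K^2\ln K/\epsilon$; you instead apply Markov directly to the classical expectation $\Ebb[T]=nH_n\sim K^2\ln K$, which reaches the same bound by a shorter and more transparent route, at the cost of not tracking the ``$A^{\star}$ never sampled'' sub-case separately (this is harmless, since collecting every pair is already sufficient for $\UD=\{A^{\star}\}$ under deterministic comparisons). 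For the arbitrary case, your treatment of the integral identity is actually \emph{more} rigorous than the paper's: the paper substitutes $\mu_{\min}^2$ term-by-term inside the alternating inclusion--exclusion sum, where the direction of the inequality is not monotone, whereas your coupon-wise bound $1-e^{-\ring{\mu}_i x}\geq 1-e^{-\mu_{\min}^2 x}$ inside the product followed by the substitution $u=\mu_{\min}^2 x$ gives $\Ebb[T]\leq H_n/\mu_{\min}^2$ cleanly. Your Step 1 reduction (deterministic winners as $\beta,\lambda\to\infty$ imply that once every pair has been compared, the unique never-beaten action is $A^{\star}$) matches the construction in the paper's Section on developing the algorithm, and your handling of the diagonal draws and the factor-of-two in $\ring{\mu}_{\{i,j\}}=2\mu_i\mu_j$ is consistent with the $\gtrsim$ in the statement. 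No gap.
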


\begin{proof}
\label{proof:multipleactionssamplecomplexity}
We first begin by proving the case of uniform action sampling distribution, and then extend the results to an arbitrary distribution.

\paragraph{Uniform Distribution.} \mbox{} \\

If $|\Ucal| = 1$, then $| \Acal \setminus \Ucal_{N} | = 1 \land |\Wcal_{\Ucal_{N}}| = 0 \;$ \texttt{OR} $\; | \Acal \setminus \Ucal_{N} | = 0 \land |\Wcal_{\Ucal_{N}}| = 1$.

\paragraph{Case 1.} $| \Acal \setminus \Ucal_{N} | = 1$ and $ |\Wcal_{\Ucal_{N}}| = 0$.

For the former, we simply do not want to select the optimal action while making action pairs, and hence the probability is:

$$
P(| \Acal \setminus \Ucal_{N} | = 1) \geq \bigg(1 - \frac{2}{K}\bigg)^{N}
$$

For the latter, we use Equation \eqref{eq:uniformprobpickingall} with $n = \binom{K-1}{2}$ to get

$$
P(|\Wcal_{\Ucal_{N}}| = 0) \geq \sum_{i=0}^{\binom{K-1}{2}} (-1)^i\binom{\binom{K-1}{2}}{i}\left(1-\frac{i}{\binom{K-1}{2}} \right)^N.
$$

\paragraph{Case 2.} $| \Acal \setminus \Ucal_{N} | = 0$ and $ |\Wcal_{\Ucal_{N}}| = 1$.

In this case, if all pairs are sampled at least once, the event $\{ |\Wcal_{\Ucal_{N}}| = 1 \}$ is a sufficient condition for event $\{| \Acal \setminus \Ucal_{N} | = 0 \}$ to occur. Hence, we use Equation \eqref{eq:uniformprobpickingall} with $n = \binom{K}{2}$ to get the probability as:

$$
P(|\Wcal_{\Ucal_{N}}| = 1) \geq \sum_{i=0}^{\binom{K}{2}} (-1)^i\binom{\binom{K}{2}}{i}\left(1-\frac{i}{\binom{K}{2}} \right)^N.
$$

Putting it all together, we need

$$
1 - \epsilon \leq P(| \Acal \setminus \Ucal_{N} | = 1) \cdot P(|\Wcal_{\Ucal_{N}}| = 0) + P(|\Wcal_{\Ucal_{N}}| = 1)
$$

However, the above form is intractable to solve for a closed form solution. Hence, we use the Stirling number approximation for factorials (i.e. $\ln(n!) \approx n\ln(n) - n$) and approximation of the Stirling number of second kind i.e. $S(u,v) \leq n\ln(n) - n\ln(\ln(n)) + n\ln(k)$, where $S(u,v) = \sum_{i=0}^{v} \frac{(-1)^{v-i} i^{u}}{(v-i)! i!}$. In addition, we also Stirling's approximation to the binomial as $\binom{a}{b} \approx \frac{a^{b}}{b!}$ for $a >> b$. Using these, the expression simplifies to $N \geq \frac{K^{2}\ln K}{\epsilon}$.

\paragraph{Arbitrary Distribution.} \mbox{} \\

Similar to the case of uniform distribution, we still need 

$$
1 - \epsilon \leq P(| \Acal \setminus \Ucal_{N} | = 1) \cdot P(|\Wcal_{\Ucal_{N}}| = 0) + P(|\Wcal_{\Ucal_{N}}| = 1)
$$

However, a closed form solution for the above does not exist for the case of multiple actions. Instead, we aim to derive the result based on the sufficient condition for obtaining the optimal action : if all pairs of actions are sampled at least once, we know the optimal action. For this, we just need $P(|\Wcal_{\Ucal_{N}}| = 1) \geq 1-\epsilon$.

Recalling from the analysis given above in Section \ref{proof:generalprobcoupon}, $T_{i}$ denotes the random number of trials needed to sample item $i$ for the first time. The total number of trials needed can be then denoted by the random variable $T = \max(T_{1}, \dots, T_{n})$. 

Now, since $T$ is a random variable denoting the total number of trials needed to obtain all $n$ items \emph{at least} once, it can also be viewed as the \emph{stopping time} for when the agent has collected all items. Hence, we are interested in the probability $P(T \leq N)$ i.e. the probability that this stopping time $T$ is less than the dataset size $N$. This is because the event $\{T \leq N\}$ is the event that by time (or dataset size) $N$, the agent has sampled all $n$ items. 

We then also have $P(T \leq N) = 1 - P(T > N)$. Since $N$ is non-negative, we can bound the $P(T>N)$ using a concentration inequality as below using Equation \eqref{eq:arbitraryksample}.

\begin{align}
P(T > N) &\leq \frac{\Ebb[T]}{N} = \frac{1}{N} \int_{0}^{\infty} \bigg( 1 - \prod_{i=1}^{n} \big( 1 - e^{- \ring \mu_{i}x} \big) \bigg) dx \nonumber \\
\implies P(T \leq N) &\geq 1 - \frac{1}{N} \int_{0}^{\infty} \bigg( 1 - \prod_{i=1}^{n} \big( 1 - e^{- \ring \mu_{i}x} \big) \bigg) dx \\
&\geq 1 - \frac{1}{N} \int_{0}^{\infty} \bigg( \sum_{i} e^{- \ring \mu_{i}x} - \sum_{i<j} e^{-( \ring \mu_{i}+ \ring \mu_{j})x} + \; \dots \; + (-1)^{n+1}e^{-(\ring \mu_{1} + \dots + \ring \mu_{n})x} \bigg) dx \tag{using Identity \eqref{eq:expidentity}} \nonumber \\
&\geq 1 - \frac{1}{N \mu_{\text{min}}^{2}} \bigg( \frac{\binom{n}{1}}{1} - \frac{\binom{n}{2}}{2} + \dots + (-1)^{n+1} \frac{\binom{n}{n}}{n} \bigg) \\
&\geq 1 - \frac{H_{n}}{N \mu_{\text{min}}^{2}}
\label{eq:arbitrarydistprob} 
\end{align}

, where $H_{n}$ is the Harmonic sum of the first $n$ natural numbers. Now, we wish that $P(T \leq N) \geq 1 - \epsilon$. Using the bound above, we find that we need

$$
N \geq \frac{H_{n}}{\mu_{\text{min}}^{2} \epsilon} \gtrsim \frac{\ln K}{\mu_{\text{min}}^{2} \epsilon}.
$$

\end{proof}

\subsection{Understanding Multiple Actions and Finite Deliberateness}
\label{appendix:multipleactionsfinitebeta}

We can break the expected number of samples needed to find an optimal action into parts and then use a generalized version of the Coupon Collection problem, solution of which is known \cite{newman1960double}. The first deals with using Equation \eqref{eq:appendixtwoactionssamplecomplexity} to find the minimum samples needed to determine the more likely optimal action between two actions (one pair) with high probability of $(1-\frac{\epsilon}{2n})$, where $n$ is total number of items. Here, $n$ would be the number of pairs i.e. $n = \binom{K}{2}$. The second part deals with finding the bound on total number of samples needed to determine the more likely optimal action for \emph{every} such pair.

\paragraph{Finding the better action in the $i^{\text{th}}$ item (pair).} 

The expected number of samples needed to find the better action can be calculated using Equation \eqref{eq:appendixtwoactionssamplecomplexity}. Call this number $k_{i}$. So,

$$
k_{i} \geq \frac{\ln \bigg( \big(\frac{2\binom{K}{2}}{\epsilon}-1 \big) \big(\frac{1}{\Phi(x_{i})}-1 \big) \bigg)}{\beta \langle a_{i}^{(0)} - a_{i}^{(1)}, \theta_{0} \rangle},
$$

where $x_{i} := \frac{(a_{i}^{(0)} - a_{i}^{(1)})^T \mu_{0}}{\sqrt{\big(a_{i}^{(0)} - a_{i}^{(1)}\big)^T \Sigma_{0} \big(a_{i}^{(0)} - a_{i}^{(1)}\big)}}$, $\Phi(\cdot)$ is the CDF of the standard Normal distribution, and $(a_{i}^{(0)}, a_{i}^{(1)})$ are the actions of the $i^{\text{th}}$ pair.

\finalmultipleactionssamplecomplexity*

\begin{proof}
\label{proof:finalmultipleactionssamplecomplexity}

Our sample complexity analysis to achieve no Bayesian regret can be broken down into three main building blocks:
\begin{itemize}[leftmargin=*]
\item Appendix \ref{appendix:twoactions} and Lemma \ref{lemma:twoactionssamplecomplexity} : there are only two actions ($|\Acal| = 2$) but we have finite deliberateness ($\beta < \infty$). 
\item Appendix \ref{appendix:multipleactionsinfinitebeta} and Lemma \ref{lemma:multipleactionssamplecomplexity}: there are many actions ($|\Acal| = K$) but we have very high deliberateness ($\beta \to \infty$).
\item Appendix \ref{appendix:multipleactionsfinitebeta} : there are many actions ($|\Acal| = K$) and finite deliberateness, where we combine the results from the first two cases. In this case, we can break the expected number of samples needed to find an optimal action into two parts. The first deals with using Lemma \ref{lemma:twoactionssamplecomplexity} to find the minimum samples needed to determine the more likely optimal action between two actions (one pair) with high probability of $(1-\frac{\epsilon}{2n})$, where $n$ is total number of items. Here, $n$ would be the number of pairs i.e. $n = \binom{K}{2}$. The second part deals with finding the bound on total number of samples needed to determine the more likely optimal action for \emph{every} such pair.
\end{itemize}

With this in mind, we prove the result below.

\cite{newman1960double} gave a generalization of the coupon collector's problem when $m$ copies of each coupon need to be collected with total coupons being $n$. Let $T_{m}$ be the first time $m$ copies of \emph{each} coupon are collected. We then know that $\Ebb[T_{m}] = n \ln n + (m-1)n \ln(\ln n)$. 

Using similar analysis as before for a general action pair sampling distribution $\mu$ with $\mu_{\text{min}} \leq \mu_{i} \leq \mu_{\text{max}}$ for some $\mu_{\text{min}}, \mu_{\text{max}} \in [0,1)$ for all items (pairs) $i \in [n]$, we can derive the general sampling result. Adapting it to our setting, we need $k_{i}$ samples for $i^{\text{th}}$ pair, and we have $n = \binom{K}{2}$ pairs. Letting $T_{k_{\text{max}}}$ denote the total number of samples needed to obtain $k_{\text{max}}$ number of samples for each item (pair),

$$
\Ebb[T_{k_{\text{max}}}] \leq \frac{1}{\mu_{\text{min}}^{2}} \big[ 2\ln(n) + (k_{\text{max}}-1)\ln(\ln(n)) \big] \quad ; \quad n = \mbinom{K}{2} \, \, , \, \, k_{\text{max}} = \max_{i \in [n]} k_{i}
$$

Denoting $T_{k_{\text{max}}}$ as the random stopping time when at least $k_{\text{max}}$ occurrences of all $n$ items have been collected, we need $P(T_{k_{\text{max}}} > N) \leq \frac{\epsilon}{2}$, where $N$ is the size of the offline dataset $\Dcal_{0}$. Hence, using Markov inequality, we can bound it as:

\begin{align}
N \geq \frac{\ln K + (k_{\text{max}-1})\ln\ln K}{\mu_{\text{min}}^{2} \epsilon} & \quad \text{where}, \\
k_{\text{max}} = \max_{i,j \in [K]} \frac{\ln \bigg( \big(\frac{2 K^{2}}{\epsilon}-1 \big) \big(\frac{1}{\Phi(x_{i,j})}-1 \big) \bigg)}{\beta \langle a_{i} - a_{j}, \theta_{0} \rangle} \; &, \; x_{i,j} = \frac{(a_{i} - a_{j})^T \mu_{0}}{\sqrt{\big(a_{i} - a_{j}\big)^T \Sigma_{0} \big(a_{i} - a_{j}\big)}} \nonumber
\end{align}
    
\end{proof}

\subsection{Regret Analysis Continued}
\label{appendix:regretanalysis}

In this appendix section, we provide the building block proofs that allow us to construct a prior-dependent Bayesian regret bound on the $\mathsf{warmPref-PS}$ algorithm. The heart of these proofs lies in constructing a $(1-\epsilon)$-informative set $\UD$ from the offline dataset $\Dcal_{0}$.

\begin{restatable}{lemma}{f1informative}
\label{lemma:f1informative} 
$\UD$ is $(1-f_{1})-$informative.
\end{restatable}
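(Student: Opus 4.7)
The goal is to bound $P(A^\star\notin\UD)\leq f_1$. Observe that $A^\star\notin\UD$ if and only if $A^\star$ appears in $\Dcal_0$ and is the non-preferred action in every comparison containing it. My plan is to decompose this error probability into three pieces matching the additive structure $f_1=\tilde f_1+1/T$, corresponding, respectively, to failure of the rater's estimate $\vartheta$ to concentrate around $\theta$, failure of $A^\star$ to appear at all, and failure of the Bradley--Terry mechanism to favor $A^\star$ in the comparisons where it does appear.

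First, I would condition on the rater-knowledge good event $\Ecal_\vartheta:=\{\|\vartheta-\theta\|\leq\alpha_2\}$. Since $\vartheta-\theta\sim\Ncal(0,\Ibf_d/\lambda^2)$, a Gaussian tail bound combined with a union bound over the $d$ coordinates yields $P(\Ecal_\vartheta^c)\leq 1/T$ for the prescribed $\alpha_2=\lambda^{-1}\sqrt{2\ln(2d^{1/2}T)}$, accounting for the additive $1/T$ in $f_1$. On $\Ecal_\vartheta$, for any action $a$ we can transfer inner-product bounds between $\theta$ and $\vartheta$ at a cost of at most $\alpha_2$ (assuming bounded-norm actions), which is the key fact I will use to pass between the true environment and the rater's perceived environment.

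Next, conditional on $\Ecal_\vartheta$, I would decompose the remaining error according to the per-comparison sub-optimality event $\Ecal_{(n)}:=\{\langle A^\star-a_n,\theta\rangle\leq\min(1,\Delta)\}$, where $a_n$ denotes the opponent of $A^\star$ in comparison $n$ and $\Delta=\ln(T\beta)/\beta$. A Gaussian anti-concentration on the prior over $\theta$, together with a union bound over the $K$ possible sampled opponents, bounds $P(\Ecal_{(n)})\leq K\min(1,\Delta)=\alpha_1^\Delta$. On $\Ecal_{(n)}^c$ the $\vartheta$-gap is at least $\min(1,\Delta)-\alpha_2$, so the Bradley--Terry probability that $A^\star$ loses that comparison is at most $1/(1+\exp(\beta(\min(1,\Delta)-\alpha_2)))$. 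Combining the two regimes for a single comparison gives an effective per-comparison loss probability of $1-1/(1+\exp(\beta(\min(1,\Delta)+\alpha_2-\alpha_1^\Delta)))$.

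Finally, by the conditional independence of the $N$ comparisons given $(\theta,\vartheta)$, the probability that $A^\star$ loses \emph{every} comparison it enters factors as the $N$-th power of the per-comparison bound, yielding the first summand of $\tilde f_1$. The $(1-\mu_{\text{min}})^{2N}$ term accounts separately for the probability that $A^\star$ is absent from every one of the $2N$ slots; this case has to be accumulated alongside the product bound because the latter implicitly assumed $A^\star$ was present. Re-assembling the three contributions produces $f_1$. The main obstacle is the bookkeeping in the penultimate step: one must absorb the anti-concentration contribution $\alpha_1^\Delta$ and the $\vartheta$-noise $\alpha_2$ into the sigmoid's argument as a single additive shift, so that the $N$-fold independence collapses into one exponentiated sigmoid rather than a sum of $N$ separately accumulated error terms. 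Achieving this cleanly requires choosing the per-comparison decomposition so that the bound on the $\Ecal_{(n)}$-regime and the Bradley--Terry bound on $\Ecal_{(n)}^c$ combine multiplicatively inside the logistic function.
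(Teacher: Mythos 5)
Your proposal follows essentially the same route as the paper's proof: the same three-way decomposition of $P(A^\star\notin\UD)$ into the $\vartheta$-concentration failure event (contributing the $1/T$ via a Gaussian tail bound at $\alpha_2$), the absence event (contributing $(1-\mu_{\min})^{2N}$), and the product of Bradley--Terry loss probabilities over the $N$ comparisons, split by the same gap event $\Ecal_{(n)}$ with $\Delta=\ln(T\beta)/\beta$ and the same anti-concentration bound $\min(1,\Delta)$ per opponent. The only cosmetic difference is that the paper derives the $\min(1,\Delta)$ bound by an explicit conditional-density computation ($g(\theta_{\ring a})\le\min(1,\Delta)$) rather than citing anti-concentration directly, and the final absorption of $\alpha_1^\Delta$ and $\alpha_2$ into the sigmoid's argument is carried out with the same level of looseness you flag as the main obstacle.
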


\begin{proof}

We construct $\UD$ as a \emph{set} of actions that have been preferred to \emph{at least} once in the offline dataset $\Dcal_{0}$ and of actions that do not appear in the $\Dcal_{0}$. Thus, $\UD$ contains at most $K$ actions.

Now, we consider the formulation below. Recall that $\bar{A}_{n}^{(0)}$ and $\bar{A}_{n}^{(1)}$ are i.i.d. sampled from the action set and each datapoint in the dataset $\Dcal_{0}^{i}$, conditioned on $\vartheta, \beta$, is independent of $\Dcal_{0}^{j}$ for $i \neq j$. Now,

\begin{equation}
\begin{aligned}
    P \left(A^{\star} \notin \UD \right)&\leq P(A^{\star} \; \text{has lost all comparisons in} \; \Dcal_{0}) + P(A^{\star} \; \text{is not present in} \; \Dcal_{0}) \\
    &\leq \Ebb \bigg[ \prod_{n=1}^{N} \frac{\exp\big( \beta \langle a_{n}, \vartheta \rangle \big)}{\exp\big( \beta \langle a_{n}, \vartheta \rangle \big) + \exp\big( \beta \langle A^{\star}, \vartheta \rangle \big)} + (1-\mu_{\text{min}})^{2N} \bigg] \\
    &\leq \Ebb \bigg[ \prod_{n=1}^{N} \bigg( 1 - \frac{\exp\big( \beta \langle A^{\star}, \vartheta \rangle \big)}{\exp\big( \beta \langle a_{n}, \vartheta \rangle \big) + \exp\big( \beta \langle A^{\star}, \vartheta \rangle \big)}  \bigg)\bigg]  + (1-\mu_{\text{min}})^{2N} \\
    &\leq \Ebb \bigg[ \prod_{n=1}^{N} \bigg( 1 - \underbrace{\frac{1}{1 + \exp\big( - \beta \langle A^{\star} - a_{n}, \vartheta \rangle \big)}}_{\clubsuit} \bigg) \bigg]  + (1-\mu_{\text{min}})^{2N} 
\label{eq:optimalnotinu}
\end{aligned}
\end{equation} 
, where $A^{\star}$ is a function of $\theta$ and thus a random variable as well. Looking closely at the term $\clubsuit$ above, it can be written as $P(Y_{n} = A^{\star} \given \vartheta)$. We now analyze this term.

\begin{align*}
    P(Y_{n} = A^{\star} \given \vartheta) &= \frac{1}{1 + \exp \big( - \beta \langle A^{\star} - a_{n}, \vartheta \rangle \big)} \\
    &=\big( 1 + \exp\left(\beta \langle A^{\star}-a_{n}, \theta-\vartheta\rangle-\beta \langle A^{\star}-a_{n}, \theta \rangle \right) \big)^{-1} \\
    &\geq \big( 1 + \exp\left(\beta \| A^{\star}-a_{n} \|_{1} \| \theta-\vartheta \|_{\infty} - \beta \langle A^{\star}-a_{n}, \theta \rangle \right) \big)^{-1} \tag{H\"{o}lder's inequality} \\
    &\geq \big( 1 + \exp\left(\beta \| \vartheta-\theta \|_{\infty} - \beta \langle A^{\star}-a_{n}, \theta \rangle \right) \big)^{-1} \tag{$\| A^{\star}-a_{n} \|_{1} \leq 1 \forAll a_{n} \in \Acal$}
\end{align*}

Since $\vartheta-\theta\sim N(0, \Ibf_d/\lambda^2)$, using the Dvoretzky–Kiefer–Wolfowitz inequality bound \citep{massart1990tight, vershynin2010introduction} implies 
\begin{equation*}
     P\left(\|\vartheta-\theta\|_{\infty}\geq t\right)\leq 2 d^{1/2} \exp\left(-\frac{t^2\lambda^2}{2}\right)\,.
\end{equation*}
Set $t=\sqrt{2\ln(2d^{1/2}T)}/\lambda$ and define an event $\Ecal_1:=\{\|\vartheta-\theta\|_{\infty}\leq \sqrt{2\ln(2d^{1/2}T)}/\lambda\}$ such that $P(\Ecal_1^c)\leq 1/T$. We decompose Equation \eqref{eq:optimalnotinu} using Union Bound as:

\begin{equation}
\label{eqn2}
\resizebox{.85\linewidth}{!}{$
\begin{aligned}
     P\left(A^{\star} \notin \UD \right)&\leq \mathbb E\left[ \prod_{n=1}^N\left(1-P\left(Y_n=A^{\star} \Given \theta, \vartheta \right) \right) \Ibb_{\Ecal_1} \right] + P(\Ecal_1^c) + (1-\mu_{\text{min}})^{2N}  \\
     &\leq \mathbb E\left[\prod_{n=1}^N\left(1-\left(1 + \exp\left(\frac{\beta \sqrt{2\ln(2d^{1/2}T)}}{\lambda}\right) \underbrace{\exp\left(-\beta \langle A^{\star}-a_{n}, \theta \rangle \right)}_{\blacktriangle} \right)^{-1} \right) \right]+ \frac{1}{T} + (1-\mu_{\text{min}})^{2N} \,.
\end{aligned}
$}
\end{equation}

Now, we define another event $\Ecal_{(n)}:=\{ \langle A^{\star}-a_{n}, \theta \rangle \leq \Delta \}$. Based on $\Ecal_{(n)}$ we analyze the $\blacktriangle$ term as follows.

\begin{align*}
\exp\left(-\beta \langle A^{\star}-a_{n}, \theta \rangle \right) &= \Ebb \big[ \exp\left(-\beta \langle A^{\star}-a_{n}, \theta \rangle \right) \Ibb_{\Ecal_{(n)}} \big] + \Ebb \big[ \exp\left(-\beta \langle A^{\star}-a_{n}, \theta \rangle \right) \Ibb_{\Ecal_{(n)}^{c}} \big] \\
&\leq \exp\left(0 \right) P(\Ecal_{(n)}) + \exp\left(-\beta \Delta \right) P(\Ecal_{(n)}^{c}) \\
&\leq P(\Ecal_{(n)}) + (1-P(\Ecal_{(n)}))\exp\left(-\beta \Delta \right)
\end{align*}

Plugging this back in Equation \eqref{eqn2} we get,

\begin{equation}
\resizebox{.98\linewidth}{!}{$
\begin{aligned}
 P \left( A^{\star} \notin \UD \right) &\leq \Ebb \left[\prod_{n=1}^N \left(1 - \left(1 + \exp\left(\frac{\beta \sqrt{2\ln(2d^{1/2}T)}}{\lambda}\right) \left( \Ibb_{\Ecal_{(n)}} + (1-\Ibb_{\Ecal_{(n)}}) \exp\left(-\beta \Delta \right) \right) \right)^{-1} \right) \right]+ \frac{1}{T} + (1-\mu_{\text{min}})^{2N} \\
\end{aligned}
$}
\label{eq:eqn3}
\end{equation}

Note that the random variable $\Ibb_{\Ecal_{(n)}}$ depends on the action sampling distribution $\mu$. Denote the probability of sampling this action $a_{n}$ by $\mu_{n}$, and as before we have $\mu$ supported by $[\mu_{\text{min}}, \mu_{\text{max}}]$. We first analyze this for any arbitrary $n \in [N]$ and study the the distribution of $\Ibb_{\Ecal_{(n)}}$ conditionaled on $A^{\star}$. Without loss of generality, we first condition on $A^{\star}= \ring a$ for some $\ring a \in \Acal$. For that, let $\rho(\cdot)$ be the univariate Gaussian distribution and $\theta_{a} = \langle a, \theta \rangle$ for any action $a$.

\begin{equation}
\resizebox{.9\linewidth}{!}{$
\begin{aligned}
    P\left(\Ibb_{\Ecal_{(n)}} = 1 \given A^{\star} = \ring a \right) &= P\left( \Ibb \left(\langle A^{\star} - a_{n}, \theta \rangle \leq \Delta \right) = 1 \Given A^{\star} = \ring a \right) & \\
    &= \frac{1}{P\left(A^{\star} = \ring a \right)}  P\left( \Ibb \left(\langle A^{\star} - a_{n}, \theta \rangle \leq \Delta \right) = 1 \, , \, A^{\star} = \ring a \right) & \\
    &= \frac{1}{ P \left(A^{\star}= \ring a \right)} P \left( \Ibb \left(\theta_{a_{n}} \geq \theta_{\ring a} - \Delta \right) = 1 \, , \, \bigcap_{a \in \Acal} \{\theta_{\ring a} \geq \theta_a \} \right) & \\
    &=\frac{1}{ P\left(A^{\star} = \ring a \right)} \bigintssss_{\Rbb} \left[\int_{\theta_{\ring a} - \Delta}^{\infty} d \rho(\theta) \right] d \rho(\theta_{\ring a}) & \\
    &=\frac{1}{ P\left(A^{\star} = \ring a \right)} \bigintsss_{\Rbb} \left[\int_{\theta_{\ring a} - \Delta}^{\theta_{\ring a}} d \rho(\theta) \right] d \rho(\theta_{\ring a}) \, & (\text{since} \, \theta_{a} \leq \theta_{A^{\star}} \forAll a)
\label{eqn:B_dis}
\end{aligned}
$}
\end{equation}

Noticing that the term inside the integral can be represented as a distribution, we first find a normalizing constant to represent the probabilities. So, define

$$
\Phi(\theta_{\ring a}) =\int_{-\infty}^{\theta_{\ring a}} (2\pi)^{-1/2} \exp(-x^2/2) \; d x \; \qquad ; \qquad g(\theta_{\ring a}) = \frac{1}{\Phi(\theta_{\ring a})} \int_{\theta_{\ring a} - \Delta}^{\theta_{\ring a}} d \rho(\theta) \; .
$$

For fixed $\theta_{\ring a}$, let $X_{\theta_{\ring a}} \sim \text{Bernoulli}(1, g(\theta_{\ring a}))$. With Eq. \eqref{eqn:B_dis} and letting $d \mu(\theta_{\ring a}) = \frac{\Phi(\theta_{\ring a})}{ P(A^{\star} = a_{\ring a})} d \rho(\theta_{\ring a})$ we have,

\begin{equation}
\label{eqn:B_prob}
\resizebox{.75\linewidth}{!}{$
     P\left(\Ibb_{\Ecal_{(n)}} = 1 \given A^{\star} = \ring a \right) = \int_{\Rbb} P(X_{\theta_{\ring a}}=1) \frac{\Phi(\theta_{\ring a})} { P(A^{\star} = \ring a)} d \rho(\theta_{\ring a})=\int_{\Rbb} P(X_{\theta_{\ring a}}=1) \; d \mu(\theta_{\ring a}) \; .
$}
\end{equation}

Plugging this back in Equation \eqref{eq:eqn3} and upper bounding the probabilities we get,

\begin{equation}
\resizebox{.98\linewidth}{!}{$
\begin{aligned}
 P \left( A^{\star} \notin \UD \right) &\leq \sum_{a \in \Acal} 
 \bigintssss_{\Rbb} P(X_{\theta_{a}}=0) d \mu(\theta_a) P\left(A^{\star} = a \right) \left(1 - \left(1 + \exp\left( \beta \left( \lambda^{-1} \sqrt{2\ln(2d^{1/2}T)} - \Delta \right) \right) \right)^{-1} \right)^{N} \cdot \mu_{\text{max}}^{N} \\ & \qquad + \frac{1}{T} + (1-\mu_{\text{min}})^{2N} \\
 &\leq  \sum_{a \in \Acal} \bigintssss_{\Rbb} P(X_{\theta_{a}}=0) \left(1 - \left(1 + \exp\left( \beta \left( \lambda^{-1} \sqrt{2\ln(2d^{1/2}T)} - \Delta \right) \right) \right)^{-1} \right)^{N} \cdot \mu_{\text{max}}^{N} \; d \mu(\theta_a) P\left(A^{\star} = a \right)  \\ & \qquad + \frac{1}{T} + (1-\mu_{\text{min}})^{2N} \\
 &\leq  \bigintssss_{\Rbb} \E{\ring a \, \in \, \Acal} \left[
 \left(1 - \left(1 + \exp\left( \beta \left( \lambda^{-1} \sqrt{2\ln(2d^{1/2}T)} - (1 - X_{\theta_{\ring a}})\Delta \right) \right) \right)^{-1} \right)^{N} \right] \cdot \mu_{\text{max}}^{N} d \mu(\theta_{\ring a})  + \frac{1}{T} + (1-\mu_{\text{min}})^{2N} \, ,
 \label{eq:eq02}
\end{aligned}
$}
\end{equation}

where $\mu_{\text{max}}$ is used to obtain the exponent $N$ by accounting for the sampling distribution $\mu$, and last step follows from the uniformity of each action being optimal. Finally, we need to find the supremum of $g(\theta_{\ring a})$ and hence Equation \eqref{eq:eq02}. Recall that,

$$
g(\theta_{\ring a}) = \frac{1}{\Phi(\theta_{\ring a})} \int_{\theta_{\ring a} - \Delta}^{\theta_{\ring a}} d \rho(\theta) = \frac{\int_{\theta_{\ring a} - \Delta}^{\theta_{\ring a}} d \rho(\theta)}{\int_{-\infty}^{\theta_{\ring a}} d \rho(\theta)} = \frac{\int_{-\infty}^{\theta_{\ring a}} d \rho(\theta) - \int_{-\infty}^{\theta_{\ring a} - \Delta} d \rho(\theta)}{\int_{-\infty}^{\theta_{\ring a}} d \rho(\theta)} = 1 - h_{\Delta}(\ring a) \;
$$

, where $h_{\Delta}(\ring a) := \frac{\int_{-\infty}^{\theta_{\ring a} - \Delta} d \rho(\theta)}{\int_{-\infty}^{\theta_{\ring a}} d \rho(\theta)}$. Setting $\nabla_{\ring a} h_{\Delta}(\ring a) = 0$ and analyzing $\nabla_{\ring a}^{2} h_{\Delta}(\ring a) > 0$, we find that 

\begin{equation}
    g(\theta_{\ring a}) \leq 1 - \Delta \exp \left( - \frac{(2\theta_{\ring a} - \Delta)\Delta}{2} \right) \leq \min(1, \Delta)  \; .
\end{equation}

Finally we, decompose Equation \eqref{eq:eq02} based on the event $\Ecal_{2} := \{ X_{\theta_{\ring a}} = 0 \}$, and upper bound the probability to simplify. Setting $\Delta = \ln(T\beta) / \beta$, we conclude with the following bound:

\begin{equation}
\resizebox{.98\linewidth}{!}{$
\begin{aligned}
 P \left( A^{\star} \notin \UD \right) &\leq  \left(1 - \left(1 + \exp\left( \beta \left( \lambda^{-1} \sqrt{2\ln(2d^{1/2}T)} - (K-1)\min(1, \ln(T\beta) / \beta) \right) \right) \right)^{-1} \right)^{N} + \frac{1}{T} + (1-\mu_{\text{min}})^{2N} 
 \label{eq:eq03}
\end{aligned}
$}
\end{equation}

\end{proof}

\begin{restatable}{lemma}{cardinalitysmall}
\label{lemma:cardinalitysmall} 
$\Ebb[|\UD|] \leq f_{2}$.
\end{restatable}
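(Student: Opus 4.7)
The trivial bound $|\UD|\leq K$ immediately yields the $K$ branch of the $\min$, so the remaining task is to prove the three-term bound. Following the template of the preceding Lemma~\ref{lemma:f1informative}, I would write $\Ebb[|\UD|]=\sum_{a\in\Acal}P(a\in\UD)$, where by construction $\{a\in\UD\}=\{a\notin\Dcal_0\}\cup\{a\text{ wins at least one comparison in }\Dcal_0\}$, and first condition on the DKW concentration event $\Ecal_1:=\{\|\vartheta-\theta\|_\infty\leq\alpha_2\}$. Since $P(\Ecal_1^c)\leq 1/T$, its contribution to $\Ebb[|\UD|]$ is small (using $|\UD|\leq K$ crudely and absorbing into the $2/T$ term together with a second $1/T$ slack coming from the threshold setting $\Delta=\ln(T\beta)/\beta$). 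On $\Ecal_1$, I would further partition each action by the sub-optimality gap event $\Ecal_{(a)}:=\{\langle A^\star-a,\theta\rangle\leq\Delta\}$, giving a near-optimal and a far-optimal contribution to bound separately.

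For the near-optimal contribution, I would reuse the integral representation from Lemma~\ref{lemma:f1informative}, where conditioning on $A^\star$ reduces $P(\Ecal_{(a)}\mid A^\star)$ to the univariate Gaussian quantity $g(\theta_{A^\star})\leq\min(1,\Delta)$. Summed over $a\in\Acal$ this gives $K\min(1,\Delta)=\alpha_1^\Delta$; to upgrade to the quadratic $(\alpha_1^\Delta)^2$ I would iterate the argument by additionally conditioning on the identity of the winning comparison partner $a'\in\Dcal_0$ that witnesses $a\in\UD$, producing a second independent $\min(1,\Delta)$ factor from an identical $g$-style bound on $a'$.

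For the far-optimal contribution, I would apply a Poisson-style union bound to replace the indicator that $a$ wins at least one comparison with $\sum_{n=1}^{N}W_{n,a}$, where $W_{n,a}:=\Ibb\{a\text{ is sampled in comparison }n\text{ and wins}\}$. Each $\Ebb[W_{n,a}]$ factors as $2\mu_a\sum_{a'}\mu_{a'}\,\Ebb[(1+e^{-\beta\langle a-a',\vartheta\rangle})^{-1}]$, and on $\Ecal_1\cap\Ecal_{(a)}^c$ only comparison partners $a'$ within the near-optimal mass of expected size $\alpha_1^\Delta$ contribute materially, because for far partners the log-odds are already dominated by the gap. Bookkeeping these contributions bounds the inner expectation by $(1+T\beta\, e^{-\beta\alpha_2+\alpha_1^\Delta})^{-1}\leq(T\beta)^{-1}(1+e^{-\beta\alpha_2+\alpha_1^\Delta})^{-1}$ after substituting $e^{\beta\Delta}=T\beta$; summing over $n\in[N]$ and $a\in\Acal$ then yields the $\frac{NK}{T\beta}(1+e^{-\beta\alpha_2+\alpha_1^\Delta})^{-1}$ term.

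The main obstacle will be the near-optimal bound: extracting the squared factor $(\alpha_1^\Delta)^2$ rather than the naive linear $\alpha_1^\Delta$ requires chaining two conditional integral representations of the $g$-bound in a measurable way, so that independent sources of $\min(1,\Delta)$ slack can be extracted from both the optimal action and the winning partner. Once this is handled, combining the near-optimal, far-optimal, and $\Ecal_1^c$ tail contributions with the trivial $|\UD|\leq K$ gives $\Ebb[|\UD|]\leq\min\bigl((\alpha_1^\Delta)^2+\tfrac{NK}{T\beta}(1+e^{-\beta\alpha_2+\alpha_1^\Delta})^{-1}+2/T,\,K\bigr)=f_2$, as desired.
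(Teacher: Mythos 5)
Your overall skeleton matches the paper's proof: write $\Ebb[|\UD|]=\sum_{a\in\Acal}P(a\in\UD)$, split each term on the concentration event $\Ecal_1$ and on the gap event $\Ecal_{a}=\{\langle A^\star-a,\theta\rangle\le\Delta\}$, bound the far-from-optimal contribution by a union bound over the $N$ comparisons using the Bradley--Terry win probability together with $e^{-\beta\Delta}=1/(T\beta)$, and cap everything at $K$. Your treatment of the far-optimal term is essentially the paper's bound on $\Ebb\big[\sum_{a}P(a\in\UD,\,\Ecal_a^c)\big]$, although the algebraic step $(1+T\beta\,x)^{-1}\le(T\beta)^{-1}(1+x)^{-1}$ is false whenever $T\beta>1$; the $1/(T\beta)$ prefactor has to be extracted directly from $\big(1+e^{\beta\langle A^\star-a,\vartheta\rangle}\big)^{-1}\le e^{-\beta(\Delta-\alpha_2)}$ on the event $\Ecal_a^c\cap\Ecal_1$, as the paper does.

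The genuine gap is in the near-optimal term. You propose to obtain the squared factor $(\alpha_1^\Delta)^2$ by extracting a second $\min(1,\Delta)$ from the \emph{winning comparison partner} of $a$. This cannot work, for two reasons. First, the event $\{a\in\UD\}$ places no constraint on the suboptimality gap of the partner $a'$ that $a$ defeated, so the $g$-style bound $P(\langle A^\star-a',\theta\rangle\le\Delta)\le\min(1,\Delta)$ is not applicable to $a'$: an action near the top beats far-from-optimal partners with high probability, and those partners contribute no small factor. Second, and more fundamentally, $\UD$ by construction also contains every action that never appears in $\Dcal_0$; for such an action the sampling event is independent of $\theta$, so $P(a\in\UD,\Ecal_a)\ge(1-\mu_{\text{max}})^{2N}\,P(\Ecal_a)$, which carries no additional $\Delta$-dependence whatsoever -- there is simply no second small factor to harvest from membership in $\UD$. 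The paper instead bounds the gap probability itself at second order, obtaining $\sum_a P(a\in\UD,\Ecal_a)\le K\min(1,\Delta^2/2)$ from the conditional Gaussian representation of $P(\Ecal_a\mid A^\star)$ together with a Poisson-type approximation, and only afterwards loosens $K\min(1,\Delta^2/2)\le(\alpha_1^\Delta)^2$ to match the stated form of $f_2$. Your fallback linear bound $\sum_a P(\Ecal_a)\le\alpha_1^\Delta$ does suffice when $\alpha_1^\Delta\ge1$ (since then $\alpha_1^\Delta\le(\alpha_1^\Delta)^2$), but in the regime $\Delta<1/K$ where the quadratic term actually matters, your construction has no mechanism to produce it.
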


\begin{proof}

Recall that $\UD$ is a \emph{set} of actions that have been preferred to \emph{at least} once in the offline dataset $\Dcal_{0}$ and of actions that do not appear in the $\Dcal_{0}$. We first see that $\Ebb[|\UD|] = \sum_{k=1}^{K} k \cdot P \left(|\UD| = k \right)$. Define an event $\Ecal_{a} = \{ \langle A^{\star}-a, \theta \rangle \leq \Delta \}$ and analyze as follows,

\begin{equation}
\label{eqn:D_0_1}
\begin{aligned}
    \Ebb [|\UD|] &= \Ebb \left[\sum_{a\in\Acal} \Ibb(a\in \UD)\right] = \sum_{a \in \Acal} \Ebb \left[\Ibb(a \in \UD) \Ibb(\Ecal_{a}) + \Ibb(a \in \UD) \Ibb(\Ecal_{a}^{c}) \right] \\
    &\leq K \min(1,\Delta^{2}/2) + \frac{1}{T} + \Ebb \left[\sum_{a \in \Acal} \Ibb \left(a \in \UD\right)  \Ibb(\Ecal_{a}^{c}) \right] \, ,
\end{aligned}
\end{equation}

where the second step follows from the event $\Ecal_{a}$ and analysis done before : break down the indicator variable conditioning on arbitrary $A^{\star} = \ring a \in \Acal$, and use Poisson approximation to bound the probability. Now, analyze term in expectation above.

\begin{equation}
\label{eqn:D_0_2}
\resizebox{.91\linewidth}{!}{$
\begin{aligned}
     \Ebb \left[ \sum_{a \in \Acal} \Ibb(a \in \UD) \Ibb(\Ecal_{a}^{c}) \right] &= \Ebb \left[ \sum_{n=1}^N \sum_{a \in \Acal} P \left( Y_{n} = a, \langle A^{\star} - a, \theta -\vartheta \rangle + \langle A^{\star}-a, \vartheta \rangle \geq \Delta \Given \theta, \vartheta \right) \Ibb(\Ecal_{a}^{c}) \right] \\
     &\leq \sum_{n=1}^N \Ebb \left[ \sum_{a \in \Acal} P \left(Y_{n} = a \, , \, \langle A^{\star}-a, \vartheta \rangle \geq \Delta -  \sqrt{2\ln(2d^{1/2}T)}/\lambda \Given \vartheta \right)  \Ibb(\Ecal_{a}^{c}) \right] \\
     &\leq N \Ebb \left[ \sum_{a,b \in \Acal \; ; \; \langle A^{\star}-a, \vartheta \rangle \geq \Delta -  \sqrt{2\ln(2d^{1/2}T)}/\lambda} \left(1 + \exp \left( \beta \langle b - a, \vartheta \rangle \right) \right)^{-N} \right] \\
     &\leq N \Ebb \left[ \sum_{a \in \Acal \; ; \; \langle A^{\star}-a, \vartheta \rangle \geq \Delta -  \sqrt{2\ln(2d^{1/2}T)}/\lambda} \left(1 + \exp \left( - \beta \langle A^{\star} - a, \vartheta \rangle \right) \right)^{-N} \right] \\
     &\leq \frac{N(K-1)}{T \beta} \left( 1 + \exp \left( - \beta \left( \lambda^{-1} \sqrt{2\ln(2d^{1/2}T)} + (K-1) \min(1,\Delta) \right) \right) \right)^{-N} \\
     &\leq  \frac{NK}{T \beta} \left( 1 + \exp \left( - \beta \left( \lambda^{-1} \sqrt{2\ln(2d^{1/2}T)} + (K-1) \min(1,\Delta) \right) \right) \right)^{-N}
\end{aligned}
$}
\end{equation}

Putting all of this together, we obtain the bound below.

\begin{equation}
\label{eqn:D_0_3}
\resizebox{.91\linewidth}{!}{$
\begin{aligned}
    \Ebb [|\UD|] & \leq K \min(1,\Delta^{2}/2) + \frac{NK}{T \beta} \left( 1 + \exp \left( - \beta \left( \lambda^{-1} \sqrt{2\ln(2d^{1/2}T)} + (K-1) \min(1,\Delta) \right) \right) \right)^{-N} + \frac{1}{T} \; .
\end{aligned}
$}
\end{equation}

Of course, $|\UD|$ cannot exceed $K$, so we have with the choice of $\Delta = \ln(T\beta)/\beta$,

\begin{equation}
\label{eqn:D_0_4}
\resizebox{.98\linewidth}{!}{$
\begin{aligned}
    \Ebb [|\UD|] &\leq \min \left( K \min \left(1,\frac{\ln^{2}(T\beta)}{2\beta^{2}} \right) + \frac{NK}{T \beta} \left( 1 + \exp \left( - \beta \lambda^{-1} \sqrt{2\ln(2d^{1/2}T)} + (K-1) \min(1,\ln(T\beta)/\beta) \right) \right)^{-N} + \frac{1}{T} , K \right) .
\end{aligned}
$}
\end{equation}

\end{proof}

\priordependentinformationset*
\label{proof:priordependentinformationset}
\begin{proof}
Combining Lemma \ref{lemma:f1informative} and Lemma \ref{lemma:cardinalitysmall} we have the desired result.
\end{proof}

\subsection{Constructing Surrogate Loss Function}
\label{appendix:surrogatelossfunction}

This section contains proofs of construction of the surrogate loss function as described in Section \ref{sec:practicalwarmPref-PS}.

\begin{restatable}{lemma}{appendixmapestimatelemma}
At time $t$, the MAP estimate of $(\theta, \vartheta)$ can be constructed by solving the following equivalent optimization problem: 
\begin{equation}
\begin{aligned}
(\theta_{opt}, \vartheta_{opt})  &= \underset{\theta, \vartheta}{\argmax} \; P(\theta, \vartheta \, | \, \Dcal_{t-1}) \\ & \equiv \underset{\theta, \vartheta}{\argmin} \; \Lcal_{1}(\theta, \vartheta) +  \Lcal_{2}(\theta, \vartheta) +  \Lcal_{3}(\theta, \vartheta) \; , \\
\text{where}, & \qquad \Lcal_{1}(\theta, \vartheta)  := \frac{1}{2} \sum_{s=1}^{t-1} \big(R_s - \langle A_s , \theta \rangle \big)^{2}, \\
& \Lcal_{2}(\theta, \vartheta) := - \sum_{n=1}^{N} \beta \langle \bar{A}_n^{(Y_{n})} , \vartheta \rangle + \ln \bigg(e^{ \beta \langle \bar{A}_n^{(0)}, \vartheta \rangle} + e^{\beta \langle \bar{A}_n^{(1)}, \vartheta \rangle} \bigg),  \\
& \Lcal_{3}(\theta, \vartheta) := \frac{\lambda^2}{2} \norm{\theta - \vartheta}{2}{2} + \frac{1}{2} (\theta - \mu_{0})^{T} \Sigma_{0}^{-1} (\theta - \mu_{0}).
\end{aligned}
\label{eq:appendixmapestimateproblem}
\end{equation}
\end{restatable}

\begin{proof}
\label{proof:mapestimatelemma}

We first analyze the posterior distribution of $\vartheta, \theta$ given the offline dataset $\Dcal_{0}$, optimize it by treating these random variables as parameters.

\begin{equation}
\label{eq:map_surrogate}
\begin{aligned}
    \underset{\theta, \vartheta}{\argmax} \; P(\theta, \vartheta \, | \, \Dcal_{t-1}) &= \underset{\theta, \vartheta}{\argmax} \; P( \Dcal_{t-1} \, | \, \theta, \vartheta) \cdot P(\theta, \vartheta) \\ 
    &= \underset{\theta, \vartheta}{\argmax} \; \ln P(\Dcal_{t-1} \, | \, \theta, \vartheta) + \ln P(\theta, \vartheta) \\ 
    &= \underset{\theta, \vartheta}{\argmax} \underbrace{\ln P(\Hcal_{t-1} \,| \, \Dcal_0, \theta, \vartheta)}_{\Lcal_{1}} + \underbrace{\ln P(\Dcal_0 \, | \, \theta, \vartheta)}_{\Lcal_{2}} +  \underbrace{\ln P(\theta, \vartheta)}_{\Lcal_{3}}
\end{aligned}
\end{equation}

Then,
\begin{equation}
    \begin{aligned}
        \Lcal_{1} &= \sum_{s=1}^{t-1} \underbrace{\ln P(A_{s} \, | \, \Dcal_{s-1}, \theta, \vartheta)}_{\textcolor{darkgreen}{\textbf{indep. of $\theta, \vartheta \implies \textbf{constant}$}}} + \ln P(R_s \, | \, A_s, \theta, \vartheta) \\
        &= \textcolor{darkgreen}{\text{constant}}  \; - \; \frac{t-1}{2} \ln \bigg(\frac{2\pi}{\sigma^2}\bigg) - \frac{1}{2} \sum_{s=1}^{t-1} \big(R_s - \langle A_s , \theta \rangle \big)^{2}. \\
        \Lcal_{2} &= \sum_{n=1}^{N} \ln \bigg( \left(\bar{A}_n^{(0)}, \bar{A}_n^{(1)}, Y_n\right) \, \big| \, \theta, \vartheta \bigg) \\
        &= \sum_{n=1}^{N} \ln \bigg (Y_n \, \big | \, \bar{A}_n^{(0)}, \bar{A}_n^{(1)}, \theta, \vartheta \bigg) + \underbrace{\ln P \bigg( \bar{A}_n^{(0)}, \bar{A}_n^{(1)} \Given \theta, \vartheta \bigg)}_{\textcolor{darkgreen}{\textbf{indep. of $\theta, \vartheta$ \, ; \, depends on $\mu \implies$ constant}}} \\
        &= \sum_{n=1}^{N} \beta \langle \bar{A}_n^{(Y_{n})} , \vartheta \rangle - \ln \bigg(e^{ \beta \langle \bar{A}_n^{(0)}, \vartheta \rangle} + e^{\beta \langle \bar{A}_n^{(1)}, \vartheta \rangle} \bigg) + \textcolor{darkgreen}{\text{constant}} \\
        \Lcal_{3} &= \ln P(\vartheta \, | \, \theta) + \ln P(\theta) \\
        &= \frac{d}{2} \ln \bigg(\frac{2\pi}{\lambda^2} \bigg) - \frac{\lambda^2}{2} \norm{\theta - \vartheta}{2}{2} - \frac{1}{2} \ln \big(|2\pi \Sigma_{0}| \big) - \frac{1}{2} (\theta - \mu_{0})^{T} \Sigma_{0}^{-1} (\theta - \mu_{0}).
    \end{aligned}
\end{equation}

Hence, final surrogate loss function is

\begin{equation}
\label{eq:surrogate_loss_function}
\begin{aligned}
    \Lcal(\theta, \vartheta) &= \Lcal_{1}(\theta, \vartheta) +  \Lcal_{2}(\theta, \vartheta) +  \Lcal_{3}(\theta, \vartheta), \qquad \text{where} \\
    \Lcal_{1}(\theta, \vartheta) &= \frac{1}{2} \sum_{s=1}^{t-1} \big(R_s - \langle A_s , \theta \rangle \big)^{2} \\ 
    \Lcal_{2}(\theta, \vartheta) &= - \sum_{n=1}^{N} \beta \langle \bar{A}_n^{(Y_{n})} , \vartheta \rangle + \ln \bigg(e^{ \beta \langle \bar{A}_n^{(0)}, \vartheta \rangle} + e^{\beta \langle \bar{A}_n^{(1)}, \vartheta \rangle} \bigg) \\
    \Lcal_{3}(\theta, \vartheta) &= \frac{\lambda^2}{2} \norm{\theta - \vartheta}{2}{2} + \frac{1}{2} (\theta - \mu_{0})^{T} \Sigma_{0}^{-1} (\theta - \mu_{0}).
\end{aligned}
\end{equation}

Finally the problem in Equation \eqref{eq:map_surrogate} becomes equivalent as follows:
\begin{equation}
\label{eq:surrogate_opt_problem}
(\theta_{opt}, \vartheta_{opt}) = \underset{\theta, \vartheta}{\argmax} \; P(\theta, \vartheta \, | \, \Dcal_{t}) \equiv \underset{\theta, \vartheta}{\argmin} \; \Lcal(\theta, \vartheta)
\end{equation}
    
\end{proof}

\subsection{$\mathsf{warmPref-PS}$ with Online Feedback (\texttt{warmTSOF})}
\label{appendix:warmTSOF}

Here, we present an extension to $\mathsf{warmPref-PS}$, where the agent has the option to ask for feedback during the online phase.
\label{appendix:warmtsof}

\paragraph{Problem Formulation.}

\begin{algorithm}[!t]
   \caption{warm Thompson Sampling with Preference Feedback (\texttt{warmTSOF})}
   \label{alg:prac_warmtsof}
\begin{algorithmic}[1]
   \STATE {\bfseries Input:} Horizon $T$, offline dataset $\Dcal_0$, set of arms $\Acal$, knowledgeability $\lambda$, deliberateness $\beta$, feedback cost $c$.
	 \FOR{$t = 1,2,\dots,T$} 
    	 \STATE Sample a set of perturbations $\Pcal_{t} = \{\zeta_{s}, \omega_{n}, \theta', \vartheta'\}$.
    	 \STATE Solve Equation \eqref{eq:final_surrogate_perturbed_loss} using this set $\Pcal_{t}$ to find $(\hat{\theta}_{t}, \hat{\vartheta}_{t})$.
         \STATE Let $A_{t}^{1} \, , A_{t}^{2}$ be s.t. $\langle A_{t}^{1}, \hat{\theta}_{t} \rangle \geq \langle A_{t}^{2}, \hat{\theta}_{t} \rangle \geq \langle A, \hat{\theta}_{t} \rangle \forAll A \in \Acal \setminus \{A_{t}^{1}, A_{t}^{2}\}$.
         \STATE Compute $\epsilon_{t} = \texttt{get\_epsilon}(c, \Dcal_{t}, t, \lambda, \beta)$.
         \IF{$\big| \langle A_{t}^{1}, \hat{\theta}_{t} \rangle - \langle A_{t}^{2}, \hat{\theta}_{t} \rangle \big| < \epsilon_{t}$}
            \STATE Ask for feedback on $(A_{t}^{1}, A_{t}^{2})$ and receive $Y_{t} \in \{0,1\}$.
            \STATE Update $\Dcal_{t} \leftarrow \Dcal_{t} \cup \{ A_{t}^{1}, A_{t}^{2}, Y_{t}\}$. 
            
            \STATE 
            Update posterior using Equation \eqref{eq:final_surrogate_perturbed_loss} to get new $(\Tilde{\theta_{t}}, \Tilde{\vartheta_{t}})$.
            \STATE Set $A_{t} = \argmax_{a \in \Acal} \langle a, \Tilde{\theta_{t}} \rangle$ and $c_{t} = c$.
        \ELSE
            \STATE Set $A_t = A_{t}^{1}$ and $c_{t} = 0$.
        \ENDIF
         \STATE Take action $A_t$ to receive reward $R_t - c_{t}$.
         \STATE Set $\Dcal_{t+1} = \Dcal_{t} \cup \{A_{t}, R_{t}\}$. 
        
    \ENDFOR
\end{algorithmic}
\end{algorithm}
\begin{figure*}[!ht]
    \centering
    \subfloat[$c=0$]{
        \includegraphics[width=0.35\textwidth]{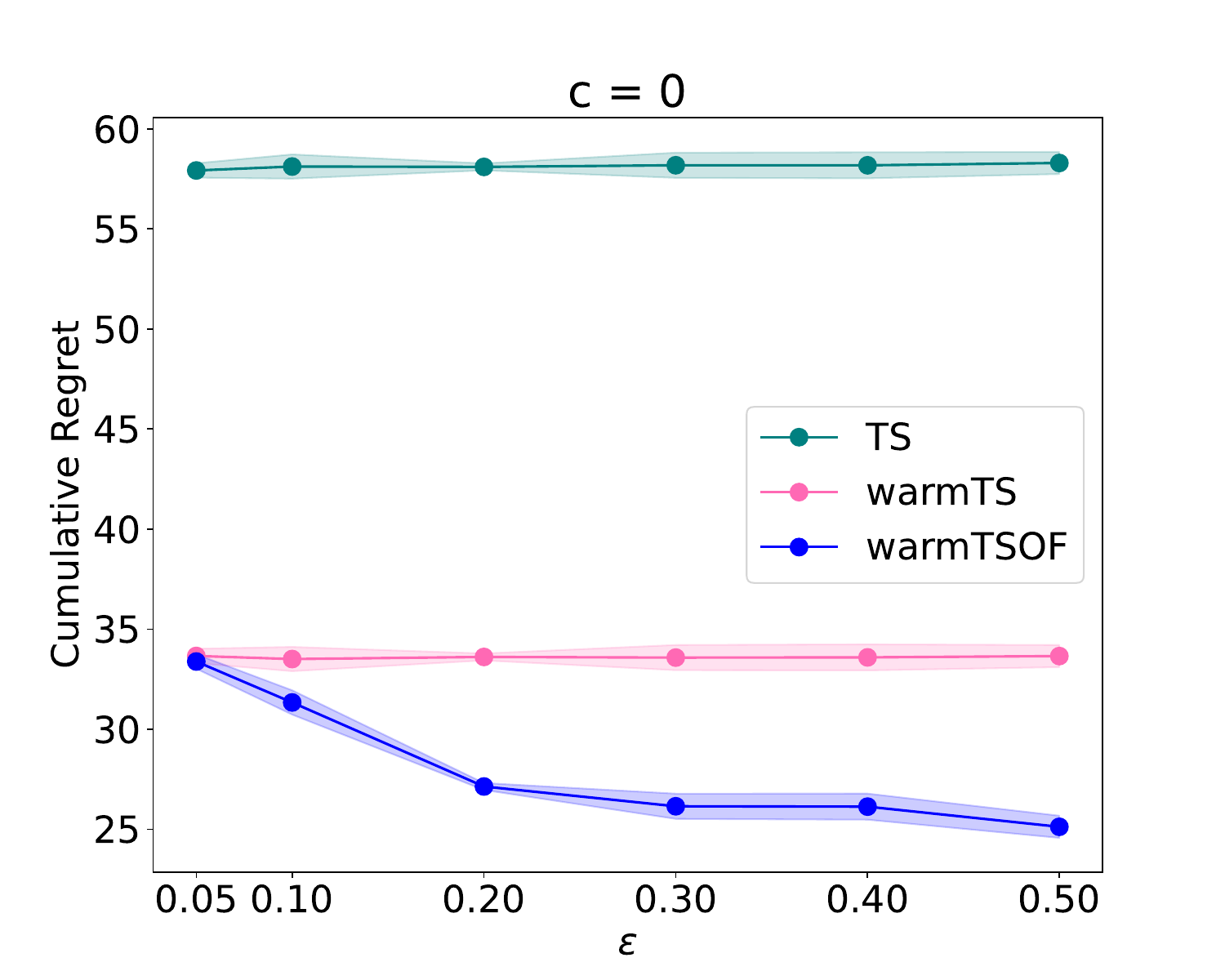}
        \label{fig:wTSOF-Reg-vs-epsilon-c-0}
    }
    \vspace{0.2cm}
    \subfloat[$c=0.1$]{
        \includegraphics[width=0.35\textwidth]{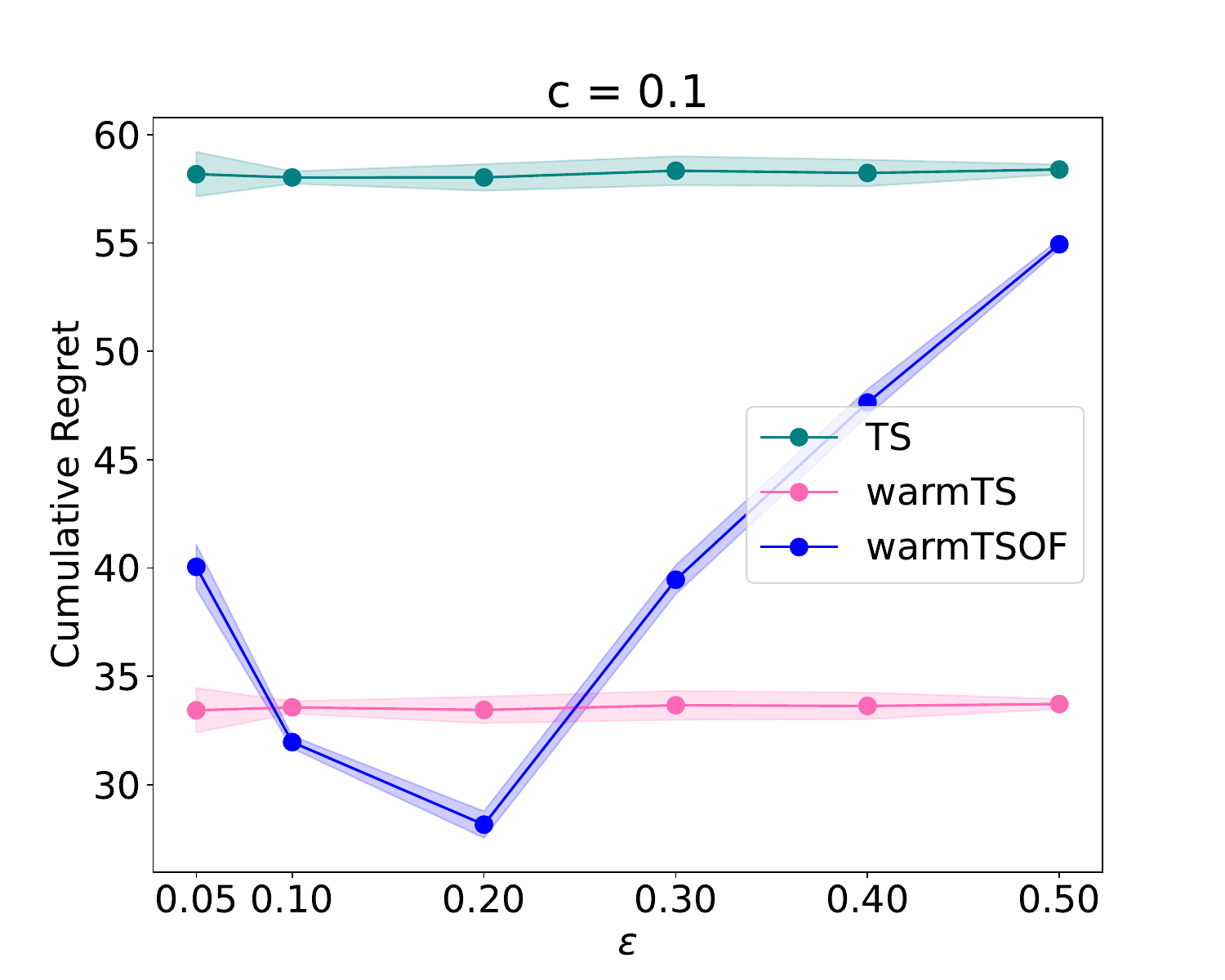}
        \label{fig:wTSOF-Reg-vs-epsilon-c-0.1}
    }
    \caption{Performance of \texttt{warmTSOF} with varying cost of feedback $c$.}
    \label{fig:warmTSOF-ablation}
\end{figure*}

We present the warm Posterior Sampling with Online Feedback (\texttt{warmTSOF}) algorithm wherein the learning agent can ask for preference feedback between two actions at some cost. We present some preliminary empirical results and defer the theoretical analysis to future work. This has applications in active learning \citep{ren2021survey, margatina2021active} and crowd-sourcing data from experts for large language models \citep{mishra2021cross, korbak2023pretraining}. 

Consider, now in addition to Algorithm \ref{alg:practical_Prefwarm-PS} (\texttt{warmTS}), the agent at any time, has the option to ask for \emph{online} preference feedback between two actions. For simplicity, we assume the rater for this feedback is the same rater who generated the offline dataset $\Dcal_{0}$. 

Let the cost incurred for this feedback on actions $A_{t}^{1}$ and $A_{t}^{2}$ be $c_{t} = c \in \Rbb$ if agent asks for feedback, else $c_{t} = 0$. So, this feedback takes the form $\{A_{t}^{1}, A_{t}^{2}, Y_{t} \}$, and $Y_{t} \in \{0,1\}$ and the reward the agent receives then becomes $R_{t}-c_{t}$.

The agent should incorporate the expected current rewards for all actions, cost of feedback, and expert competency into the decision making process. The core idea is to only initiate feedback retrieval process if top-two expected rewards of all actions are `close'. This idea finds its in beginnings in the Top-Two Thompson Sampling procedure \citep{pmlr-v49-russo16}. See \texttt{warmTSOF} (Algorithm \ref{alg:prac_warmtsof}) for exact details. The \texttt{get\_epsilon($\cdot$)} function will be decided through analysis.

\paragraph{Performance.} See Figure \ref{fig:warmTSOF-ablation} for performance comparison. Experiments are run with size of offline dataset $N = 20$, deliberateness $\beta = 10$, and knowledgeability $\lambda = 10$. In addition, we let number of arms $k=10$, dimension of environment $d=4$, and horizon $T=300$, all averaged over 100 runs (random seeds). For baselines, we consider the traditional and warm Thompson Sampling (\texttt{TS} and \texttt{warmTS}).


\subsection{Evaluation using \texttt{DPO} and \texttt{IPO}}
\label{appendix:dpoipo}

DPO \cite{rafailov2024direct} is an alternative approach to the RL paradigm, which avoids the training of a reward model altogether. The loss that DPO optimizes to obtain the optimal policy, given an empirical dataset $\Dcal = \{y_{w}, y_{l}\}$ of the winning (preferred) $y_{w}$ and losing (not preferred) $y_{l}$ outputs (arms in our bandit setting), as a function of the reference policy $\pi_{\text{ref}}$ and regularization strength $\tau \in \Rbb_{+}$, is given by:

\begin{align*}
    \pi^{\star}_{\text{DPO}} = \argmin_{\pi} \quad \Ebb_{(y_{w}, y_{l}) \sim \Dcal} \left[ - \log \sigma \left( \tau \log \left( \frac{\pi(y_{w})}{\pi(y_{l})} \right)  - \tau \log \left( \frac{\pi_{\text{ref}}(y_{w})}{\pi_{\text{ref}}(y_{l})} \right) \right)  \right]
\end{align*}

, where $\sigma(\cdot)$ denotes the sigmoid function.

IPO is an instance of the $\Psi\text{PO}$ algorithm \cite{ipo} . The loss function that IPO optimizes is given by,

\begin{align*}
    \pi^{\star}_{\text{IPO}} = \argmin_{\pi} \quad \Ebb_{(y_{w}, y_{l}) \sim \Dcal} \left[ h_{\pi}(y_{w}, y_{l}) - \frac{1}{2 \tau}  \right]^{2} \; \text{where} \, , \; h_{\pi}(y, y') := \log \left( \frac{\pi(y) \pi_{\text{ref}}(y')}{\pi(y') \pi_{\text{ref}}(y)} \right) \, .
\end{align*}

Note here that DPO and IPO are purely offline learning algorithms that only work with preference datasets. DPO and IPO cannot be trivially extended to our problem setting i.e. fixed offline preference dataset with active online numerical reward learning. Comparing DPO and IPO based solely on offline datasets is not fair. Hence, for a more fair comparison, we consider an offline-online variant of DPO with $\epsilon$-greedy online exploration. The pseudo code is as follows:

\begin{itemize}
\item \textbf{Input:} offline preference data $\Dcal_{0}$, $\epsilon$, $\min_a r^{\star}(a)$, DPO parameters
\item \textbf{Offline learning:} use DPO to learn a policy $\pi$, use $\pi$ to infer a reward model $r$ such that $\min_{a \in \Acal} r(a) = \min_{a \in \Acal} r^{\star}(a)$, where $r^{\star}(\cdot) : \Rbb^{d} \to \Rbb$ is true reward model.
\item \textbf{Online learning:} at each time $t=1, \ldots, T$
\begin{itemize} 
\item with probability $\epsilon$, choose $A_t$ uniformly randomly; otherwise, choose $A_t = \argmax_a r(a)$.
\item observe reward from the environment, which is $r^{\star}(A_t)$ plus noise.
\item update the reward model $r$ based on the received reward.
\end{itemize}
\end{itemize}

For training, mini‑batches are drawn uniformly with replacement from $\Dcal_{0}$ and optimized using \texttt{DPO} and \texttt{IPO} for 20k steps. Policy is encoded simply as $\pi_{\psi}(a_{i}) = \text{softmax}(\bm{\psi})_{i}$ for an action $a_{i} \in \Acal$ using a vector $\bm{\psi} \in \Rbb^{K}$, and is optimized for 20k steps using Adam \cite{kingma2014adam} with a learning rate of 0.015 and mini-batch size 12. Reference policy is $\pi_{\text{ref}}$ is chosen to be uniform over the action space, regularization is set at 0.1, and $\epsilon=0.16$ for best results.

\subsection{Ablation study (cont.)}
\label{appendix:ablations} 

\paragraph{Effect of number of arms $K$ and online rounds $T$.} Here, we study how cumulative regret scales with $K$ and $T$. For evaluation, we let $d=3, \lambda=100, \beta=8,$ and $N=25$. Empirical cumulative regret values are averaged over 5 runs with independent seeds. See Figure \ref{fig:kt_ablation} for results.

\begin{figure*}[!ht]
    \centering
    \subfloat[Varying $K$]{\includegraphics[height=0.2\textwidth, width=0.23\textwidth]{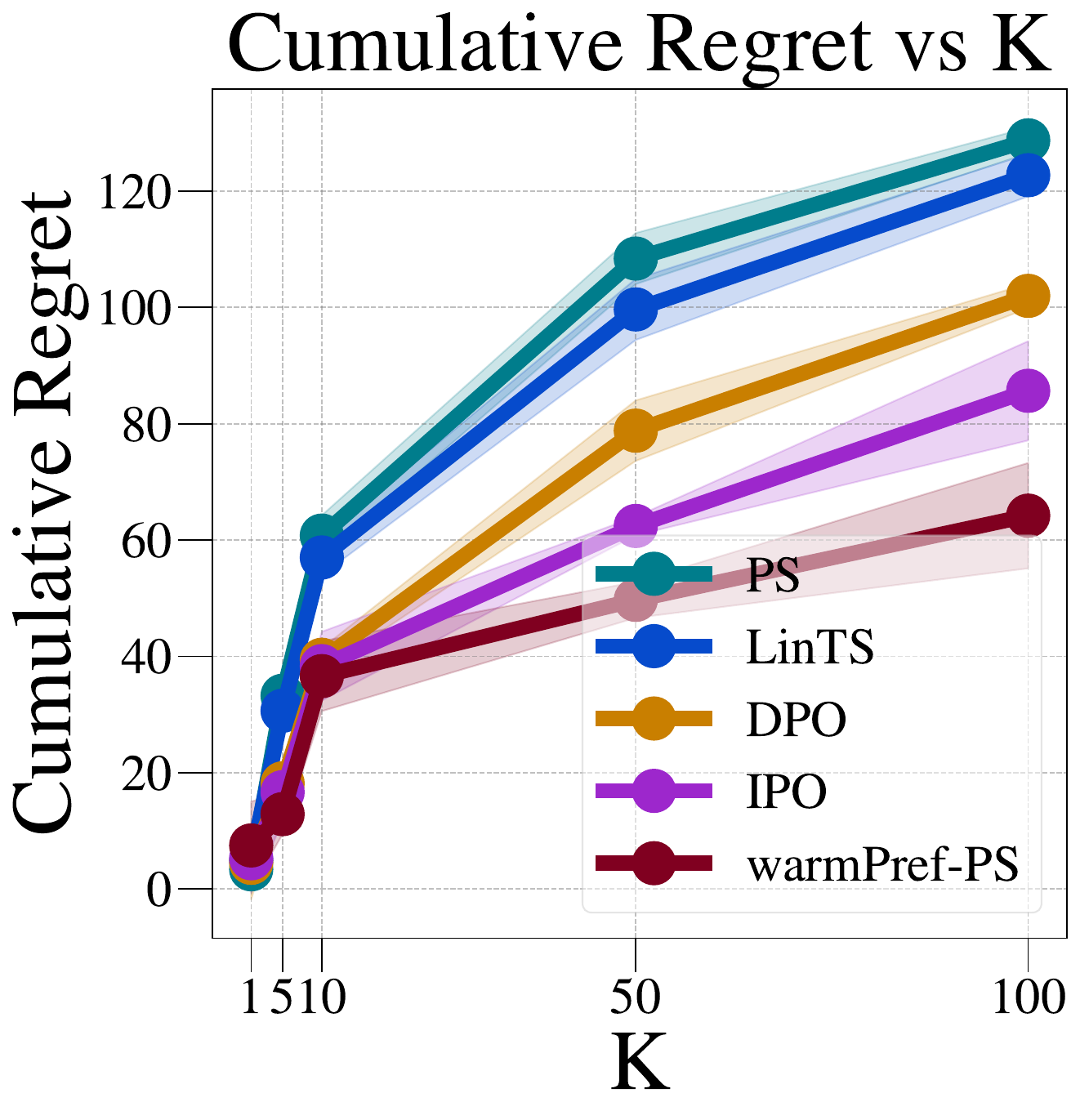}} 
    \subfloat[Varying $T$]{\includegraphics[height=0.2\textwidth, width=0.23\textwidth]{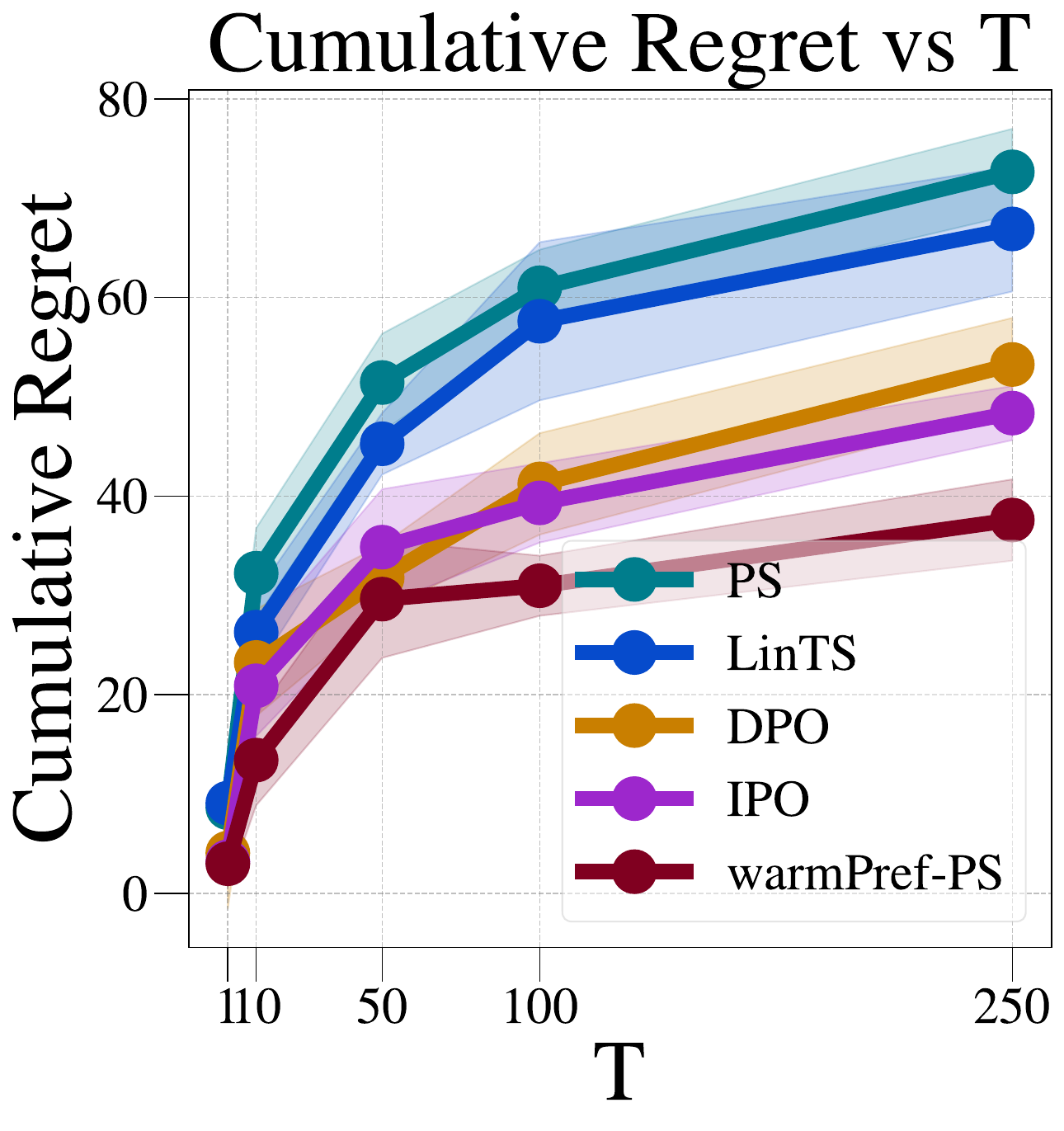}} 
    \caption{Cumulative regret with varying $K$ and $T$.}
    \label{fig:kt_ablation}
\end{figure*}

\paragraph{Effect of Action Space Dynamics.} We next study how the dynamics of the action space affect cumulative regret. Specifically, how (i) the relationship between action pairs measured by their correlation ($\rho$), and (ii) the dimensionality of the environment vector $\theta \in \Rbb^{d}$, affect cumulative regret. Table \ref{table:actionspaceregret} shows that the performance of all these posterior sampling methods degrades as dimensionality of the environment and correlation between action increases. However, $\mathsf{warmPref-PS}$ still outperforms the baselines and enjoys a lesser performance degradation than PS as $d$ and $\rho$ increase.

\begin{table}[ht] 
\centering
\fontsize{8}{10}\selectfont
\addtolength{\tabcolsep}{-0.2em}
\caption{Effect of dimensionality and correlation within the action space on cumulative regret. 
}
\begin{tabular}{cccc}
\hline
   & \cellcolor{TS!35}PS & \cellcolor{LinTS!35}\texttt{LinTS} & \cellcolor{warmTS!35}$\mathsf{warmPref-PS}$ \\
\hline
$d=2, \rho=0.1$ & 58.21 $\pm$ 0.45 &  53.23 $\pm$ 0.64 &\cellcolor{cellg}{\textbf{32.65 $\pm$ 1.78}} \\
$d=2, \rho=0.8$ & 61.36 $\pm$ 1.23 &  56.32 $\pm$ 0.97 & \cellcolor{cellg}{\textbf{33.98 $\pm$ 3.07}} \\
$d=5, \rho=0.1$ & 60.42 $\pm$ 0.82 &  55.71 $\pm$ 0.41 & \cellcolor{cellg}{\textbf{34.12 $\pm$ 3.05}} \\
$d=5, \rho=0.8$ & 64.21 $\pm$ 1.57 & 59.55 $\pm$ 1.35 & \cellcolor{cellg}{\textbf{34.77 $\pm$ 2.94}} \\
\hline
\end{tabular}
\label{table:actionspaceregret}
\end{table}


\end{document}